%% file: main_arxiv.tex
\documentclass[accepted]{uai2026} %

\usepackage[british]{babel}

\usepackage{natbib} %
\usepackage{mathtools} %
\usepackage{booktabs} %
\usepackage{tikz} %

\usepackage{subcaption} %

\usepackage{amssymb}
\input{preamble.tex}

\title{Kernel Integrated $R^2$: A Measure of Dependence}

\author[1]{\href{mailto:s.mirrezaeiroudaki@lse.ac.uk?Subject=KMD}{Pouya Roudaki}{}}
\author[1,2]{Shakeel Gavioli-Akilagun}
\author[3]{Florian Kalinke}
\author[1]{Mona Azadkia}
\author[1]{Zoltán Szabó}
\affil[1]{%
    Department of Statistics\\
    London School of Economics\\
    London, UK
}
\affil[2]{%
    Department of Decision Analytics and Operations\\
    City University of Hong Kong\\
    Hong Kong, China
}
\affil[3]{%
    Chair of Information Systems\\
    Karlsruhe Institute of Technology\\
    Karlsruhe, Germany
}

\begin{document}

\maketitle

\begin{abstract}%
We introduce \emph{kernel integrated $R^2$}, a new measure of statistical dependence that combines the local normalization principle of the recently introduced \emph{integrated $R^2$} with the flexibility of reproducing kernel Hilbert spaces (RKHSs).
The proposed measure extends integrated $R^2$ from scalar responses to responses taking values on general spaces equipped with a characteristic kernel, allowing to measure dependence of multivariate, functional, and structured data, while remaining sensitive to tail behaviour and oscillatory dependence structures.
We establish that (i) this new measure takes values in $[0,1]$, (ii) equals zero if and only if independence holds, and (iii) equals one if and only if the response is almost surely a measurable function of the covariates. Two estimators are proposed: a graph-based method using $K$-nearest neighbours and an RKHS-based method built on conditional mean embeddings. We prove consistency and derive convergence rates for the graph-based estimator, showing its adaptation to intrinsic dimensionality. Numerical experiments on simulated data and a real data experiment in the context of dependency testing for media annotations demonstrate competitive power against state-of-the-art dependence measures, particularly in settings involving non-linear and structured relationships. 
\end{abstract}

\section{Introduction}

Measuring the degree of dependence between two random variables is a long-standing problem in machine learning and statistics, and numerous methods have been proposed over the years; see, for example, the recent surveys by \citet{josse2016measuring, han2021extensions, chatterjee2024survey}. Among the most widely used classical measures of statistical association are Pearson's correlation coefficient, Spearman's $\rho$, and Kendall's $\tau$. These coefficients are highly effective for detecting monotonic relationships, and their asymptotic behaviour is well understood. However, they perform poorly when the underlying association is non-monotonic. 

To overcome this limitation, many alternative measures have been proposed, including the maximal correlation coefficient \citep{hirschfeld1935connection, gebelein1941statistische, renyi1959measures, breiman1985estimating}, methods based on joint cumulative distribution functions and ranks \citep{hoeffding1948non,  blum1961distribution, yanagimoto1970measures, puri1971nonparametric, rosenblatt1975quadratic, csorgo1985testing, romano1988bootstrap, bergsma2014consistent, nandy2016large, weihs2016efficient, han2017distribution, wang2017generalized,  gamboa2018sensitivity, weihs2018symmetric, drton2018high, deb2019multivariate, zhou2025association}, entropy- and mutual information-based measures  \citep{linfoot1957informational, kraskov2004estimating,pal10estimation,poczos11estimation,reshef2011detecting, kandasamy15nonparametric}, copula-based coefficients \citep{sklar1959fonctions, schweizer1981nonparametric, krishner09ica,poczos10rego,dette2013copula, lopez2013randomized, kong2019composite, zhang2019bet, griessenberger2022multivariate}, measures based on pairwise distances \citep{friedman1983graph, szekely07measuring, szekely09brownian, heller2013consistent, lyons13distance, pan2020ball}, and kernel-based methods \citep{gretton05measuring, gretton08kernel, poczos12copula, sen2014testing, pfister18kernel, zhang18large}. Notice that, while developed independently in the machine learning and statistics communities, Hilbert-Schmidt independence criterion (\citealt{gretton05measuring,gretton08kernel}, based on kernels) and distance covariance (\citealt{szekely07measuring,szekely09brownian,lyons13distance}, based on metrics) are now known to be equivalent \citep{sejdinovic13equivalence}. 

More recently, \citet{chatterjee21newcoefficient} introduced a new correlation coefficient (i) that is as simple to compute as classical measures, (ii) yet serves as a consistent estimator of a dependence measure $\xi(X,Y)$, (iii) it takes values in $[0,1]$, (iv) it equals $0$ if and only if $X$ and $Y$ are independent, and (iv) it equals $1$ if and only if one variable is a measurable function of the other. While $\xi(X,Y)$ was already known as the limit of a copula-based estimator when both $Y$ and $X$ are continuous random variables~\citep{dette2013copula}, the simplicity, computational efficiency, and interpretability of Chatterjee’s correlation have generated substantial interest, leading to a rapidly growing literature on its theoretical properties and extensions to more complex settings \citep{cao2020correlations, deb20measuringassociationtopologicalspaces, azadkia2021simple, griessenberger2022multivariate, shi2022power, bickel2022measures, gamboa2022global, huang22kernel, lin2023boosting, zhang2023asymptotic, auddy2024exact, lin2024failure, fuchs2024quantifying, han2024azadkia, shi2024azadkia, strothmann2024rearranged,  bucher2024lack, tran2024rank, kroll2024asymptotic, ansari2022simple, dette2025simple, zhang2025relationships, yang2025coverage, huang2025multivariate}.

In particular, \citet{deb20measuringassociationtopologicalspaces} extended Chatterjee’s correlation to allow handling random variables $X$ and $Y$ taking values in topological spaces under mild conditions.\footnote{More precisely, $Y$ must take values in a Hausdorff space enriched with a characteristic kernel and the regular conditional distribution of $Y$ given $X$ must exist. The latter can be guaranteed if $Y$ takes values in a Polish space. Additional assumptions depend on the respective estimator; we do not recall these here and refer to their article for more details.} They employ the kernel mean embedding \citep{berlinet04reproducing,smola07hilbert} and its conditional variant \citep{fukumizu07kernel,song09hilbert,klebanov20conditional,park20conditionalmean}, which permit mapping (conditional) probability measures into a reproducing kernel Hilbert space (RKHS; \citealt{aronszajn50theory,steinwart08support,paulsen16introduction}) by using a symmetric positive definite function, the kernel function. If the mapping is injective, the kernel is called characteristic \citep{fukumizu07kernel,sriperumbudur10hilbert} and the RKHS distance of mean embeddings induces a metric on the space of probability measures, which underpins the well-known maximum mean discrepancy \citep{smola07hilbert,gretton12kernel}; it is also known as Hilbert-Schmidt independence criterion \citep{gretton05measuring,quadrianto09kernelized,pfister18kernel} if applied to measuring the distance of a joint distribution to the product of its marginals.
In this sense, the measure proposed by \citet{deb20measuringassociationtopologicalspaces} can be interpreted as the (normalized) average (w.r.t.\ $X$) MMD distance of the distribution of $Y$ and the distribution of $Y$ given $X$; the computational tractability of RKHS methods allows estimating this quantity. Besides rigorously analysing their proposed population quantity and different families of estimators, \citet{deb20measuringassociationtopologicalspaces} demonstrated empirically  that their extension can exhibit greater power for detecting dependence than the original scalar-based coefficient. Additionally, as, for example, kernels for strings \citep{watkins99dynamic,lodhi02text} or more generally for sequences \citep{kiraly19kernel}, sets \citep{haussler99convolution, gartner02multi}, rankings \citep{jiao16kendall}, fuzzy domains \citep{guevara17cross} and graphs \citep{borgwardt20graph} are known, their approach is broadly applicable.

Following the development of Chatterjee’s correlation, \citet{azadkia2025} recently introduced a new dependence measure, %
denoted by $\nu(Y,X)$. This measure retains all the desirable properties of Chatterjee’s correlation while exhibiting enhanced sensitivity to dependence structures that manifest in the tails of the distribution or display oscillatory behaviour. However, the structural form of the measure and its associated estimator restrict its applicability to settings in which $Y\in\R$ and $X\in\R^d$.

Motivated by \citet{deb20measuringassociationtopologicalspaces}, we leverage the flexibility of RKHSs to introduce a dependence measure that preserves the core structural features of $\nu(Y,X)$, while extending the applicability beyond real-valued responses and (finite-dimensional)  Euclidean covariates. This extension presents non-trivial technical challenges. Most notably, unlike \citet{deb20measuringassociationtopologicalspaces}, our construction employs a local normalization rather than a global one, which necessitates a more delicate proof strategy. In particular, we make the following \textbf{contributions}. 

\begin{enumerate}
    \item We introduce a kernel-based generalization of $\nu(Y,X)$,  extending it beyond the setting of $X \in \R^d$ and $Y\in\R$. In particular, our population quantity is well-defined if $X$ takes values in a topological space and $Y$ takes values in a Polish space equipped with a continuous characteristic kernel. 
    \item We prove that our proposed generalization has the properties expected of a dependence measure, that is, the quantity takes values between $0$ and $1$, is $0$ if and only if (iff.) $X$ and $Y$ are independent, and is $1$ iff.\ $Y$ is almost surely (a.s.) a measurable function of $X$.
    \item Under mild additional assumptions, we present a graph-based estimator using nearest neighbours, and an RKHS-based estimator of our kernel-based quantity. We provide consistency guarantees and convergence rates for the graph-based estimator.
    \item Experiments on synthetic and real-world datasets show that independence tests using our estimators perform competitively w.r.t.\ the state-of-the-art, especially when considering non-linear and structured associations.
\end{enumerate}

The remainder of the paper has the following \textbf{outline}. In Section~\ref{sec:notation}, we introduce the main notations. Section~\ref{sec:RelatedDependenceMeasures} reviews some closely-related measures of dependence. In Section~\ref{sec:KernelizedIntegratedR}, we introduce our general measure of dependence, termed ``kernel integrated $R^2$''. Section~\ref{sec:estimator} presents two estimation procedures, and Section~\ref{sec:Consistencyandrates} establishes theoretical guarantees, including consistency and convergence rates. The empirical performance of the proposed method is demonstrated in Section~\ref{sec:numerical}. All proofs are collected in the appendix.

\section{Notations}\label{sec:notation}

Next we introduce our notations used throughout the main body of the article: $[n]$, $\mathcal{O}$, $\mathbbm{1}_A$, $\mathbf{1}_n$, $\mathbf I_n$, $\mathbf A^\top$, $\operatorname{Tr}(\mathbf A)$, $\mathbf A^{-1}$, $\circ$, $\mathcal M_1^+(\mathcal Z)$, $\operatorname{supp}(\mathbb{P})$, $\delta_x$, $\mathcal{O}_\pp$, $\mathbb E_\mathbb{P}[\cdot]$, $\mathbb V_\mathbb{P}(\cdot)$, $\mathbb E_Z[\cdot]$, $\mathbb V_Z(\cdot)$, $\operatorname{Cov}$, $\mathbb{P}_X$, $\mathbb{P}_Y$, $\mathbb{P}_{XY}$, $\mathbb{P}_X\otimes \mathbb{P}_Y$, $\mathbb P_{Y\mid X}$, $\mathcal H_k$, $k(\cdot, z)$, $\mu_k$, $\operatorname{MMD}_k$, $ \mathcal H_\mathcal{Y}$, $\mathcal{H_{\mathcal X}}$.

\paragraph{General conventions.} For a positive integer $n \in \mathbb{N}\coloneq \{1,2,\ldots\}$, $[n] \coloneq \{1,\dots,n\}$.
For positive sequences $(a_n)_{n\in \N}$ and $(b_n)_{n\in \N}$, we write $a_n = \mathcal{O}(b_n)$ if there exist constants $C > 0$ and $N \in \mathbb{N}$ such that $a_n \le C\, b_n$ for all $n \ge N$. We denote by $\mathbbm{1}_A$ the indicator of a set $A$: $\mathbbm{1}_A(x)=1$ if $x \in A$; $\mathbbm{1}_A(x)=0$ otherwise. 
The $n$-dimensional vector of ones is denoted by $\mathbf{1}_n$. The identity matrix is $\mathbf I_n\in \R^{n\times n}$. For a matrix $\mathbf A \in \mathbb{R}^{n_1\times n_2}$, its transpose is written as $\mathbf A^\top \in \mathbb{R}^{n_2\times n_1}$; the trace of a square matrix $\mathbf A \in \mathbb R^{n\times n}$ is denoted by $\operatorname{Tr}(\mathbf A)$; for a non-singular matrix $\mathbf A \in \mathbb R^{n\times n}$, its inverse is denoted by $\mathbf A^{-1}\in \mathbb{R}^{n\times n}$. For two matrices $\mathbf A,\mathbf B \in\R^{n_1\times n_2}$, we write their Hadamard product as $\mathbf A \circ \mathbf B =\left[A_{i,j}B_{i,j}\right]_{i,j=1}^{n_1,n_2} \in \R^{n_1\times n_2}$. 

\paragraph{Probability measures and conditioning.}
Let $(\mathcal Z,\tau_{\mathcal Z})$ be a topological space and $\mathcal B(\tau_{\mathcal Z})$ its
Borel $\sigma$-algebra. We write $\mathcal M_1^+(\mathcal Z)$ for the set of Borel probability measures
on the measurable space $(\mathcal Z,\mathcal B(\tau_{\mathcal Z}))$. The support of $\mathbb{P}\in \mathcal M_1^+(\mathcal Z)$ is denoted by $\operatorname{supp}(\mathbb{P})$; it is the set of points 
$z\in\mathcal Z$ for which every open neighbourhood of $z$ has positive $\mathbb{P}$ measure. We write $\delta_z$ for the Dirac delta distribution at $z\in\mathcal Z$. A distribution $\mathbb{P} \in \mathcal M_1^+(\mathcal Z)$ is called degenerate iff.\ $\mathbb{P} = \delta_z$ for some $z\in \mathcal Z$. For a sequence of random variables $X_n$ and sequence $(a_n)_{n\in \N}$, $X_n = \mathcal{O}_\pp(a_n)$ means that $X_n/a_n$ is stochastically bounded, that is for any $\varepsilon > 0$ there exists a finite $M > 0$ and a finite $N > 0$ such that $\pp(\abs{X_n/a_n} > M) < \varepsilon$ for all $n > N$. 
Let $(\mathcal H, \langle \cdot,\cdot\rangle_{\mathcal H})$ be a Hilbert space and $f:\mathcal Z \to \mathcal H$ measurable. If
$
\int_{\mathcal Z}\|f(z)\|_{\mathcal H}\, \mathrm{d}\mathbb{P}(z) < \infty,
$ the expectation of $f$ w.r.t.\ $\mathbb{P}$ is defined as
$
\mathbb E_\mathbb{P}[f] \coloneq \int_{\mathcal Z} f(z)\, \mathrm{d}\mathbb{P}(z)\in\mathcal H,
$
where the integral is meant in Bochner's sense.
If additionally
$
\int_{\mathcal Z}\|f(z)\|_{\mathcal H}^2\, \mathrm{d}\mathbb{P}(z) < \infty,
$
then the variance of $f$ w.r.t.\ $\mathbb{P}$ is
$
\mathbb V_\mathbb{P}(f) \coloneq \int_{\mathcal Z}\|f(z)-\mathbb E_\mathbb{P}[f]\|_{\mathcal H}^2 \,\mathrm{d}\mathbb{P}(z).
$
When $\mathbb{P}$ is the law of a random variable $Z$, we write $\mathbb E_Z[\cdot]$ and $\mathbb V_Z(\cdot)$ for $\mathbb E_\mathbb{P}[\cdot]$ and $\mathbb V_\mathbb{P}(\cdot)$, respectively. The covariance of two real-valued random variables $Z_1$ and $Z_2$ with joint law $\mathbb{P}$ and marginal laws $\mathbb{P}_1$ and $\mathbb{P}_2$, respectively, is $\operatorname{Cov}(Z_1,Z_2) = \mathbb{E}_{Z_1Z_2}\!\left[(Z_1-\mathbb{E}_{Z_1}[Z_1])(Z_2-\mathbb{E}_{Z_2}[Z_2]\right]$.
Let $(\mathcal X,\mathcal B(\tau_{\mathcal X}))$ and $(\mathcal Y,\mathcal B(\tau_{\mathcal Y}))$ be measurable
spaces and $(X,Y)$ a pair of random variables taking values in $\mathcal X\times\mathcal Y$.
We denote their marginal and joint laws by $\mathbb{P}_X\in\mathcal M_1^+(\mathcal X)$,
$\mathbb{P}_Y\in\mathcal M_1^+(\mathcal Y)$, and $\mathbb{P}_{XY}\in\mathcal M_1^+(\mathcal X\times\mathcal Y)$, respectively.
Their product distribution is denoted by $\mathbb{P}_X\otimes \mathbb{P}_Y$. If $\mathcal Y$ is a Polish space, that is, 
a complete separable metrizable topological space, a (regular) conditional distribution of $Y$ given $X$ exists, which we write as $\mathbb P_{Y\mid X}$.

\paragraph{Kernels and RKHS.} \label{paragraph: kernels and RKHS} Let $\mathcal H_k$ be the RKHS on $\mathcal Z$ with (reproducing) kernel $k : \mathcal Z \times \mathcal Z \to \R$; it is the Hilbert space of functions $f : \mathcal Z \to \R$ such that $k(\cdot, z) \in \mathcal H_k$ and $\langle f,k(\cdot,z)\rangle_{\mathcal H_k} = f(z)$ for all $z \in \mathcal Z$ and $f\in \mathcal H_k$, where the canonical feature map $k(\cdot,z)$ stands for $ z' \mapsto k(z',z) \in \mathbb{R}$ for fixed $z$ and any $z' \in \mathcal Z$. Throughout this manuscript, we assume all kernels to be Borel measurable and bounded (for a kernel $k : \mathcal Z \times \mathcal Z \to \R$, the latter property is meant as $\sup_{z,z' \in \mathcal Z}k(z,z') < \infty$). 
For $\mathbb{P}\in\mathcal M_1^+(\mathcal Z)$, the kernel mean embedding of $\mathbb P$ w.r.t.\ $k$ is
$
\mu_k(\mathbb{P}) := \int_{\mathcal Z} k(\cdot,z)\, \mathrm{d}\mathbb{P}(z)\in\mathcal H_k,
$
where the integral is meant in Bochner's sense; the assumed boundedness of $k$ ensures its existence. %
The maximum mean discrepancy of $\mathbb{P},\mathbb{Q}\in\mathcal{M}_1^+(\mathcal Z)$ w.r.t.\ $k$ is $\operatorname{MMD}_k(\mathbb{P},\mathbb{Q}) = \|\mu_k(\mathbb{P})- \mu_k(\mathbb{Q})\|_{\mathcal H_k}$.
A kernel $k$ is called characteristic if the map $\mathbb{P} \mapsto \mu_k(\mathbb{P})$ is injective on
$\mathcal M_1^+(\mathcal Z)$. In this case, $\operatorname{MMD}_k$ induces a metric on $\mathcal M_1^+(\mathcal Z)$.
In the main text, we work with the RKHS %
$\mathcal H_\mathcal{Y} \coloneq \mathcal H_{k_\mathcal{Y}}$ induced by the kernel
$k_\mathcal{Y}:\mathcal Y\times\mathcal Y\to\mathbb R$; for the RKHS-based estimator (Section~\ref{subsection: ridge estimation}), we additionally use the RKHS $\mathcal H_{\mathcal X}\coloneq \mathcal H_{k_\mathcal{X}}$ induced by the kernel $k_{\mathcal X} : \mathcal X \times \mathcal X \to \R$.

\section{Related Dependence Measures}\label{sec:RelatedDependenceMeasures}

For a real-valued random variable $Y\in\rr$ and a random vector $X\in\rr^d$ for $d\geq 1$, \cite{azadkia2025} introduced $\nu(Y,X)$ to quantify the extent of dependence of $Y$ on $X$. If $Y$ admits a continuous density\footnote{The continuous density ensures that \eqref{eq:mona-pouya-coefficient} is well-defined. For the exact definition of $\nu$ in the general case, see~\citet{azadkia2025}.}, $\nu(Y,X)$ can be written as
\begin{align}\label{eq:mona-pouya-coefficient}
    \nu(Y,X)
    =
    \int_\rr
    \frac{\mathbb{V}_X\bigl(\ee_{Y\mid X}[\bone_{\{Y>t\}}]\bigr)}
         {\mathbb{V}_Y\bigl(\bone_{\{Y>t\}}\bigr)}
    \, \mathrm{d}\pp_Y(t),
\end{align}
which is closely related to Chatterjee’s correlation coefficient \citep{chatterjee21newcoefficient},
\begin{align*}
    \xi(X, Y)
    =
    \int_\rr
    \frac{\mathbb{V}_X\bigl(\ee_{Y\mid X}[\bone_{\{Y>t\}}]\bigr)}
         {\int_\rr \mathbb{V}_Y\bigl(\bone_{\{Y>u\}}\bigr)\, \mathrm{d}\pp_Y(u)}
    \, \mathrm{d}\pp_Y(t).
\end{align*}
While introduced for $X,Y \in \mathbb R$, $\xi$ can be extended to allow handling $X\in\mathbb R^d$~\citep{azadkia2021simple}. \footnote{Note that $\xi(X, Y)$ measures the extend of dependence of $Y$ on $X$. Although similar measures are typically written with arguments ordered as $(Y, X)$, we retain the ordering $(X, Y)$ for $\xi$ to remain consistent with~\cite{chatterjee21newcoefficient}, where $\xi$ was originally introduced.}

Although the difference between $\nu$ and $\xi$ may appear marginal, in practice $\nu$ is often more powerful for detecting dependence. The distinction lies in the normalization of $\mathbb{V}_X(\ee_{Y\mid X}[\bone_{\{Y>t\}}])$. In $\nu(Y,X)$, this quantity is normalized pointwise by $\mathbb{V}_Y(\bone_{\{Y>t\}})$, so the conditional variation of $\bone_{\{Y>t\}}$ given $X$ is compared directly to its marginal variation, rather than to an average over all thresholds $t \in \mathbb{R}$. As a consequence, even when $\mathbb{V}_Y(\bone_{\{Y>t\}})$ is small, the dependence of $\bone_{\{Y>t\}}$ on $X$ is not masked by averaging over values of $t$ with larger marginal variability. 

In a different line of work, \citet{deb20measuringassociationtopologicalspaces} proposed $\eta_{k_\mathcal{Y}}$, a general measure of association inspired by $\xi(X, Y)$, by leveraging RKHS methods. Let $Y$ and $Y^\prime$ be conditionally i.i.d.\ given $X$, and let $Y$, $Y_1$, and $Y_2$ be marginally i.i.d.\  Then the measure $\eta_{k_\mathcal{Y}}$ is defined as
\begin{align}\label{eq:etaDef}
    \eta_{k_\mathcal{Y}}&(Y,X) \coloneq
    1
    \!-\!
    \frac{
        \ee_X\!\Big[\ee_{Y\mid X}\!\big[\bigl\|k_\mathcal{Y}(\cdot,Y)-k_\mathcal{Y}(\cdot,Y^\prime)\bigr\|_{\mathcal{H}_\mathcal{Y}}^2\big]\Big]
    }{
        \ee_Y\!\big[\bigl\|k_\mathcal{Y}(\cdot,Y_1)-k_\mathcal{Y}(\cdot,Y_2)\bigr\|_{\mathcal{H}_\mathcal{Y}}^2\big]
    } \\
    &= \int_{\mathcal{X}} \frac{ \operatorname{MMD}^2_{k_\mathcal{Y}} \left ( \mathbb{P}_{Y \mid X = x}, \mathbb{P}_Y \right)}{\int_{\mathcal{Y}} \left \| k (\cdot, y) - \mathbb{E} \left [ k (\cdot, Y) \right ] \right \|_{\mathcal{H}_{\mathcal{Y}}}^2 \mathrm{d} \mathbb{P}_Y (y)} \mathrm{d} \mathbb{P}_X (x). \nonumber
\end{align}

An advantage of $\eta_{k_\mathcal{Y}}$ over $\xi$ is that, by leveraging the RKHS framework, it naturally extends to multivariate responses $Y$ and general data types such as graphs, manifolds, and functional data. Moreover, an appropriate choice of kernel allows domain knowledge about similarity to be incorporated into the measure. Note that for $Y, X\in\rr$, and the Brownian kernel $k_{\mathcal{Y}}(y_1, y_2) = \abs{y_1} + \abs{y_2} - \abs{y_1 - y_2}$, we get $\eta_{k_\mathcal{Y}}(Y, X) = \xi(X, Y)$, hence $\eta_{k_\mathcal{Y}}$ can be viewed as a generalization of $\xi$.

Motivated by the complementary strengths of $\nu$ and $\eta_{k_\mathcal{Y}}$, we combine the power of both approaches by introducing a kernelized version of $\nu$, with the goal of further enhancing its ability to detect dependence and to broaden its applicability.

In doing so, we tackle two key challenges. First, the denominator of the integrand in $\nu(Y, X)$ is $\mathbb{V}_Y(\bone_{\{Y > t\}})$, which can be small in the tails and therefore requires careful control. In contrast, $\eta_{k_\mathcal{Y}}(Y, X)$ involves a single global normalization term. When extending $\nu(Y, X)$ to an RKHS-based framework, one must therefore ensure uniform control of the corresponding denominator.

Second, note that for i.i.d.\ random variables $Z$ and $Z^\prime$, we have 
\begin{align*}
    \mathbb{V}_Z(Z) = \frac{1}{2}\ee_Z\left[\left(Z - Z^\prime\right)^2\right].
\end{align*}
Therefore the numerator and denominator in~\eqref{eq:etaDef} are related to the conditional variance of $Y$ given $X$ and the variance of $Y$, respectively. Hence, \eqref{eq:etaDef} shows that $\eta_{k_\mathcal{Y}}(Y, X)$ can be interpreted as the ratio of two variances in a Hilbert space.  By contrast, constructing a kernel analogue of $\nu$ requires considering the variance of conditional objects across the range of values of the response variable, which does not admit an immediate representation in an RKHS.

Finally, observe that the integrand in~\eqref{eq:mona-pouya-coefficient} can be interpreted as the $R^2$ (coefficient of determination) from the linear regression of $\bone_{\{Y > t\}}$ on $X$, 
\begin{align*}
    R_t^2 = \frac{\mathbb{V}_X(\ee_{Y\mid X}[\bone_{\{Y > t\}}])}{\mathbb{V}_{Y}(\bone_{\{Y > t\}})} = 1 - \frac{\ee_X[\mathbb{V}_{Y\mid X}(\bone_{\{Y > t\}})]}{\mathbb{V}_{Y}(\bone_{\{Y > t\}})}.
\end{align*}
Consequently, $\nu(Y,X) = \int_\rr  R_t^2 \mathrm{d}\pp_Y(t)$ can be viewed as an integrated $R^2$ over the thresholded responses $\bone_{\{Y > t\}}$. Motivated by this perspective, we introduce in the next section a kernel-based analogue, which we refer to as the kernel integrated $R^2$.

\section{Kernel Integrated \texorpdfstring{$R^2$}{R\^{}2}}\label{sec:KernelizedIntegratedR}

In this section, we introduce our measure of dependence, the kernel integrated $R^2$. We first state the set of assumptions that we require for our measure to be well-defined.

\begin{assumption} \label{assumption:general}
(i) $\mathcal{X}$ is a topological space and $\mathcal{Y}$ is a Polish space.
(ii) %
$\mathrm{supp}(\mathbb{P}_Y) = \mathcal{Y}$ and $\mathbb{P}_Y$ is non-degenerate.
(iii) %
$k_{\mathcal{Y}}$ is continuous, characteristic, and there exists no $y\in\mathcal Y$ such that $k_{\mathcal{Y}}(\cdot,y)$ is a constant function.\footnote{This condition holds, for instance, if $\mathcal Y =\R^d$ and $k_{\mathcal Y}$ is the Gaussian kernel.}
\end{assumption}

The following remark elaborates our assumptions.

\begin{remark}\label{remark:main-assumptions}~
\begin{enumerate}
    
    \item \label{remark:item:existence-regular-cond-prob} The assumption that $\mathcal{Y}$ is Polish ensures the existence of regular conditional probabilities. \citep[Theorem~10.2.2]{dudley04real}.

    \item \label{remark:item:separation} As $k_\mathcal{Y}$ is continuous and $\operatorname{supp}(\mathbb{P}_Y) = \mathcal{Y}$, for any $f \in \mathcal{H}_\mathcal{Y}$, we have $f = 0 \, \, \mathbb{P}_{Y}\text{-a.e.}$ iff.\ $f \equiv 0$ on $\mathcal{Y}$ \citep[Assumption~2.1(f) and footnote~7]{klebanov20conditional}. %
    
    \item The characteristic property implies that $k_{\mathcal Y}$ is point-separating \citep[p.~5]{bonnier23cumulants}, that is, $y\mapsto k_{\mathcal Y}(\cdot,y) \in \mathcal H_{\mathcal Y}$ is injective for $y\in\mathcal Y$.\footnote{The point-separating property follows from the characteristic-ness as $k_{\mathcal Y}(\cdot,y) = \mu_{k_{\mathcal Y}}(\delta_{y}) \overset{\text{(char.)}}{\neq} \mu_{k_{\mathcal Y}}(\delta_{y'}) = k_{\mathcal Y}(\cdot, y')$  for any distinct $y,y'\in\mathcal Y$.}
\end{enumerate}
\end{remark}

Together, these properties pave the way to ensuring that the following definition of our quantity of main interest satisfies various natural requirements of a dependence measure, as we elaborate it in Theorem~\ref{theorem: rho properties}.

\begin{definition}[Kernel integrated $R^2$] \label{def:KIR}
    Under Assumption~\ref{assumption:general}, let
\begin{align}
D(Y,X) &\coloneq D(Y,X; k_\mathcal{Y}) \nonumber \\
&\coloneq 1 - \int_\mathcal{Y} \frac{\mathbb E_X\! \left  [ \mathbb{V}_{Y\mid X} \!\left[k_\mathcal{Y}(Y,y)\right] \right ]}{\mathbb{V}_Y \!\left[k_\mathcal{Y}(Y,y)\right]}\mathrm d  \mathbb P_Y(y).
\label{eq:kmd-population-version1}
\end{align}
\end{definition}

\newpage
\begin{remark}~
\begin{enumerate}
    \item When ignoring that $k_{\mathcal Y}$ must be a kernel, and when $\mathcal X = \mathbb R^d$, $\mathcal Y = \mathbb R$, then choosing $k_{\mathcal{Y}}(u,y) = \mathbbm{1}_{\{u > y\}}$, we have\footnote{The function $k_\mathcal{Y}$ is not symmetric, hence it is not a kernel.}
    \begin{equation*}
        D(Y,X) = 1- \int_{\mathbb R} \frac{\mathbb E_X\!\left[\mathbb V_{Y\mid X}[\mathbbm{1}_{\{Y>y\}}]\right]}{\mathbb V_{Y}[\mathbbm{1}_{\{Y>y\}}]} \mathrm d \mathbb P_Y(y), 
    \end{equation*}
    which is formally equivalent to $\nu(Y, X)$. In this sense, $D$ can be thought of as a ``kernel'' version of $\nu$.

    \item The characteristic property of $k_\mathcal{Y}$ and $\operatorname{supp}(\mathbb{P}_Y) = \mathcal{Y}$  ensure that the denominator $\mathbb{V}_Y \!\left[k_\mathcal{Y}(Y,y)\right]$ is non-zero for all $y\in\mathcal Y$; we prove this claim as part of the following Theorem~\ref{theorem: rho properties}.
    
    \end{enumerate}
\end{remark}

Our first theorem shows that $D(Y, X)$ is well-defined and satisfies the standard properties expected of a dependence measure.

\begin{theorem}
\label{theorem: rho properties}
Under Assumption~\ref{assumption:general}, $D(Y, X)$ is well-defined and 
 \begin{enumerate}[label=(\roman*)]
     \item $D(Y, X) \in [0,1]$,
     \item $D(Y, X) = 0$ iff.\ $Y$ and $X$ are independent,  and
     \item $D(Y, X) = 1$ iff.\ there exists a Borel measurable function $f:\mx \to \my$ such that $Y = f(X)$ holds $\mathbb{P}_{XY}$-a.s. 
 \end{enumerate}
\end{theorem}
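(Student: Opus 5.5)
The plan is to work pointwise in $y$ and then integrate. For fixed $y\in\mathcal Y$, write $g_y(u)\coloneq k_{\mathcal Y}(u,y)=k_{\mathcal Y}(\cdot,y)(u)$. This is a bounded, continuous element of $\mathcal H_{\mathcal Y}$.

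\textbf{Well-definedness.} We show $\mathbb V_Y[g_y(Y)]>0$ for every $y$. If it vanished, then $g_y$ would be $\mathbb P_Y$-a.e.\ equal to a constant $c$. Since $g_y$ is continuous and $\operatorname{supp}(\mathbb P_Y)=\mathcal Y$, the set $\{g_y\neq c\}$ is open and $\mathbb P_Y$-null, hence empty. So $k_{\mathcal Y}(\cdot,y)\equiv c$, contradicting Assumption~\ref{assumption:general}(iii). Measurability of the integrand in $y$ follows from joint measurability of $k_{\mathcal Y}$ and regularity of $\mathbb P_{Y\mid X}$ (Remark~\ref{remark:main-assumptions}). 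The numerator is bounded by $\sup k_{\mathcal Y}^2$.

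\textbf{Property (i).} Apply the law of total variance pointwise:
\[
\mathbb V_Y[g_y(Y)]=\mathbb E_X\bigl[\mathbb V_{Y\mid X}[g_y(Y)]\bigr]+\mathbb V_X\bigl(\mathbb E_{Y\mid X}[g_y(Y)]\bigr).
\]
This shows that the ratio $r(y)$ of conditional to total variance lies in $[0,1]$. Hence the integral lies in $[0,1]$, so $D$ is well-defined and $D\in[0,1]$.

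\textbf{Property (ii).} Note that $D=0$ iff $r(y)=1$ for $\mathbb P_Y$-a.e.\ $y$, i.e.\ iff $\mathbb V_X(\mathbb E_{Y\mid X}[g_y(Y)])=0$ for a.e.\ $y$.

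The direction independence $\Rightarrow D=0$ is immediate.

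For the converse, the condition says that $h(x,y)\coloneq \mu_{k_{\mathcal Y}}(\mathbb P_{Y\mid X=x})(y)-\mu_{k_{\mathcal Y}}(\mathbb P_Y)(y)$ vanishes for $\mathbb P_X\otimes\mathbb P_Y$-a.e.\ $(x,y)$. By Fubini, for $\mathbb P_X$-a.e.\ $x$, the RKHS function $f_x\coloneq\mu(\mathbb P_{Y\mid X=x})-\mu(\mathbb P_Y)$ is zero $\mathbb P_Y$-a.e. Remark~\ref{remark:main-assumptions}(2) then gives $f_x\equiv0$. By the characteristic property, $\mathbb P_{Y\mid X=x}=\mathbb P_Y$ for a.e.\ $x$, which is independence.

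\textbf{Property (iii).} Note that $D=1$ iff $\mathbb E_X[\mathbb V_{Y\mid X}[g_y(Y)]]=0$ for a.e.\ $y$.

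If $Y=f(X)$ a.s., then $\mathbb P_{Y\mid X=x}=\delta_{f(x)}$ for a.e.\ $x$, so the conditional variances vanish.

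For the converse, Fubini gives, for a.e.\ $x$, that $u\mapsto k_{\mathcal Y}(u,y)$ is $\mathbb P_{Y\mid X=x}$-a.s.\ constant for $\mathbb P_Y$-a.e.\ $y$. This is the step I expect to be the main obstacle: upgrading from a.e.\ $y$ to a conclusion about $\mathbb P_{Y\mid X=x}$ itself. I would use the Hilbert-space identity
\[
\tfrac12\,\mathbb E\bigl\|k_{\mathcal Y}(\cdot,Y)-k_{\mathcal Y}(\cdot,Y')\bigr\|^2=\mathbb E\,\|k_{\mathcal Y}(\cdot,Y)-m_x\|^2,
\]
where $Y,Y'$ are conditionally i.i.d.\ given $X=x$ and $m_x$ is the conditional mean embedding. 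Set $\phi_x(y)\coloneq\mathbb V_{Y\mid X=x}[g_y(Y)]=\mathbb E\bigl[\langle k_{\mathcal Y}(\cdot,Y)-m_x,k_{\mathcal Y}(\cdot,y)\rangle^2\bigr]$. This is continuous in $y$ by continuity of $k_{\mathcal Y}$ and dominated convergence. Since $\phi_x$ is continuous, nonnegative, and zero $\mathbb P_Y$-a.e., full support gives $\phi_x\equiv0$ on $\mathcal Y$. Thus for every $y$, $\langle k_{\mathcal Y}(\cdot,Y)-m_x,k_{\mathcal Y}(\cdot,y)\rangle=0$ a.s.

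To handle the uncountably many $y$, use a countable dense subset of $\mathcal Y$ (Polish). By continuity of the evaluation in $y$, the function $k_{\mathcal Y}(\cdot,Y)-m_x$ vanishes at every point of $\mathcal Y$, almost surely. Hence $k_{\mathcal Y}(\cdot,Y)=m_x$ a.s. By point separation (Remark~\ref{remark:main-assumptions}(3)), $Y$ is a.s.\ equal to a single point $f(x)$, so $\mathbb P_{Y\mid X=x}=\delta_{f(x)}$.

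It remains to show that $f$ is measurable. Write $f(x)$ as the unique $y$ with $\mathbb P_{Y\mid X=x}(\{y\})=1$; equivalently, $f(x)=\lim_n$ of $\mathbb P_{Y\mid X=x}$-averages of a countable family of coordinate functions after a Borel embedding of $\mathcal Y$. Since $x\mapsto\mathbb P_{Y\mid X=x}(B)$ is measurable, $f$ is measurable after modification on a null set, and then $Y=f(X)$ holds $\mathbb P_{XY}$-a.s.
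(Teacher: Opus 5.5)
Your proof is correct. Its skeleton matches the paper's: the law of total variance for (i); conditional mean embeddings, the separation property in Remark~\ref{remark:main-assumptions} and characteristicness for (ii); Tonelli plus point separation for (iii). The key step in the converse of (iii), however, is handled differently.

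The paper argues by contradiction. It picks two distinct values $y_1\neq y_2$ from sets of positive conditional probability and asserts $k_{\mathcal Y}(y_1,y)=c_x(y)=k_{\mathcal Y}(y_2,y)$ for $\mathbb P_Y$-a.e.\ $y$. This tacitly exchanges ``for a.e.\ $y$, $\mathbb P_{Y\mid X=x}$-a.s.'' with ``for a.e.\ value of $Y$, for a.e.\ $y$''. The exceptional null sets depend on $y$, so that exchange needs a Fubini argument on $\mathbb P_{Y\mid X=x}\otimes\mathbb P_Y$.

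You avoid the exchange altogether. Continuity of $\phi_x$ together with full support upgrades ``a.e.\ $y$'' to ``every $y$''. A countable dense subset of $\mathcal Y$ then yields a single exceptional null set, so $k_{\mathcal Y}(\cdot,Y)=m_x$ a.s. You also make explicit the Fubini step in (ii) that the paper leaves implicit. Finally, you prove measurability of $f$ by hand, where the paper simply cites Lemma~\ref{lemma:measurable-if-degenerate}. Your route is more self-contained and more careful with null sets; the paper's is shorter because it outsources the selection step.

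That last step is the vaguest part of your write-up (``$\lim_n$ of averages''). A clean version runs as follows.
\begin{itemize}
\item Let $G$ be the set of $x$ with $\int_{\mathcal Y}\phi_x\,\mathrm d\mathbb P_Y=0$; it is Borel by Tonelli and has $\mathbb P_X(G)=1$.
\item Fix a homeomorphic embedding $\iota=(\iota_n)_{n}$ of $\mathcal Y$ into $[0,1]^{\mathbb N}$.
\item On $G$, $\iota_n(f(x))=\int\iota_n\,\mathrm d\mathbb P_{Y\mid X=x}$ is Borel in $x$, and $\iota^{-1}$ is continuous on $\iota(\mathcal Y)$.
\item Define $f$ to be constant off $G$.
\end{itemize}
Alternatively, just cite the lemma. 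The Hilbert-space identity you display is never used and can be dropped.
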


Note that $D$, like $\xi$, $\eta_{k_{\mathcal{Y}}}$, and $\nu$, is not symmetric in its arguments; this sets these measures apart from many classical measures of dependence, such as Spearman’s $\rho$ or mutual information. Although symmetrized variants can be constructed, dependence itself need not be intrinsically symmetric.\footnote{For example, with $X \sim N(0,1)$ and $Y = X^2$, $Y$ is a measurable function of $X$, but $X$ is not a measurable function of $Y$.} Finally, while there is interest in quantifying the extent of dependence among more than two random variables, our focus here is restricted to pairs of variables. The following Table~\ref{tab:dependence_measures_transposed} summarizes key properties of $D$ in comparison with related measures.

\begin{table}[ht]
\centering
\begin{tabular}{lcccc}
\toprule
Property & $\xi$ & $\nu$ & $\eta_{k_{\mathcal{Y}}}$ & $D$ \\
\midrule
Value in $[0,1]$ 
  & $\checkmark$ & $\checkmark$ & $\checkmark$ & $\checkmark$ \\

0-independent 
  & $\checkmark$ & $\checkmark$ & $\checkmark$ & $\checkmark$ \\

1-full dependence 
  & $\checkmark$ & $\checkmark$ & $\checkmark$ & $\checkmark$ \\

Bijective invariance
  & $\checkmark$ & $\checkmark$ & $\times$ & $\times$ \\

Multivariate $Y$ 
    & $\times$ & $\times$ & $\checkmark$ & $\checkmark$ \\
Kernel-endowed domain $X, Y$
    & $\times$ & $\times$ & $\checkmark$ & $\checkmark$ \\ 
    \bottomrule
\end{tabular}
\caption{Comparison of dependence measures and their properties.}
\label{tab:dependence_measures_transposed}
\end{table}
In the next section, we develop estimators for the population quantity $D(Y,X)$.

\section{Estimation}\label{sec:estimator}
In this section, we introduce two approaches for estimating $D(Y,X)$ from an i.i.d.\ sample $(X_i, Y_i)_{i=1}^n$. %
Both estimators follow the same general strategy.
In estimating $D(Y, X)$, we can approximate the integral with respect to $\pp_Y$ by an average over the observed sample $\{Y_j\}_{j=1}^n$,
\begin{align}
    \lefteqn{\int_\mathcal{Y} \frac{\mathbb E_X \left  [ \mathbb{V}_{Y\mid X} \left[k_\mathcal{Y}(Y,y)\right] \right ]}{\mathbb{V}_Y \left[k_\mathcal{Y}(Y,y)\right]}\mathrm d  \mathbb P_Y(y) \approx } \notag \\
    & \qquad\qquad\qquad\frac{1}{n}\sum_{i = 1}^n \frac{\mathbb E_X\left[ \mathbb{V}_{Y\mid X} \left[k_\mathcal{Y}(Y, Y_i)\right] \right ]}{\mathbb{V}_Y\left[k_\mathcal{Y}(Y,Y_i)\right]}, \label{eq:approximate-by-mean}
\end{align}
where the random part in each summand is $Y$ and $Y_i$ is the observed sample. Hence one needs to provide an estimate for each summand.

While both estimators rely on \eqref{eq:approximate-by-mean}, we emphasize that the two estimators are fundamentally different in nature. The first estimator relies on the nearest-neighbour graph structure induced by the $X_i$'s and therefore requires $\mathcal{X}$ to be a metric space. The second estimator does not impose such a requirement on $\mathcal{X}$ but rather assumes the existence of a kernel $k_{\mathcal X}$ on $\mathcal X$, and makes use of the conditional mean embedding.

\subsection{Nearest-Neighbour Estimator} \label{subsection: graph based estimator}

We begin by introducing an estimator based on approximating the conditional distribution $\mathbb{P}_{Y\mid X}$ using the $K$-nearest neighbours method. To formalize this construction, we impose the following assumption.

\begin{assumption} The space $(\mathcal X, d_{\mathcal X})$ is a metric space. 
\label{assumption: X is metric space}
\end{assumption}

Note that the nominator of \eqref{eq:approximate-by-mean} contains a conditional variance, which for any $y\in\mathcal Y$ can be expressed as
\begin{align*}
    \mathbb{V}_{Y \mid X}\!\left[k_{\mathcal{Y}}(Y_j, y)\right]
    =
    \frac{1}{2}
    \mathbb{E}_{Y \mid X}
    \Big[
        \big(
            k_{\mathcal{Y}}(Y_j, y)
            -
            k_{\mathcal{Y}}(Y_j', y)
        \big)^2
    \Big],
\end{align*}
where $Y_j^\prime$ is an i.i.d.\ copy of $Y_j$ conditional on $X_j$.
To estimate this quantity, we approximate $Y_j^\prime$ using a surrogate.
Given a sample $\{(X_i, Y_i)\}_{i=1}^n$, for each $Y_j$,
we select $Y_k$ as a surrogate for $Y_j^\prime$ whenever
$d_{\mathcal{X}}(X_k, X_j)$ is sufficiently small.
To make the notion of ``small'' precise, we restrict attention to those
$X_k$ that belong to the set of $K$-nearest neighbours of $X_j$.
More precisely, for distinct indices $i$ and $j$, let
$\mathcal{N}_j^{\setminus i}$ denote the set of the $K$ nearest neighbours
of $X_j$ among $\{X_k\}_{k \neq i,j}$, with ties broken uniformly at random. We let for each $i \in [n]$,
{\begin{align}
\MoveEqLeft E_{n, i}^{\text{K-NN}} \coloneq \label{equation: KNN sample moments1}\\
&\frac{1}{2 K (n - 1)}\sum_{j\neq i}\sum_{k\in\mathcal{N}_j^{\setminus i}} \big(k_{\mathcal{Y}}(Y_j, Y_i) - k_{\mathcal{Y}}(Y_k, Y_i)\big)^2, \notag
\\
\MoveEqLeft V_{n, i}^{\text{K-NN}} \coloneq \label{equation: KNN sample moments} \\&\frac{1}{n-1}
\sum_{j \neq i}
k_\mathcal{Y}^2(Y_j, Y_i)
-
\Bigg[
\frac{1}{n-1}
\sum_{j \neq i}
k_\mathcal{Y}(Y_j, Y_i)
\Bigg]^2,\notag
\end{align}
to be the estimators for $\ee_X[\mathbb{V}_{Y\mid X}[k_\mathcal{Y}(Y, Y_i)]]$ and $\mathbb{V}_{Y}[k_\mathcal{Y}(Y, Y_i)]$, respectively. Consequently, we construct an estimator of $D(Y,X)$ as follows.

\begin{definition}[Nearest-neighbour estimator] 
\label{definition: graph based estimator}
Suppose that Assumption~\ref{assumption: X is metric space} holds. Given an i.i.d.\ sample $(X_i,Y_i)_{i=1}^n$ from $\mathbb{P}_{XY}$, 
let
\begin{equation*}
\hat{D}^{\text{K-NN}} (Y,X) := 1 - \frac{1}{n} \sum_{i=1}^n \frac{E_{n,i}^{\text{K-NN}}}{V_{n,i}^{\text{K-NN}}},
\end{equation*}
with $E_{n,i}^{\text{K-NN}}$ and $V_{n,i}^{\text{K-NN}}$ as in  \eqref{equation: KNN sample moments1} and \eqref{equation: KNN sample moments}, respectively.
\end{definition}

We emphasize that the use of the $K$-nearest neighbours is not essential to the construction. In principle, it may be replaced by any geometric graph built on $(X_i)_{i=1}^n$; see \cite{bhattacharya19asymptoticframework} for a general framework.

\subsection{RKHS Estimator} \label{subsection: ridge estimation}

Next, by interpreting the conditional expectations in \eqref{eq:kmd-population-version1} as conditional mean embeddings, we present an alternative estimator of $D(Y,X)$. We begin by introducing the key additional assumptions and necessary notation.

\begin{assumption} \label{ass:rkhs-estimator-assumption}
    (i) $\mathcal X$ is separable, (ii) $\operatorname{supp}(\mathbb P_X) = \mathcal X$, (iii) $k_{\mathcal X}$ is characteristic and continuous, (iv) for all $g\in\mathcal H_{\mathcal Y}$ there exists $h_g\in\mathcal H_{\mathcal X}$ such that
\small        $\operatorname{Cov}(h_g(X) - f_g(X), h(X)) = 0 \text{ for all } h\in \mathcal H_{\mathcal X}$,
    where $f_g(x) = \mathbb E_{Y\mid X=x}[g(Y)]$ for $x \in \mathcal X$, and (v) $k_{\mathcal Y}^2$ is characteristic.\footnote{$k_{\mathcal Y}^2$ is characteristic, for instance, if $k_{\mathcal Y}$ is $c_0$-universal~\citep{szabo18characteristic2}}.
\end{assumption}

\begin{remark} Suppose that Assumption~\ref{assumption:general} and Assumption~\ref{ass:rkhs-estimator-assumption} both hold.
\begin{enumerate}
    \item The separability of $\mathcal{X}$ and $\mathcal{Y}$, together with the continuity of $k_{\mathcal{X}}$ and $k_{\mathcal{Y}}$, imply the separability of $\mathcal{H}_{\mathcal{X}}$ and $\mathcal{H}_{\mathcal{Y}}$, respectively \citep[Lemma~4.33]{steinwart08support}.
    \item As in Remark~\ref{remark:main-assumptions}(\ref{remark:item:separation}), it holds for any $f \in \mathcal{H}_\mathcal{X}$ that $f = 0 \, \, \mathbb{P}_{X}\text{-a.e.}$ iff.\ $f \equiv 0$ on $\mathcal{X}$.
    \item Condition (iv) is technical and needed to ensure that for any $x \in \mathcal{X}$, the conditional mean embedding
    \begin{equation}
       \mu_{k_{\mathcal Y}}(\mathbb{P}_{Y\mid X = x}) = \mathbb{E}_{Y \mid X =x}[k_{\mathcal{Y}}(\cdot,Y)] \in \mathcal{H}_{\mathcal{Y}}, \label{eq:conditional-mean-lin-alg}
    \end{equation}
    can be realized via linear algebra involving covariance operators \citep[Theorem~4.3]{klebanov20conditional}.
    \item To define~\eqref{eq:conditional-mean-lin-alg}, \citet[(2.4)]{klebanov20conditional} assign $\mu_{k_{\mathcal Y}}(\mathbb{P}_{Y\mid X = x}) \coloneq 0$ for $x\in\mathcal X\setminus\mathcal X_{\mathcal Y}$, with $\mathcal X_{\mathcal Y}\coloneq \{x\in\mathcal X \mid \mathbb{ E}_{Y\mid X=x}\|k_{\mathcal Y}(\cdot,Y)\|_{\mathcal H_{\mathcal Y}}^2 < \infty \}$. In our case $\mathcal X_{\mathcal Y} = \mathcal X$ as the boundedness of $k_{\mathcal Y}$ implies that of $\|k_{\mathcal Y}(\cdot, y)\|_{\mathcal H_{\mathcal Y}}$ for $y\in\mathcal Y$ \citep[p.~124]{steinwart08support}; hence, this distinction is not needed.
\end{enumerate}
\end{remark}

Given an i.i.d.\ sample $(X_i,Y_i)_{i=1}^n$ from $\mathbb P_{XY}$ and $\epsilon_n > 0$, define the $n\times n$ matrices
\begin{equation}
\begin{aligned}
&\mathbf K_X = [k_\mathcal{X}(X_i,X_j)]_{i,j=1}^n, && \tilde{\mathbf K}_X = \mathbf H\mathbf K_X\mathbf H, \\
&\mathbf K_Y = [k_\mathcal{Y}(Y_i,Y_j)]_{i,j=1}^n,\\
&\mathbf M = \mathbf K_Y\tilde{\mathbf K}_X
(\tilde{\mathbf K}_X+n\epsilon_n\mathbf I_n)^{-1}.
\end{aligned}
\label{eq:gram-matrices}
\end{equation}
where $\mathbf H = \mathbf I_n - \frac1n\bm 1_n\bm1_n^\top \in \mathbb R^{n\times n}$ is the centering matrix. Denote the $j$-th canonical basis vector by $\mathbf e_j \in \R^n$. With these notations in place, $\mathbb{E}_X [\mathbb{V}_{Y | X} [ k_\mathcal{Y} (Y,Y_i)]]$ and $\mathbb{V}_Y [k_\mathcal{Y} (Y,Y_i)]$ ($i\in [n]$) can be estimated, respectively, via
\begin{align}
E_{n,i}^{\operatorname{RKHS}}&\coloneq  \frac{1}{n}\big\langle \mathbf e_i,(\mathbf K_Y\!\circ\!\mathbf K_Y)\bm1_n \notag
\\
&\quad+(\mathbf K_Y\!\circ\!\mathbf K_Y)\tilde{\mathbf K}_X
(\tilde{\mathbf K}_X+n\epsilon_n\mathbf I_n)^{-1}\bm1_n \notag \\
&\quad-\frac{1}{n}\mathbf K_Y\bm1_n\bm1_n^\top\mathbf K_Y\mathbf e_i
-\frac{2}{n}\mathbf K_Y\bm1_n\bm1_n^\top\mathbf M^\top\mathbf e_i \notag \\
&\quad-\mathbf M\mathbf M^\top\mathbf e_i
\big\rangle_{\mathbb R^n}, \label{eq:rkhs-expectation-estimator}\\
V_{n,i}^{\operatorname{RKHS}}
&\coloneq \frac{1}{n}\bm1_n^\top(\mathbf K_Y\!\circ\!\mathbf K_Y)\mathbf e_i
-
\left( \frac{1}{n} \bm1_n^\top\mathbf K_Y\mathbf e_i\right)^2. \label{eq:rkhs-var-estimator}
\end{align}
The underlying idea is to use the linear algebraic formulation of \eqref{eq:conditional-mean-lin-alg} to obtain a quantity which then permits plug-in estimation. The first step relies on Assumption~\ref{assumption:general} and Assumption~\ref{ass:rkhs-estimator-assumption}; see the formal justification in Appendix~\ref{sec:derivation-rkhs-estimator}.

Using the above notations, we are ready to introduce the following RKHS-based estimator of $D(Y,X)$. 

\begin{definition}[RKHS-based estimator] \label{definition:ridge_estimator} Given a sample $(X_i,Y_i)_{i=1}^n\overset{\text{i.i.d.}}{\sim}\mathbb P_{XY}$, $\epsilon_{n} > 0$, and $\mathbf K_X$, $\mathbf{K}_Y$, $\tilde{\mathbf K}_X$, and $\mathbf{M}$ as in \eqref{eq:gram-matrices}, let 
\begin{equation*}
\hat{D}^{\operatorname{RKHS}}(Y,X) \coloneq 1 - \frac{1}{n} \sum_{i=1}^n \frac{E_{n,i}^{\operatorname{RKHS}}}{V_{n,i}^{\operatorname{RKHS}}},
\end{equation*}
where $E_{n,i}^{\operatorname{RKHS}}$ is as in \eqref{eq:rkhs-expectation-estimator} and $V_{n,i}^{\operatorname{RKHS}}$ as in \eqref{eq:rkhs-var-estimator}.
\end{definition}

\subsection{Computational Complexity} \label{sec:computational-complexity}
In this section, we establish the runtimes of the nearest-neighbour estimator (Definition~\ref{definition: graph based estimator}) and the RKHS-based estimator (Definition~\ref{definition:ridge_estimator}).

Let us start with the nearest-neighbour estimator. Assume that we are given access to a data structure that allows retrieving the $K$ nearest neighbours of any $X_j$ point in $\mathcal O(\log n)$, for example, a vantage point tree has been set up; the latter costs $\mathcal O(n\log n)$. Then, in the nominator \eqref{equation: KNN sample moments}, the computational cost consists of, for each $j\neq i$, retrieving the $K$ nearest neighbours and performing $\mathcal O(K)$ elementary operations in the inner sum, which adds up to $\mathcal O(n(K+\log n))$. The computation of the denominator \eqref{equation: KNN sample moments1} has a cost of $\mathcal O(n)$. The dominant cost of the nearest neighbour estimator is thus repeating the computations in the nominator $n$ times, which yields a total runtime complexity of $\mathcal O(n^2(K+\log n))$.

For the RKHS-based estimator, one must first compute the matrices in \eqref{eq:gram-matrices}, costing $\mathcal O(n^3)$ by the matrix multiplications in $\mathbf M$ and by the cost of matrix inversion encountered in practice.  Next, to compute the nominator~\eqref{eq:rkhs-expectation-estimator} for $i\in[n]$, notice that the second line in~\eqref{eq:rkhs-expectation-estimator} is independent of $i$; hence, while its computation costs $\mathcal O(n^3)$, it must only be computed once. The remaining operations in~\eqref{eq:rkhs-expectation-estimator} are matrix-vector and vector-vector products, which have a cost of at most $\mathcal O(n^2)$. Similarly, the denominator \eqref{eq:rkhs-var-estimator} has a computational cost of $\mathcal O(n^2)$. Repeating these $n$ times adds up to a total complexity of $\mathcal O(n^3)$ for the RKHS-based estimator.

\section{Consistency and Rate of Convergence}\label{sec:Consistencyandrates}

In this section, we establish the consistency and convergence rate of the nearest-neighbour estimator in Definition~\ref{definition: graph based estimator}, under the following assumptions.
\begin{assumption} \label{assumption: properties of KNN}
    
    Given a sample $\{X_i\}_{i = 1}^n$, let $\delta_\ell\coloneq \abs{\{j:\ell\in\mathcal{N}_j\}}$ for $\ell\in [n]$ where $\mathcal{N}_j$ is the set of the $K$-nearest neighbours of $X_j$ in $\{X_i\}_{i\neq j}$ with ties broken at random. There exists constants $C_{\mathcal{X}} > 0$ such that $\delta_\ell\leq C_{\mathcal{X}} K$ for all $\ell\in[n]$. 
\end{assumption}

\begin{assumption}
\leavevmode
\begin{enumerate}[label=(\roman*)]
        \item \label{assumption: properties of P item: integrable inverse var} There exists a finite constant $C_3 > 0$ such that 
            $\int_{\mathcal{Y}}(\mathbb{V}_Y  (k_\mathcal{Y} (Y, y))^{-3} \mathrm{d}\pp_Y(y) < C_3$.

        \item There exists an $x^* \in \mathcal{X}$, and constants $\alpha,C_1,C_2>0$ such that for any $t>0$
        $\mathbb{P}_X \left( d_{\mathcal{X}}(X_1,x^*)\geq t\right)\leq C_1\exp(-C_2t^{\alpha})$.

        \item There exist constants $d>0$ and $c_0>0$ such that for every radius $T>0$, for all $x\in \mathcal X$ with $d_{\mathcal{X}}\left(x^*, x\right) \leq T$ and all $r>0$, $\pp_X\left(d_{\mathcal{X}}(X_1, x) \leq r\right) \geq c_0 r^d$. 

        \item For $y\in\mathcal{Y}$ and $x\in\mathcal X$, let $m_{1,y}(x)\coloneq \ee_{Y\mid X = x}[k_\mathcal{Y}(Y, y)]$ and $m_{2,y}(x)\coloneq \ee_{Y\mid X = x}[k_\mathcal{Y}^2(Y, y)]$. Then there exists constants $L, \beta > 0$ such that for any $y\in\mathcal Y$ and for any $x,x'\in\mathcal X$ 
            $\abs{m_{1,y}(x) - m_{1,y}(x')} \leq L d_\mathcal{X}^\beta(x, x')$, and 
            $\abs{m_{2,y}(x) - m_{2,y}(x')} \leq L d_\mathcal{X}^\beta(x, x')$.

\end{enumerate}
\label{assumption: properties of P} 
\end{assumption}

We elaborate the assumptions in the following remarks.

\begin{remark}
    Assumption~\ref{assumption: properties of KNN} ensures that replacing $(X_\ell, Y_\ell)$ for any $\ell\in[n]$ by an i.i.d.\ copy $(X_\ell', Y_\ell')$ affects only finitely many terms in $E_{n,i}^{\text{K-NN}}$. 
\end{remark}
\begin{remark}
     Assumption~\ref{assumption: properties of P} part (i) is satisfied for example whenever $\mathcal{Y}$ is a metric space (with distance $d_\mathcal{Y}$) and bounded ($\sup_{y, y'\in\mathcal{Y}} d_{\mathcal{Y}}(y, y') < \infty$). Part (ii) can be interpreted as a tail bound on $X$, while part (iii) asserts that the distribution of $X$ is nowhere too thin, meaning that it is locally $d$-dimensional and hence $d$ can be seen as the intrinsic dimension. Part (iv) assumes that the first and second conditional moments of the canonical feature maps depend smoothly on $x$. This is exactly what ensures that replacing $X_j$ by a nearby $X_k$ introduces only a $\mathcal{O}(d_\mathcal{X}^\beta(X_j, X_k))$ bias for the $K$-NN estimator. As noted in \cite[Lemma 4]{azadkia2025} and \cite[Section 5]{deb20measuringassociationtopologicalspaces}, without regularity conditions of this type, the convergence rate may be arbitrarily slow. Comparable smoothness assumptions in related settings are imposed in \cite{asgupta24optknn}, \cite{bhattacharya19asymptoticframework}, and \cite{deb20measuringassociationtopologicalspaces}. 
\end{remark}

Under these assumptions, we obtain the following result on the rate of convergence of the nearest neighbour estimator.

\begin{theorem}\label{theorem: rate of convergence}
Under  Assumptions~\ref{assumption:general},~\ref{assumption: X is metric space},~\ref{assumption: properties of KNN}, and~\ref{assumption: properties of P}, 
\begin{align*}
    &\abs{\hat{D}^{\text{K-NN}} (Y, X) - D (Y, X)} = \\
    &\qquad\mathcal{O}_\mathbb{P}\!\left(\frac{1}{\sqrt{n}}+\left(\frac{K}{n}\right)^{\beta/d}+\frac{(\log n)^{\beta / \alpha}}{n^{2}}\right). 
\end{align*}
\end{theorem}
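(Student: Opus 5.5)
We decompose the error into a variance part and a bias part, writing $E_i \coloneq \mathbb E_X[\mathbb V_{Y\mid X}[k_{\mathcal Y}(Y,Y_i)]]$ and $V_i\coloneq \mathbb V_Y[k_{\mathcal Y}(Y,Y_i)]$, viewed as functions of $Y_i$. Since $k_{\mathcal Y}$ is bounded, say by $\kappa$, the ratio $E_i/V_i$ lies in $[0,1]$ by Theorem~\ref{theorem: rho properties}. The sample average $\frac1n\sum_i E_i/V_i$ is therefore an i.i.d.\ average of bounded variables with mean $1-D(Y,X)$, and so it deviates from $1-D(Y,X)$ by $\mathcal O_\pp(n^{-1/2})$. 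It remains to control $\frac1n\sum_i\bigl(E^{\text{K-NN}}_{n,i}/V^{\text{K-NN}}_{n,i}-E_i/V_i\bigr)$. We use the algebraic identity
\begin{equation*}
\frac{a}{b}-\frac{a'}{b'} = \frac{a-a'}{b'}+\frac{a(b'-b)}{bb'} .
\end{equation*}
The delicate point is that $V_i$ may be small, so the local normalization cannot be bounded uniformly. We therefore keep $V_i^{-1}$ and $V_i^{-2}$ as weights and control them in moment form through Assumption~\ref{assumption: properties of P}(i), which gives $\mathbb E[V_1^{-3}]<C_3$. Combined with Hölder's inequality, this turns errors of the form $|V^{\text{K-NN}}_{n,i}-V_i|\,V_i^{-2}$ into products of $L^p$ norms.

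\textbf{Denominator.} Conditional on $Y_i$, $V^{\text{K-NN}}_{n,i}$ is a V-statistic-type plug-in built from $n-1$ i.i.d.\ bounded terms. Hoeffding's inequality gives $|V^{\text{K-NN}}_{n,i}-V_i|=\mathcal O(\sqrt{\log n/n})$ uniformly in $i$, with high probability, and the expected absolute deviation is $\mathcal O(n^{-1/2})$. A lower-bound issue remains, since $V^{\text{K-NN}}_{n,i}$ could be much smaller than $V_i$. We handle it by splitting on the event $\{V_i\ge 2|V^{\text{K-NN}}_{n,i}-V_i|\}$. On this event $V^{\text{K-NN}}_{n,i}\ge V_i/2$. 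On its complement, Markov's inequality with the third inverse moment bounds the probability, and we use the trivial bound $E^{\text{K-NN}}_{n,i}/V^{\text{K-NN}}_{n,i}\lesssim$ a polynomial in $n$. This step needs care, and it may be the source of the extra $(\log n)^{\beta/\alpha}/n^2$-type remainder. The resulting contribution is $\mathcal O_\pp(n^{-1/2})$.

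\textbf{Numerator.} We write $E^{\text{K-NN}}_{n,i}-E_i=(E^{\text{K-NN}}_{n,i}-\mathbb E[E^{\text{K-NN}}_{n,i}\mid Y_i])+(\mathbb E[E^{\text{K-NN}}_{n,i}\mid Y_i]-E_i)$. For the fluctuation term, Assumption~\ref{assumption: properties of KNN} implies that replacing one pair $(X_\ell,Y_\ell)$ changes $E^{\text{K-NN}}_{n,i}$ by $\mathcal O(1/n)$, because each index belongs to at most $C_{\mathcal X}K$ neighbourhoods. McDiarmid's or Efron--Stein's inequality then gives fluctuations of order $n^{-1/2}$. For the bias term, we expand the square $(k(Y_j,Y_i)-k(Y_k,Y_i))^2$ and condition on the $X$'s. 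Since $Y_j$ and $Y_k$ are conditionally independent for $k\neq j$, the conditional expectation equals
\begin{equation*}
\tfrac12\bigl(m_{2,y}(X_j)+m_{2,y}(X_k)\bigr)-m_{1,y}(X_j)m_{1,y}(X_k),\qquad y=Y_i .
\end{equation*}
This differs from $\mathbb V_{Y\mid X=X_j}[k(Y,y)]$ by at most $C L\,d^\beta_{\mathcal X}(X_j,X_k)$, by Assumption~\ref{assumption: properties of P}(iv). The bias is thus $\mathcal O\bigl(\mathbb E[\frac1{nK}\sum_j\sum_{k\in\mathcal N_j} d^\beta(X_j,X_k)]\bigr)$, and the $1/V_i$ weight is absorbed via Hölder as before.

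\textbf{Main obstacle: $K$-NN distances.} We expect the principal difficulty to be bounding the average $K$-NN distance on a possibly unbounded $\mathcal X$. We truncate at radius $T_n=(c\log n)^{1/\alpha}$ around $x^*$. Inside the ball, Assumption~\ref{assumption: properties of P}(iii) gives $\pp(d(X_j,X_{(K)})>r)\le \pp(\mathrm{Bin}(n-2,c_0r^d)<K)$. A Chernoff bound and integration then yield $\mathbb E[d^\beta]\lesssim (K/n)^{\beta/d}$. Outside the ball, the tail of Assumption~\ref{assumption: properties of P}(ii) makes the probability $\lesssim n^{-2}$ or smaller for a suitable $c$, while the distances there are at most of order $T_n$. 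This produces the remainder $(\log n)^{\beta/\alpha}/n^2$. Collecting the variance, denominator and bias contributions gives the stated rate.
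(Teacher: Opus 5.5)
Your proposal is correct and follows essentially the same route as the paper. The shared steps are: Hoeffding for the i.i.d.\ average of $E_i/V_i$; a good event on which $V_{n,i}^{\text{K-NN}}\ge V_i/2$, with concentration of $V_{n,i}^{\text{K-NN}}$ and the inverse moments of $\mathbb{V}_Y[k_{\mathcal Y}(Y,y)]$ controlling the denominator swap, and the third moment giving $\mathbb{P}(|V_{n,i}^{\text{K-NN}}-V_i|>V_i/2)\lesssim n^{-3/2}\ee[V_i^{-3}]$ on the bad event; Efron--Stein with the bounded in-degree for the fluctuation of $E_{n,i}^{\text{K-NN}}$, and the H\"older smoothness of $m_{1,y},m_{2,y}$ for its bias; and a truncation-plus-Chernoff bound on $K$-NN distances yielding $(K/n)^{\beta/d}+(\log n)^{\beta/\alpha}n^{-2}$. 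The only differences are minor: you treat the bad event index by index via a deterministic bound on $E_{n,i}^{\text{K-NN}}/V_{n,i}^{\text{K-NN}}$, which does hold because $E_{n,i}^{\text{K-NN}}\le \frac{n-1}{K}V_{n,i}^{\text{K-NN}}$ (compare with the full pairwise sum), whereas the paper simply bounds $\mathbb{P}(\Omega_n^c)\lesssim n^{-1/2}$; also, the $(\log n)^{\beta/\alpha}/n^2$ remainder comes solely from the $K$-NN truncation, not from the denominator step.
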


Theorem~\ref{theorem: rate of convergence} shows that the rate of convergence of $\hat{D}^{\text{K-NN}} (Y,X)$ adapts to the intrinsic dimension $d$ of $X$. For instance, the result guarantees, for suitable $\beta$, faster convergence if $\mathcal X = \R^{d_0}$ but $X$ is only supported on a $d < d_0$ dimensional hyperplane.

\section{Numerical Illustrations}\label{sec:numerical}

In this section, we apply our proposed estimators to  simulated and real-world datasets.\footnote{The implementation is provided in \href{https://github.com/PouyaRoudaki/KernelIR}{KernelIR}.} In real-world data analysis, we also normalize each
real-valued variable to have mean $0$ and variance $1$.

\subsection{Simulation Studies} \label{sec:simulation-studies}
\paragraph{Power comparison.}
We assess the power of independence tests based on $\hat{D}^{\operatorname{RKHS}}$ and
$\hat{D}^{\text{K-NN}}$, benchmarking them against several competitive tests
based on the related dependence measures discussed in
Section~\ref{sec:RelatedDependenceMeasures}. The competing statistics are: Chatterjee's
$\xi_n$ (estimator of $\xi$) correlation coefficient~\cite{chatterjee21newcoefficient}, the $T_n$ (estimator of $\xi$ for the case where $X$ can be multidimensional) statistic~\cite{azadkia2021simple}, the integrated $R^2$ dependence measure
$\nu_n$ (estimator for $\nu$)~\cite{azadkia2025}, and the kernel measure of association in both its
$K$-nearest-neighbour form $\hat\eta^{\text{K-NN}}$ and its RKHS form
$\hat\eta^{\operatorname{RKHS}}$ (estimators for $\eta_{k_\mathcal{Y}}$)~\cite{huang22kernel}. These are computed using the
\texttt{R} packages \texttt{XICOR}, \texttt{FOCI}, \texttt{FORD}, and \texttt{KPC},
respectively. The sample size is $n = 100$; all $p$-values are obtained via $1000$ independent permutations, and power is estimated from $500$ simulation replications at the $5\%$ significance level.

\paragraph{Euclidean data.} We draw $n$ i.i.d.\ samples $(X_i, Y_i)_{i=1}^n$ from a distribution on
$\mathbb{R}^2$, following the experimental setup of \citet[Example~6.4(6)]{chatterjee21newcoefficient}.
Indeed, $X \sim \mathrm{Uniform}[-1,1]$, the noise
$\varepsilon \sim N(0,1)$ is independent of $X$, and the alternative hypothesis is 
$Y = 3\bigl(\sigma(X)(1-\lambda) + \lambda\bigr)\varepsilon$,
where $\sigma(X) = \mathbbm{1}_{\{\lvert X\rvert \leq 0.5\}}$ and the noise level $\lambda \in [0,1]$, that is, we consider a heteroscedastic setting.
The statistics $\hat{\eta}^{\text{K-NN}}$,
$\hat{\eta}^{\operatorname{RKHS}}$, $\hat{D}^{\text{K-NN}}$, and
$\hat{D}^{\operatorname{RKHS}}$ all use the Gaussian kernel with the median of pairwise distances as bandwidth, and their regularization is $\epsilon_n = 10^{-4}$. The graph-based methods use
$5$ nearest neighbours.

\begin{figure}[h]
  \centering
  \input{fig/tikz_combined/power_final}
  \caption{Comparison of power of independence tests for the heteroscedastic 
  (left) and SO(3) (right) alternatives as a function of  homoscedasticity (left) and the level of noise (right). }
  \label{fig:power_main_text}
\end{figure}
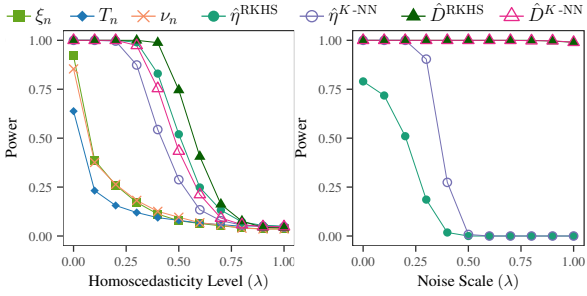

\paragraph{Non-Euclidean data.}
Following \citet[Section~6]{huang22kernel}, we additionally consider the setting where $Y$
takes values in the special orthogonal group $\mathrm{SO}(3)$, the manifold of $3\times
3$ orthogonal matrices with determinant $1$. We equip $\mathrm{SO}(3)$ with the
characteristic kernel
$
k_\mathcal{Y}(\mathbf{A},\mathbf{B}) = \frac{\pi\theta(\pi-\theta)}{8\sin\theta}$,
where $\theta \in [0,\pi]$ is defined by $\cos\theta = (\operatorname{Tr}(\mathbf{B}^{-1}\mathbf{A})-1)/2$,
so that $e^{\pm \sqrt{-1}\theta}$ are the eigenvalues of $\mathbf{B}^{-1}\mathbf{A}$. Let $R_1(x)$ and
$R_3(z)$ denote rotations in the $y-z$ plane and $x-y$ plane with angle $x$ and $z$, respectively.
The predictor is $X \sim N(\mathbf{0}, \mathbf{I}_3)$. Let the independent noise
variables be $\varepsilon_1, \varepsilon_2 \overset{\text{i.i.d.}}{\sim} N(0,1)$, the noise scale
$\lambda \in [0,1]$, and the the alternative hypothesis is 
$Y = R_1(X_1 + \lambda\varepsilon_1)R_3(X_2 X_3 +
\lambda\varepsilon_2) \in \operatorname{SO}(3)$.
For $\hat{D}^{\operatorname{RKHS}}$ the kernel $k_\mathcal{X}$ is the Gaussian kernel with the median of pairwise distances as bandwidth, and their regularization parameter is set to $\epsilon_n = 10^{-4}$. Moreover, $\hat{D}^{\text{K-NN}}$ uses
$5$ nearest neighbours.

The results in Figure~\ref{fig:power_main_text} show that $\hat{D}^{\text{K-NN}}$
and $\hat{D}^{\operatorname{RKHS}}$ consistently outperform all competing tests in both
the Euclidean and non-Euclidean examples. The kernel-based measures
$\hat{\eta}^{\text{K-NN}}$ and $\hat{\eta}^{\operatorname{RKHS}}$ already improve
upon non-kernel methods in the heteroscedastic case---owing to their greater flexibility,
as discussed in Section~\ref{sec:RelatedDependenceMeasures}---and naturally extend to
non-Euclidean data. Nevertheless, $\hat{D}^{\text{K-NN}}$ and
$\hat{D}^{\operatorname{RKHS}}$ achieve uniformly higher power while retaining the same
generality.

\paragraph{Runtime comparison.} In this experiment, we compare the runtimes of $\hat{D}^{\text{K-NN}}$ and $\hat{D}^{\operatorname{RKHS}}$ with that of closely related measures (elaborated in  Section~\ref{sec:RelatedDependenceMeasures}). The right plot in Figure~\ref{fig:MSD_time} shows the average runtime of each estimator over $100$ repetitions for sample size $n$ ranging from $10$ to $5000$. We observe the higher runtime of RKHS-based methods compared to graph-based methods, in line with Section~\ref{sec:computational-complexity}.

\subsection{Real Data Example}

\paragraph{Million Song Dataset}
The Million Song Dataset~\citep{bertin11million} contains $515{,}345$ songs. Each song is described by $90$ features $X$ and its year of release $Y$; the latter ranges from 1992 to 2011. The objective is to detect statistical dependence between the features and the release year. To assess empirical power as a function of the sample size (ranging from $50$ to $1500$), we proceed as follows: for each fixed sample size $n$, we draw $200$ independent subsamples from the full dataset and estimate the power at significance level $0.01$. The p-values are computed using $200$ permutations. For the RKHS-based estimators, we use the Gaussian kernel with the median of pairwise distances as the bandwidth, and the regularization parameter $\epsilon_n = 10^{-4}$ (kept fixed for simplicity). For the graph-based estimators, we consider $5$ nearest neighbours. The kernel-based methods are generally expected to provide greater flexibility in capturing complex dependence structures and to achieve higher power as it is showed in left plot of Figure~\ref{fig:MSD_time}; we also observe in this example the RKHS-based estimator outperforms the graph-based approach.
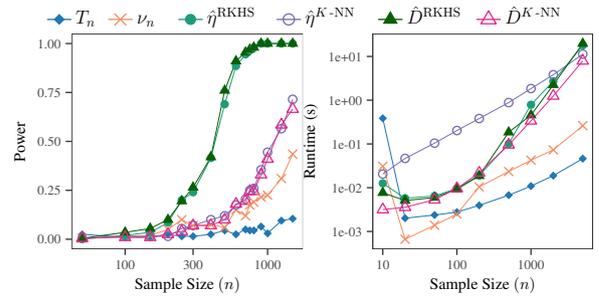
\begin{figure}[h]
  \centering
\input{fig/tikz_combined/power_MSD_time_final}
  \caption{(Left) Comparison of power of independence tests for Million Songs Data as a function of the sample size ($n$). (Right) Comparison of time complexity of the compared dependence measures as a function of the sample size ($n$).}
  \label{fig:MSD_time}
\end{figure}

\begin{acknowledgements} %

   FK is supported by the pilot program Core-Informatics of the Helmholtz Association (HGF).
   This work used the Cirrus UK National Tier-2 HPC Service at EPCC (http://www.cirrus.ac.uk) funded by The University of Edinburgh, the Edinburgh and South East Scotland City Region Deal, and UKRI via EPSRC.
\end{acknowledgements}

\bibliography{bib/collected_Florian,bib/collected_plus}

\newpage

\onecolumn

\title{Kernel Integrated $R^2$: A Measure of Dependence\\(Supplementary Material)}
\maketitle

\appendix
\setcounter{equation}{0}
\renewcommand{\theequation}{\thesection.\arabic{equation}} %

\section{Proofs}\label{sec:proof}

This appendix collects our proofs. We prove Theorem~\ref{theorem: rho properties} in Appendix~\ref{sec:proof-key-properties-kir}, derive the RKHS-based estimator (Definition~\ref{definition:ridge_estimator}) in Appendix~\ref{sec:derivation-rkhs-estimator}, and prove Theorem~\ref{theorem: rate of convergence} in Appendix~\ref{sec:proof-rates-knn}.

\subsection{Proof of Theorem \ref{theorem: rho properties}} \label{sec:proof-key-properties-kir}

We prove the well-definedness and parts (i)--(iii) one by one.

\begin{proof}[Proof of well-definedness] By Assumption~\ref{assumption:general}, the kernel $k_\mathcal{Y}$ is continuous and by the blanket assumption in Section~\ref{sec:notation}, it is also bounded. Hence, $k_\mathcal{Y}(\cdot,y)$ is Lebesgue-integrable for every $y\in \mathcal Y$ and so is its square; therefore the nominator is finite. It remains to show that the denominator
    $\mathbb{V}_Y \!\left[k_\mathcal{Y}(Y,y)\right]$
is non-zero for all $y\in \mathcal Y$, which holds if $k_{\mathcal Y}(Y,y)$ is not a.s.\ constant. We prove this in the following.

Indeed, it is known that a random variable is a.s.\ constant iff.\ its distribution is degenerate. Hence, it suffices to show that the distribution of $k_{\mathcal Y}(Y,y)$ is non-degenerate for all $y\in\mathcal Y$. Let us argue by contradiction, in other words, there exists a $y\in\mathcal Y$ such that the distribution of $k_{\mathcal Y}(Y,y)$ is degenerate, that is, for some $r\in\mathcal \R$ it holds that $\mathbb P_Y\circ k_{\mathcal Y}^{-1}(\cdot,y) = \delta_{r}$. 
Then there exists $u\in\mathcal Y$ such that $k_{\mathcal Y}(u,y) = r$; indeed, assuming that $k_{\mathcal Y}^{-1}(\{r\},y) = \emptyset$ would mean that $\mathbb P_{\mathcal Y} \circ k_{\mathcal Y}^{-1}(\{r\},y) = 0 \neq \delta_{r}(\{r\})= 1$, which is a contradiction. Moreover, as by Assumption~\ref{assumption:general}, $k_{\mathcal Y}(\cdot,y)$ is non-constant, there exist $v\in\mathcal Y$ and $s\in\R$ such that $k_{\mathcal Y}(v,y) \eqcolon s \neq r$. As $\R$ is Hausdorff and $r\neq s$, we can find disjoint open sets $R,S\subset \R$ such that $r\in R$ and $s\in S$. As $k_{\mathcal Y}$ is continuous, so is $k_{\mathcal Y}(\cdot,y)$ \citep[Lemma~4.29]{steinwart08support}, which implies that also $U \coloneq k_{\mathcal Y}^{-1}(S,y) \subset \mathcal Y$ is open. Hence, by the assumed full support of $\mathbb P_{\mathcal Y}$, $\mathbb P_{\mathcal Y}(U) > 0$. But then $\mathbb{P}_{\mathcal Y} \circ k_{\mathcal Y}^{-1}(S,y) > 0$ while $\delta_r(S) = 0$, contradicting the assumption that $\mathbb{P}_{\mathcal Y} \circ k_{\mathcal Y}^{-1}(\cdot,y) =\delta_r$.
\end{proof}

\begin{proof}[Proof of (i)] 
     By the law of total variance and the non-negativity of variances, we have
    \begin{align*}
         \mathbb{V}_Y \!\left[k_\mathcal{Y}(Y,y)\right] &= \mathbb{V}_X \!  \left  [ \ee_{Y \mid X}\!\left[k_\mathcal{Y}(Y,y)\right] \right ] + \underbrace{\ee_{X} \!  \left  [ \mathbb{V}_{Y \mid X}\!\left[k_\mathcal{Y}(Y,y)\right] \right ]}_{\ge 0}
         \ge \mathbb{V}_X\!  \left  [ \ee_{Y\mid X}\!\left[k_\mathcal{Y}(Y,y) \right] \right ] \ge 0,
    \end{align*}
    implying that 
    \begin{equation}
         \mathbb{V}_Y \!\left[k_\mathcal{Y}(Y,y)\right] \ge \mathbb{V}_X\!  \left  [ \ee_{Y\mid X}\!\left[k_\mathcal{Y}(Y,y) \mid X\right] \right ] \ge 0.
        \label{eq:var_ineq}
    \end{equation}
    The combination of \eqref{eq:var_ineq} and the alternative expression \eqref{eq:kmd-population-version2} in Lemma~\ref{prop:kmd-population-version2} shows that $D(Y,X) \in [0,1]$.
\end{proof}

\begin{proof}[Proof of (ii)]
    $(\impliedby)$ We will show that in this case the numerator in \eqref{eq:kmd-population-version2} is zero, which implies that \eqref{eq:kmd-population-version2} itself is zero. The claim then follows by the equivalence of \eqref{eq:kmd-population-version1} and \eqref{eq:kmd-population-version2}, established in Lemma~\ref{prop:kmd-population-version2}.

    Indeed, by the assumed independence of $X$ and $Y$, for any $y\in\mathcal Y$,
    \begin{equation*}
            \ee_{Y \mid X}\!\left[k_\mathcal{Y}(Y,y) \right] = \ee_Y\!\left[k_\mathcal{Y}(Y,y)\right] \eqcolon g(y),
    \end{equation*}
    and the numerator of \eqref{eq:kmd-population-version2} becomes
    \begin{equation*}
        \mathbb{V}_X\!\left[\ee_{Y \mid X}\!\left[k_\mathcal{Y}(Y,y)\right]\right] = \mathbb{V}_X[g(y)] = 0,
    \end{equation*}
    proving the first direction.
    
    $(\implies)$ Assume that $D(Y,X) = 0$. Then, using the equivalence of \eqref{eq:kmd-population-version1} and \eqref{eq:kmd-population-version2}, we have that
    \begin{equation*}
    \mathbb{V}_X\!\big[\ee_{Y\mid X}[k_\mathcal{Y}(Y,y)]\big] = 0 \text{ for } \mathbb P_Y\text{-a.e. } y,
    \end{equation*}
    as the integrand of \eqref{eq:kmd-population-version2} is non-negative. Hence, for $\mathbb P_Y$-almost every $y$, $\ee_{Y\mid X}[k_\mathcal{Y}(Y,y)]$ is $\mathbb P_X$-almost surely constant, that is, there exists $c(y)\in \mathbb R$ such that
    \begin{equation}
        c(y)  = \ee_{Y\mid X}[k_\mathcal{Y}(Y,y)] \quad  \mathbb P_X\text{-a.s.\ for } \mathbb{P}_Y \text{-a.e. } y. \label{eq:f_y = c_y}
    \end{equation}
    Integrating the last expression w.r.t.\ $\mathbb P_X$ and using the tower property of expectations, we get
     \begin{equation}
        c(y) = \mathbb E_X\!\left[ \mathbb E_{Y \mid X}[k_\mathcal{Y}(Y,y)] \right] = \mathbb E_Y[k_\mathcal{Y}(Y,y)] = \mu_{k_\mathcal{Y}}(\mathbb P_Y)(y),
    \label{eq:c(y)}
    \end{equation}
    for $\mathbb P_Y$-almost every $y$. Notice that  \eqref{eq:f_y = c_y} can be written as
    \begin{align}
        \ee_{Y\mid X}[k_\mathcal{Y}(Y,y)] = \ee_{Y\mid X}[\fip{k_{\mathcal Y}(\cdot, Y),k_{\mathcal{Y}}(\cdot,y)}{\mathcal H_{\mathcal Y}}] =  \fip{\ee_{Y\mid X}[k_{\mathcal Y}(\cdot, Y)],k_{\mathcal{Y}}(\cdot,y)}{\mathcal H_{\mathcal Y}} = 
        \mu_{k_\mathcal{Y}}(\mathbb P_{Y\mid X})(y), \label{eq:conditoinal-mean-at-y}
    \end{align} by the reproducing property, \citet[(A.32)]{steinwart08support}, and the definition of conditional mean embeddings with the reproducing property. As the l.h.s.\ of \eqref{eq:f_y = c_y} and \eqref{eq:c(y)} coincide, so do the r.h.s.\ of \eqref{eq:f_y = c_y} and \eqref{eq:conditoinal-mean-at-y}, which shows that
    \begin{equation*}
        \mu_{k_\mathcal{Y}}(\mathbb P_{Y\mid X})(y) = \mu_{k_\mathcal{Y}}(\mathbb P_Y)(y)  \quad  \mathbb P_X\text{-a.s.\ for } \mathbb{P}_Y \text{-a.e. } y,
    \end{equation*}
    which, by Remark~\ref{remark:main-assumptions}(\ref{remark:item:separation}), implies that $\mu_{k_\mathcal{Y}}(\mathbb P_{Y\mid X}) = \mu_{k_\mathcal{Y}}(\mathbb P_Y)$ holds $\mathbb P_X$-a.s. (as $\mu_{k_\mathcal{Y}}(\mathbb P_{Y\mid X}) - \mu_{k_\mathcal{Y}}(\mathbb P_Y) \in \mathcal H_{k_\mathcal{Y}}$ and $\mu_{k_\mathcal{Y}}(\mathbb P_{Y\mid X})(y) - \mu_{k_\mathcal{Y}}(\mathbb P_Y)(y) = 0$ holds $\mathbb{P}_X$-a.s.\ for $\mathbb P_Y$-a.e.\ $y$). Thus, using that $k_\mathcal{Y}$ is characteristic by Assumption~\ref{assumption:general}, 
    \begin{align}
        \mathbb P_{Y\mid X} = \mathbb{P}_Y  \quad \mathbb P_X\text{-a.s.} \label{eq:equal-px-as.}
    \end{align}

    To conclude the proof of (ii), we now show that \eqref{eq:equal-px-as.} implies the independence of $X$ and $Y$. Indeed, let $A \coloneqq B\times C \in \mathcal B(\mathcal X)\otimes \mathcal B(\mathcal Y)$ be arbitrary, where $ \mathcal B(\mathcal X)\otimes \mathcal B(\mathcal Y)$ denotes the product sigma-algebra of $\mathcal B(\mathcal X)$ and $\mathcal B(\mathcal Y)$. Then $A$, $B$, and $C$ are measurable w.r.t.\ $\mathbb P_{XY}$, $\mathbb P_X$, and $\mathbb P_Y$, respectively. Using that $\mathbbm{1}_A = \mathbbm{1}_B\mathbbm{1}_C$, by the decomposition in \citet[Theorem~10.2.1]{dudley04real}, we have
    \begin{align*}
        \mathbb{P}_{XY}(A) &= \int_{\mathcal X \times \mathcal Y} \mathbbm{1}_A(x,y) \mathrm d \mathbb P_{XY}(x,y) = \int_{\mathcal X}\int_{\mathcal Y}\mathbbm{1}_A(x,y)\mathrm d\mathbb P_{Y\mid X=x}(y)\mathrm d \mathbb{P}_{X}(x) \\
        &\overset{\eqref{eq:equal-px-as.}}{=} \int_{\mathcal X}\int_{\mathcal Y}\mathbbm{1}_A(x,y)\mathrm d\mathbb P_{Y}(y)\mathrm d \mathbb{P}_{X}(x) = \int_{\mathcal X}\int_{\mathcal Y}\mathbbm{1}_B(x)\mathbbm{1}_C(y)\mathrm d\mathbb P_{Y}(y)\mathrm d \mathbb{P}_{X}(x) \\
        &=\int_{\mathcal X}\mathbbm{1}_B(x)\mathrm d \mathbb{P}_{X}(x)\int_{\mathcal Y}\mathbbm{1}_C(y)\mathrm d\mathbb P_{Y}(y) = \mathbb P_X(B)\mathbb P_Y(C),
    \end{align*}
    that is, the joint distribution of $(X,Y)$ factorizes to the product of the marginals, showing the independence of $X$ and $Y$.
\end{proof}

\begin{proof}[Proof of (iii)]
$(\impliedby)$ Suppose that $Y = f(X)$ for some Borel measurable function $f : \mathcal X \to \mathcal Y$. Then, for any fixed $y\in \mathcal Y$, $k_{\mathcal Y}(Y,y) = k_{\mathcal Y}(f(X),y)$ is a Borel measurable function of $X$ as the composition of measurable functions is a measurable function. Then for any $y\in\mathcal Y$, by the definition of the conditional variance in (a) and the properties of conditional expectations in (b)
\begin{equation*}
    \mathbb{V}_{Y \mid X} (k_\mathcal{Y}(Y,y)) \overset{(a)}{=} \mathbb{E}_{Y\mid X}[k_\mathcal{Y}^2(Y,y)] - (\mathbb{E}_{Y\mid X}[k_\mathcal{Y}(Y,y)])^2 \overset{(b)}{=} k_\mathcal{Y}^2(Y,y) - k_\mathcal{Y}^2(Y,y) = 0 \text{ holds } \mathbb{P}_X\text{-a.s.}
\end{equation*}
hence,
\begin{equation*}
    \ee_X[\mathbb{V}_{Y \mid X} (k_\mathcal{Y}(Y,y))] = \ee_X[0] = 0.
\end{equation*}
As $y\in\mathcal Y$ was arbitrary, the numerator is zero everywhere, which implies that
\begin{align*}
D(Y,X) = 1 - \int_\mathcal{Y} \frac{\mathbb E_X\! \left  [ \mathbb{V}_{Y\mid X} \!\left(k_\mathcal{Y}(Y,y) \right) \right ]}{\mathbb{V}_Y \!\left(k_\mathcal{Y}(Y,y)\right)}\mathrm d  \mathbb P_Y(y) = 1.
\end{align*}

$(\implies)$ Our goal is to show that if $D(Y,X)=1$, then $Y\mid X = x$ is a.s.\ constant for a.e.\ $x\in\mathcal X$. This implies that $\mathbb P_{Y\mid X = x}$ is degenerate for a.e.\ $x\in \mathcal X$ and an application of Lemma~\ref{lemma:measurable-if-degenerate} then yields the claim.

Indeed,  assume that $D(Y,X)=1$ and let us show that then $Y\mid X = x$ is a.s.\ constant for a.e.\ $x\in\mathcal X$. Rearranging $D$ by using linearity of the integral, and flipping the integrals (permitted by Tonelli's theorem as all terms are non-negative), we obtain
\begin{equation*}
    D(Y,X) = 1 -  \int_\mathcal{Y} \frac{ \int_{\mathcal X}\mathbb{V}_{Y\mid X = x} \!\left[k_\mathcal{Y}(Y,y)  \right ]\mathrm d \mathbb P(x)}{\mathbb{V}_Y \!\left[k_\mathcal{Y}(Y,y)\right]}\mathrm d  \mathbb P_Y(y) 
    = 1 - \int_{\mathcal X} \int_\mathcal{Y} \frac{ \mathbb{V}_{Y\mid X = x} \!\left[k_\mathcal{Y}(Y,y)  \right ]}{\mathbb{V}_Y \!\left[k_\mathcal{Y}(Y,y)\right]}\mathrm d  \mathbb P_Y(y) \mathrm d \mathbb P(x) \overset{\text{(Ass.)}}{=} 1,
\end{equation*}
which shows that, given $\mathbb{P}_X$ almost any $x\in\mathcal X$, $\mathbb{V}_{Y\mid X = x} \!\left[k_\mathcal{Y}(Y,y)  \right ] = 0$ for $\mathbb P_Y$-a.e.\ $y$. As the variance is zero, it must hold that 
\begin{equation*}
    \mathbb P_{Y\mid X = x}(k_{\mathcal Y}(Y,y) = c_x(y)) = 1 \text{ for } \mathbb P_Y\text{-a.e.\ } y \text{ and for } \mathbb P_X\text{-a.e.\ } x,
\end{equation*}
where $c_x(y)$ denotes a constant depending on $y \in \mathcal Y$ and $x\in\mathcal X$. Notice that for $\mathbb{P}_X$ almost any $x\in\mathcal X$  this defines a function $c_x : \mathcal Y \to \R$ for $\mathbb P_Y$-a.e.\ $y$.

With this setup in place, we now show that $Y\mid X = x$ is a.s.\ constant for a.e.\ $x\in\mathcal X$. Let us argue by contradiction, that is, suppose that with positive $\mathbb{P}_X$ probability there exists $x\in\mathcal X$ such that  $Y\mid X = x$ is not a.s.\ constant. Then there exist distinct $\omega_1,\omega_2$ in the preimage of $Y\mid X = x$ satisfying $y_1 \coloneq (Y\mid X = x)(\omega_1) \neq (Y\mid X = x)(\omega_2) \eqcolon y_2$ where $y_1$ and $y_2$ are elements in sets with positive $\mathbb P_{Y\mid X=x}$-probability. But then,
\begin{align*}
    k_{\mathcal Y}(y_1,y) = c_x(y) \text{ for } \mathbb P_Y\text{-a.e.\ } y, && \text{while also} &&  k_{\mathcal Y}(y_2,y) = c_x(y) \text{ for } \mathbb P_Y\text{-a.e.\ } y,
\end{align*}
that is $k_{\mathcal Y}(y_1,y) = k_{\mathcal Y}(y_2,y)$ for $\mathbb P_Y$-a.e.\ $y$. Using the assumed continuity of $k_{\mathcal Y}$ and the full support of $\mathbb{P}_Y$, by Remark~\ref{remark:main-assumptions}(\ref{remark:item:separation}), we obtain $k_{\mathcal Y}(y_1,\cdot) = k_{\mathcal Y}(y_2,\cdot)$, contradicting the point-separating property of the characteristic $k_{\mathcal Y}$. Hence, $Y\mid X = x$ is a.s.\ constant for a.e.\ $x\in\mathcal X$.

As indicated in the beginning of the proof of this direction, $Y \mid X=x$ being a.s.\ constant for a.e.\ $x\in\mathcal X$ implies that $\mathbb P_{Y\mid X=x}$ is degenerate for a.e.\ $x\in\mathcal X$. An application of Lemma~\ref{lemma:measurable-if-degenerate} concludes the proof.
\end{proof}

\subsection{Derivation of the RKHS Estimator in Definition~\ref{definition:ridge_estimator}} \label{sec:derivation-rkhs-estimator}

We start by introducing the additional notations and background used in this section only.

For a kernel $k : \mathcal Z \times \mathcal Z \to \mathbb{R}$ and a sample $Z = (Z_1,\ldots,Z_n) \in \mathcal Z^n$ with $Z_i \sim \mathbb P_Z \in \mathcal M_1^+(\mathcal Z)$ ($i\in[n]$), denote by $S_{k,Z} : \mathcal H_k \to \mathbb{R}^n$ the sampling operator, which is defined by $h \mapsto (h(Z_i))_{i=1}^n$. It has adjoint $S_{k,Z}^* : \mathbb{R}^n \to \mathcal H_k$, $\bm \alpha = (\alpha_i)_{i=1}^n \mapsto \sum_{i=1}^n \alpha_i k(\cdot, Z_i)$ \citep{smale07learningtheory}. Furthermore, it is known that $S_{k,Z}S_{k,Z}^* = \left[k(Z_i,Z_j)\right]_{i,j=1}^n = \mathbf K_Z \in \mathbb{R}^{n\times n}$.
The centered (cross-)covariance operator associated to $Y$ and $X$ (as defined in the main part) is given by
\begin{equation*}
    C_{YX} \coloneq \int_{\mathcal X \times \mathcal Y} [k_{\mathcal Y}(\cdot,y) - \mu_{k_{\mathcal Y}}(\mathbb{P}_Y)]\otimes [k_{\mathcal X}(\cdot,x) - \mu_{k_{\mathcal X}}(\mathbb{P}_X)]\mathrm d \mathbb P_{XY}(x,y) \in \mathcal H_{\mathcal Y} \otimes \mathcal H_{\mathcal X},
\end{equation*}
where, for $f \in \mathcal H_{\mathcal Y}$ and $g\in\mathcal H_{\mathcal X}$, $f\otimes g : \mathcal H_{\mathcal X} \to \mathcal H_{\mathcal Y}$ denotes the rank-one operator defined by $h \mapsto f\langle g,h\rangle_{\mathcal H_{\mathcal X}}$; the operator is an element of the tensor product RKHS $\mathcal H_{\mathcal Y} \otimes \mathcal H_{\mathcal X}$. The centered covariance of $X$ is its centered cross-covariance with itself, that is,
\begin{equation*}
    C_X \coloneq C_{XX} = \int_{\mathcal X}[k_{\mathcal X}(\cdot,x) - \mu_{k_{\mathcal X}}(\mathbb{P}_X)]\otimes [k_{\mathcal X}(\cdot,x) - \mu_{k_{\mathcal X}}(\mathbb{P}_X)]\mathrm d \mathbb P_X(x) \in \mathcal H_{\mathcal X} \otimes \mathcal H_{\mathcal X}.
\end{equation*}
For a bounded linear operator $A$, we write $A^-$ for its (Moore-Penrose) pseudo-inverse; see, for example, \citet[Definition~2.2]{engl96regularization}.

With the notation established, let us present the derivation. We tackle the denominator and the numerator separately.
\begin{itemize}
    \item For the \textbf{denominator}, using that 
        $\mathbb{V}_Y\!\left[k_\mathcal{Y}(Y,Y_i)\right] = \mathbb E_Y \!\left[k_\mathcal{Y}^2(Y,Y_i)\right] - \left( \mathbb E_Y \!\left[k_{\mathcal{Y}}(Y,Y_i)\right]\right)^2$ and replacing all expectations with their empirical counterparts, we obtain that
    \begin{equation}
        \mathbb{V}_Y\!\left[k_\mathcal{Y}(Y,Y_i)\right] \approx \frac1n\sum_{j=1}^nk_\mathcal{Y}^2(Y_j,Y_i)-\left(\frac1n \sum_{j=1}^nk_\mathcal{Y}(Y_j,Y_i)\right)^2 = \frac1n \bm1_n^\top (\mathbf K_Y\circ \mathbf K_Y)\mathbf e_i - \left(\frac1n \bm1_n^\top \mathbf K_Y \mathbf e_i\right)^2 = V_{n,i}^{\operatorname{RKHS}}.
        \label{eq:RKHS_num}
    \end{equation}
    
    \item To derive the expression for the \textbf{numerator}, we first observe that given Assumption~\ref{assumption:general} and Assumption~\ref{ass:rkhs-estimator-assumption}, by \citet[Theorem~4.3]{klebanov20conditional}, for $\mathbb P_X$-a.e. $x \in \mathcal X$
    \begin{equation*}
        \mu_{k_{\mathcal Y}}\!\left(\mathbb P_{Y\mid X=x}\right) = \mu_{k_{\mathcal Y}}\!\left(\mathbb P_Y\right) + C_{YX}C_{X}^-\!\left(k_{\mathcal X}(\cdot,x)-\mu_{ k_{\mathcal X}}\!\left(\mathbb P_X\right)\right).
    \end{equation*}
    Using the plug-in estimator and the definition of $S_{k_{\mathcal Y},Y}^*$ for the first term and
     \citet[p.~48(bottom)]{huang22kernel} for the second term, we obtain, for $X=X_j$ ($j\in[n]$), the estimator
    \begin{equation}
        \hat \mu_{k_{\mathcal Y}}\!\left(\mathbb P_{Y\mid X=X_j}\right) \coloneq \frac1nS_{k_{\mathcal Y},Y}^*\bm1_n + S_{k_{\mathcal Y},Y}^*\tilde {\mathbf K}_X\!\left( \tilde{\mathbf K}_X+n\epsilon_n \mathbf I_n\right)^{-1}\mathbf e_j. \label{eq:conditional-mean-estimator}
    \end{equation}
    Let $\hat {\mathbb{P}}_{X,n} = \frac1n\sum_{i=1}^n\delta_{X_i}$ be the empirical measure associated to the observed $X_i$-s. Coming back to the quantity which we want to estimate, notice that, for any $i,j\in[n]$, we have
    \begin{equation*}
        \mathbb{V}_{Y\mid X = X_j}\!\left[k_\mathcal{Y}(Y,Y_i)\right] = \underbrace{\mathbb E_{Y\mid X=X_j}\!\left[k_\mathcal{Y}^2(Y,Y_i) \right]}_{\eqcolon t_1} - \big(\underbrace{\mathbb E_{Y\mid X=X_j}\!\left[k_\mathcal{Y}(Y,Y_i) \right]}_{\eqcolon t_2}\big)^2,
    \end{equation*}
    and will therefore estimate
    \begin{equation*}
        \mathbb{E}_{\hat {\mathbb{P}}_{X,n}}\!\left[\mathbb{V}_{Y\mid X = X_j}\!\left[k_\mathcal{Y}(Y,Y_i)\right]\right] = \mathbb{E}_{\hat {\mathbb{P}}_{X,n}}[t_1] - \mathbb{E}_{\hat {\mathbb{P}}_{X,n}}\big[t_2^2\big];
    \end{equation*}
    we will use \eqref{eq:conditional-mean-estimator} to approximate $t_1$ and $t_2$, respectively. Having obtained these approximations, we approximate the expectation of $t_1$ (resp.\ the expectation of $t_2^2$) by plug-in estimation.
    \begin{itemize}
        \item \textbf{Term $t_1$.} One has for $i,j \in[n]$ that
    \begin{align*}
         \mathbb E_{Y\mid X=X_j}\!\left[k_\mathcal{Y}^2(Y,Y_i) \right] &\overset{(a)}{=}  \mathbb E_{Y\mid X=X_j}\!\left[\fip{ k_{\mathcal Y}(\cdot,Y),k_{\mathcal Y}(\cdot,Y_i)}{\mathcal H_{\mathcal Y}}^2\right] \\
         &\overset{(b)}{=}\mathbb E_{Y\mid X=X_j}\!\left[\fip{k_{\mathcal Y}(\cdot,Y)\otimes k_{\mathcal Y}(\cdot,Y), k_{\mathcal Y}(\cdot,Y_i)\otimes k_{\mathcal Y}(\cdot,Y_i)}{\mathcal H_{\mathcal Y}\otimes \mathcal H_{\mathcal Y}}\right] \\
         &\overset{(c)}{=}\fip{\mathbb E_{Y\mid X=X_j}\!\left[k_{\mathcal Y}(\cdot,Y) \otimes k_{\mathcal Y}(\cdot,Y)\right],k_\mathcal{Y}(\cdot,Y_i)\otimes k_\mathcal{Y}(\cdot,Y_i)}{\mathcal{H}_{\mathcal Y}\otimes \mathcal{H}_{\mathcal Y}}\\
         &\overset{(d)}{=}\fip{\mu_{k_{\mathcal Y} \otimes k_{\mathcal Y}}\!\left(\mathbb P_{Y\mid X=X_j}\right),k_\mathcal{Y}(\cdot,Y_i)\otimes k_\mathcal{Y}(\cdot,Y_i)}{\mathcal{H}_{\mathcal Y}\otimes \mathcal{H}_{\mathcal Y}},
    \end{align*}
    where the reproducing property implies (a), the properties of tensor products yield (b), and \citet[(A.32)]{steinwart08support} allows flipping the expectation and inner product in (c). We apply the definition of the conditional mean embedding with the product kernel in (d); it is characteristic by Assumption~\ref{ass:rkhs-estimator-assumption} and continuous, which allows the estimation by adapting \eqref{eq:conditional-mean-estimator}. 
    
    Indeed, replacing $\mu_{k_{\mathcal Y} \otimes k_{\mathcal Y}}\!\left(\mathbb P_{Y\mid X=X_j}\right)$ by $\hat \mu_{k_{\mathcal Y} \otimes k_{\mathcal Y}}\!\left(\mathbb P_{Y\mid X=X_j}\right)$, we get
    \begin{align}
    \MoveEqLeft t_1 \approx  \fip{\hat \mu_{k_{\mathcal Y} \otimes k_{\mathcal Y}}\!\left(\mathbb P_{Y\mid X=X_j}\right),k_\mathcal{Y}(\cdot,Y_i)\otimes k_\mathcal{Y}(\cdot,Y_i)}{\mathcal{H}_{\mathcal Y}\otimes \mathcal{H}_{\mathcal Y}} 
    \overset{(a)}{=} \fip{\hat \mu_{k_{\mathcal Y} \otimes k_{\mathcal Y}}\!\left(\mathbb P_{Y\mid X=X_j}\right),S_{k_ {\mathcal Y} \otimes k_{\mathcal Y},Y}^*\mathbf e_i}{\mathcal{H}_{\mathcal Y}\otimes\mathcal{H}_{\mathcal Y}} \notag \\
        &\overset{(b)}{=} \fip{S_{k_ {\mathcal Y} \otimes k_{\mathcal Y},Y} \hat \mu_{k_{\mathcal Y} \otimes k_{\mathcal Y}}\!\left(\mathbb P_{Y\mid X=X_j}\right),\mathbf e_i}{\R^n} \notag \\
        &\overset{\eqref{eq:conditional-mean-estimator}}{=} \fip{\frac1nS_{k_ {\mathcal Y} \otimes k_{\mathcal Y},Y}S_{k_ {\mathcal Y} \otimes k_{\mathcal Y},Y}^*\bm1_n + S_{k_ {\mathcal Y} \otimes k_{\mathcal Y},Y}S_{k_ {\mathcal Y} \otimes k_{\mathcal Y},Y}^*\tilde {\mathbf K}_X\!\left( \tilde{\mathbf K}_X+n\epsilon_n \mathbf I_n\right)^{-1}\mathbf e_j,\mathbf e_i}{\R^n} \notag \\
        &\overset{(c)}{=} \fip{\frac1n\!\left(\mathbf K_Y\circ \mathbf K_Y\right)\bm1_n + \left(\mathbf K_Y\circ \mathbf K_Y\right)\tilde {\mathbf K}_X\!\left( \tilde{\mathbf K}_X+n\epsilon_n \mathbf I_n\right)^{-1}\mathbf e_j,\mathbf e_i}{\R^n}, \label{eq:t1-intermediate-step}
    \end{align}
    by using the definition of the sampling operator in (a), the defining property of adjoint operators in (b), and  $S_{k_ {\mathcal Y} \otimes k_{\mathcal Y},Y}S_{k_ {\mathcal Y} \otimes k_{\mathcal Y},Y}^* = \mathbf K_Y\circ \mathbf K_Y$ in (c). 

    Let us now consider the outer expectation $\mathbb{E}_{\hat{\mathbb{P}}_{X,n}}$ of $t_1$'s approximation. We sum \eqref{eq:t1-intermediate-step} over $j \in [n]$, divide by $n$, and, by the linearity of the inner product and as $\frac{1}{n}\sum_{j=1}^n\mathbf e_j = \frac1n \mathbf 1_n$, obtain
    \begin{equation}
        \mathbb{E}_{\hat{\mathbb{P}}_{X,n}}[t_1] \approx \fip{\frac1n\!\left(\mathbf K_Y\circ \mathbf K_Y\right)\bm1_n + \frac 1n \left(\mathbf K_Y\circ \mathbf K_Y\right)\tilde {\mathbf K}_X\!\left( \tilde{\mathbf K}_X+n\epsilon_n \mathbf I_n\right)^{-1}\bm 1_n,\mathbf e_i}{\R^n} \label{eq:term-t1-final}
    \end{equation}
    
        \item \textbf{Term $t_2$.} We have for $i,j\in[n]$ that
\begin{align*}
      \mathbb E_{Y\mid X=X_j}\!\left[k_\mathcal{Y}(Y,Y_i) \right] &\overset{(a)}{=}\mathbb E_{Y\mid X=X_j}\!\left[\fip{k_{\mathcal Y}(\cdot,Y),k_{\mathcal Y}(\cdot,Y_i)}{\mathcal H_{\mathcal Y}}\right] 
      \overset{(b)}{=} \fip{\mathbb E_{Y\mid X=X_j}\!\left[k_{\mathcal Y}(\cdot,Y)\right],k_{\mathcal Y}(\cdot,Y_i)}{\mathcal H_{\mathcal Y}} \\
      &\overset{(c)}{=} \fip{\mu_{k_{\mathcal Y}}\!\left(\mathbb P_{Y\mid X=X_j}\right),k_\mathcal{Y}(\cdot,Y_i)}{\mathcal{H}_{\mathcal Y}},
\end{align*}
by using the reproducing property in (a), flipping the integral and the inner product using \citet[(A.32)]{steinwart08support} in (b), and by the definition of the conditional mean embedding in (c).

Again replacing $\mu_{k_{\mathcal Y}}\!\left(\mathbb P_{Y\mid X=X_j}\right)$ by its empirical counterpart $\hat \mu_{k_{\mathcal Y}}\!\left(\mathbb P_{Y\mid X=X_j}\right)$, we obtain the approximation
    \begin{align}
        t_2 &\approx
         \fip{\hat \mu_{k_{\mathcal Y}}\!\left(\mathbb P_{Y\mid X=X_j}\right),k_{\mathcal Y}(\cdot,Y_i)}{\mathcal{H}_{\mathcal Y}} \overset{(a)}{=} \fip{\hat \mu_{k_{\mathcal Y}}\!\left(\mathbb P_{Y\mid X=X_j}\right),S_{k_{\mathcal Y},Y}^*\mathbf e_i}{\mathcal{H}_{\mathcal Y}} \notag \\
         &\overset{(b)}{=}  \fip{S_{k_{\mathcal Y},Y}\hat \mu_{k_{\mathcal Y}}\!\left(\mathbb P_{Y\mid X=X_j}\right),\mathbf e_i}{\R^n} \notag \\
        &\overset{\eqref{eq:conditional-mean-estimator}}{=} \fip{\frac1nS_{k_{\mathcal Y},Y}S_{{k_{\mathcal Y}},Y}^*\bm1_n + S_{k_ {\mathcal Y},Y}S_{k_{\mathcal Y},Y}^*\tilde {\mathbf K}_X\!\left( \tilde{\mathbf K}_X+n\epsilon_n \mathbf I_n\right)^{-1}\mathbf e_j,\mathbf e_i}{\R^n} \notag \\
        &\overset{(c)}{=} \fip{\frac1n\mathbf K_Y\bm1_n + \mathbf K_Y\tilde {\mathbf K}_X\!\left( \tilde{\mathbf K}_X+n\epsilon_n \mathbf I_n\right)^{-1}\mathbf e_j,\mathbf e_i}{\R^n}
        \stackrel{(d)}{=}\fip{\frac1n\mathbf K_Y\bm1_n + \mathbf M \mathbf e_j,\mathbf e_i}{\R^n} \nonumber\\
        & \stackrel{(e)}{=} \frac{1}{n} \mathbf e_i^\top \mathbf K_Y\bm1_n + \mathbf e_i^\top \mathbf M \mathbf e_j, \label{eq:derivation-term-t2-final}
    \end{align}
    where (a) is by the definition of the sampling operator, (b) comes from the definition of the adjoint operator, and in (c), we use that $S_{k_{\mathcal Y},Y}S_{k_{\mathcal Y},Y}^* = \mathbf K_Y$, (d) comes from the definition of $\mathbf M$ in  \eqref{eq:gram-matrices}, (e) follows from the definition and the linearity of the inner product in $\mathbb R^n$. Squaring \eqref{eq:derivation-term-t2-final} gives
    \begin{align}
         t_2^2 &\approx \frac{1}{n^2}\mathbf e_i^\top\mathbf K_Y\bm1_n\mathbf e_i^\top\mathbf K_Y\bm1_n + \frac2n \mathbf e_i^\top\mathbf K_Y\bm1_n\mathbf e_i^\top\mathbf M\mathbf e_j + \mathbf e_i^\top\mathbf M\mathbf e_j\mathbf e_i^\top\mathbf M\mathbf e_j \notag \\
        &= \frac{1}{n^2}\mathbf e_i^\top\mathbf K_Y\bm1_n\bm1_n^\top\mathbf K_Y\mathbf e_i + \frac2n \mathbf e_i^\top\mathbf K_Y\bm1_n\mathbf e_j^\top\mathbf M^\top\mathbf e_i + \mathbf e_i^\top\mathbf M\mathbf e_j\mathbf e_j^\top\mathbf M^\top\mathbf e_i, \label{eq:t2-squared-intermediate-step}
    \end{align}
    where we use that a real number equals its transpose and the symmetry of Gram matrices. 
    
    To get the outer expectation, we sum \eqref{eq:t2-squared-intermediate-step} over $j \in [n]$, divide by $n$, and obtain
    \begin{equation}
        \mathbb{E}_{\hat {\mathbb{P}}_{X,n}}\!\left[t_2^2\right] \approx \frac{1}{n^2}\mathbf e_i^\top\mathbf K_Y\bm1_n\bm1_n^\top\mathbf K_Y\mathbf e_i + \frac{2}{n^2} \mathbf e_i^\top\mathbf K_Y\bm1_n\bm 1_n^\top\mathbf M^\top\mathbf e_i + \frac1n\mathbf e_i^\top\mathbf M\mathbf I_n\mathbf M^\top\mathbf e_i. \label{eq:derivation-t2-final}
    \end{equation}
    \end{itemize}
Hence, subtracting \eqref{eq:derivation-t2-final} from \eqref{eq:term-t1-final} and rearranging, we have for the numerator
    \begin{equation}
        \begin{aligned}
        E_{n,i}^{\operatorname{RKHS}}
        &= \frac1n\Big\langle\mathbf e_i,
        \left(\mathbf K_Y\circ \mathbf K_Y\right)\bm1_n 
        + \left(\mathbf K_Y\circ \mathbf K_Y\right)
        \tilde {\mathbf K}_X
        \left( \tilde{\mathbf K}_X+n\epsilon_n \mathbf I_n\right)^{-1}
        \bm 1_n \\
        &\quad-\frac{1}{n}\mathbf K_Y\bm1_n\bm1_n^\top\mathbf K_Y\mathbf e_i 
        - \frac{2}{n} \mathbf K_Y\bm1_n\bm 1_n^\top\mathbf M^\top\mathbf e_i 
        - \mathbf M\mathbf M^\top\mathbf e_i
        \Big\rangle_{\R^n}.
        \end{aligned}
    \label{eq:RKHS_denom}
    \end{equation}

\end{itemize}
Combining the numerator \eqref{eq:RKHS_denom} and denominator \eqref{eq:RKHS_num} concludes the derivation.

\subsection{Proof of Theorem \ref{theorem: rate of convergence}} \label{sec:proof-rates-knn}
\begin{proof}
In the sequel we let $C$ denote an absolute constant which may change from line to line, and we use the symbol $\cleq$ to indicate weak inequality up to an absolute constant which again may change from line to line. We will also denote the complement of an event $A$ by $A^c$. Finally, we recall that in the main text the kernel $k_\mathcal{Y}$ is assumed to be bounded and let $\kappa := \sup_{y, y'\in \mathcal Y}k_{\mathcal{Y}}(y, y')$. For each $i \in [n]$, remember that 
\begin{align*}
& E_{n,i}^{\text{K-NN}} = \frac{1}{2 K (n - 1)}\sum_{j\neq i}\sum_{k\in\mathcal{N}_j^{\setminus i}} \big(k_{\mathcal{Y}}(Y_j, Y_i) - k_{\mathcal{Y}}(Y_k, Y_i)\big)^2,
\\
& V_{n,i}^{\text{K-NN}} = \frac{1}{n-1}
\sum_{j \neq i}
k_\mathcal{Y}^2(Y_j, Y_i)
-
\Big[
\frac{1}{n-1}
\sum_{j \neq i}
k_\mathcal{Y}(Y_j, Y_i)
\Big]^2.
\end{align*}
Additionally for $i\in [n]$ let 
\begin{align*}
V_i := \mathbb{V}_Y^{\setminus i}[k_\mathcal{Y}(Y, Y_i)],\qquad\qquad E_i := \ee_X^{\setminus i}\mathbb{V}_{Y\mid X}^{\setminus i}[k_\mathcal{Y}(Y, Y_i)],
\end{align*}
where $(X_i, Y_i)$ is treated as a fixed observed value and the expectation is taken over $(X, Y)$. With this, let
\begin{align*}
    Q := \int_\mathcal{Y} \frac{\mathbb{E}_X \left [ \mathbb{V}_{Y|X} \left [ k_\mathcal{Y} \left ( Y, y \right ) \right ] \right ]}{\mathbb{V}_Y [k_\mathcal{Y}(Y,y)]} \mathrm{d} \mathbb{P}_Y (y),\qquad \hat{Q}_n:= \frac{1}{n} \sum_{i=1}^n \frac{E_{n,i}^{\text{K-NN}}}{V_{n,i}^{\text{K-NN}}},\qquad\hat{Q}_n' := \frac{1}{n} \sum_{i=1}^n \frac{E_{i}^{\text{K-NN}}}{V_i}, \qquad\hat{Q}_n'' := \frac{1}{n} \sum_{i=1}^n \frac{E_i}{V_i}.
\end{align*}
Note that $\abs{\hat{D}_n^{\text{K-NN}} - D} = \abs{Q - \hat{Q}_n}$. Introduce the event 
\begin{align}
\Omega_n = \bigcap_{i=1}^n \left \{\abs{V_{n,i}^{\text{K-NN}} - V_i} \leq \frac{1}{2} V_i\right \}.
\label{equation: good variance event}
\end{align}
Using triangle inequality, we have
\begin{align}
\abs{Q - \hat{Q}_n} \leq \abs{Q - \hat{Q}_n''} + \abs{\hat{Q}_n'' - \hat{Q}_n'} + \abs{\hat{Q}_n' - \hat{Q}_n}.
\label{equation: Q tringale bound}
\end{align}
Since $V_{n,i}^{\text{K-NN}}$ and $V_i$ are non-negative, $\Omega_n$ implies $\frac{1}{2} V_i \geq V_{n,i}^{\text{K-NN}}$ for all $i \in [n]$. Therefore, conditional on the event $\Omega_n$ it holds that 
\begin{align*}
\abs{\hat{Q}_n - \hat{Q}_n'} = \left | \frac{1}{n} \sum_{i=1}^n (V_{n,i}^{\text{K-NN}} - V_i) \frac{E_{n,i}^{\text{K-NN}}}{V_i V_{n,i}^{\text{K-NN}}} \right | \leq \frac{1}{n} \sum_{i=1}^n \abs{V_{n,i}^{\text{K-NN}} - V_i} \frac{E_{n,i}^{\text{K-NN}}}{V_i  V_{n,i}^{\text{K-NN}}} \leq \frac{2}{n} \sum_{i=1}^n \abs{V_{n,i}^{\text{K-NN}} - V_i} \frac{E_{n,i}^{\text{K-NN}}}{V_i^2} \coloneq \tilde{Q}_n. 
\end{align*}
Therefore, for any $\delta > 0$ it holds that 
\begin{align}
\pp(\abs{\hat{Q}_n - Q} > \delta) & \leq \mathbb{P}(\abs{Q - \hat{Q}_n''} + \abs{\hat{Q}_n'' - \hat{Q}_n'} + \abs{\hat{Q}_n' - \hat{Q}_n} > \delta \mid \Omega_n) \pp(\Omega_n) + \pp(\Omega_n^c) \nonumber \\
& \leq \pp(\abs{Q - \hat{Q}_n''} + \abs{\hat{Q}_n'' -  \hat{Q}_n'} + \tilde{Q}_n  > \delta) + \pp(\Omega_n^c).
\label{equation: Q tilde prob bound}
\end{align}
In the following we bound each of the terms appearing in \eqref{equation: Q tilde prob bound}, where we frequently use the following equality:
\begin{equation}
\mathbb{E}[|Z|] = \int_0^\infty \mathbb{P} (|Z| > t) \mathrm{d}t, 
\label{equation: expectation tail integral equality}
\end{equation}
which holds for any real-valued random variable $Z$. For the first term in \eqref{equation: Q tilde prob bound} observe that 
by the law of total variance, for $i\in [n]$ we have $E_i/V_i \in [0,1]$. Additionally $E_i/V_i$ is an i.i.d.\ sequence. Therefore by Hoeffding's inequality we have
\begin{align*}
    \pp(\abs{\hat{Q}_n'' - \mathbb{E} [\hat{Q}_n''] } > t)\leq  2e^{-2nt^2}. 
\end{align*}
Moreover $\mathbb{E}[\hat{Q}_n''] = \frac{1}{n} \sum_{i=1}^n \ee[E_i/V_i] = Q$. Therefore 
\begin{align*}
\mathbb{E}\abs{Q - \hat{Q}_n''} = \mathbb{E} \abs{\hat{Q}_n'' - \mathbb{E}[\hat{Q}_n'']} = \int_0^\infty \mathbb{P}(\abs{\hat{Q}_n'' - \mathbb{E} [\hat{Q}_n'']} > t ) \mathrm{d} t \leq 2 \int_0^\infty e^{ -2nt^2} \mathrm{d}t \lesssim \frac{1}{\sqrt{n}},
\end{align*}
and by Markov's inequality $\abs{Q - \hat{Q}_n''} = \mathcal{O}_\mathbb{P}(n^{-1/2})$. For the second term in \eqref{equation: Q tilde prob bound}, observe first that for each $i \in [n]$, due to the boundedness of $k_\mathcal{Y}$ the statistic $V_{n,i}^{\text{K-NN}}$ enjoys the bounded difference property (see Definition~\ref{equation: bounded difference property}) with absolute finite difference bounded from above by $(n-1)^{-1}$ up to a multiplicative constant depending on $\kappa$. Therefore, by Theorem~\ref{theorem: bounded difference} we have that 
\begin{equation}
\mathbb{P}^{\setminus i}(\abs{V_{n,i}^{\text{K-NN}} - \mathbb{E}^{\setminus i}[V_{n,i}^{\text{K-NN}}] } > t) \leq 2 \exp(-C n t^2) \quad \forall t > 0,
\label{equation: varaince exponential concentration}
\end{equation}
where the term on the right does not depend on $Y_i$. Additionally for fixed $i$ note that $\{ k_{\mathcal{Y}} (Y_j, Y_i) \}_{j \neq i }$ is an i.i.d.\ sequence of bounded random variables. Therefore, by standard result, the bias of the estimated variance for each $i \in [n]$ is 
\begin{equation}
\abs{\mathbb{E}^{\setminus i}[V_{n,i}^{\text{K-NN}}] - V_i} = \frac{1}{n-1} V_i \lesssim \frac{1}{n}, 
\label{equation: varaince bias}
\end{equation}
where the final inequality follows from the boundedness of $k_{\mathcal{Y}}$. Consequently we have that 
\begin{subequations}
\begin{align}
\mathbb{E}[\tilde{Q}_n] & \cleq \frac{1}{n} \sum_{i=1}^n \mathbb{E} \left[\frac{\abs{V_{n,i}^{\text{K-NN}} - V_i}}{V_i^2} \right ] \label{equation: Q tilde expectation bound i}\\
& \leq \frac{1}{n} \sum_{i=1}^n \mathbb{E} \left [ \frac{\abs{V_{n,i}^{\text{K-NN}} - \ee[V_{n,i}^{\text{K-NN}}]}}{V_i^2} + \frac{\abs{\ee[V_{n,i}^{\text{K-NN}}] - V_i}}{V_i^2} \right ] \nonumber \\
& \cleq \frac{1}{n}\int_{\mathcal{Y}} \Big( \mathbb{V}_{Y}\big(k_\mathcal{Y}(Y,y)\big)\Big)^{-2} \mathrm{d} \mathbb{P}_Y (y) \nonumber \\
& \quad + \frac{1}{n} \sum_{i=1}^n  \ee\Big[ \int_0^\infty \pp^{\setminus i}\big(\abs{V_{n,i}^{\text{K-NN}} - V_i} > t V_i^2 \big) \mathrm{d} t \Big]  \label{equation: Q tilde expectation bound ii} \\
& \leq \frac{1}{n}\int_{\mathcal{Y}} \Big(\mathbb{V}_{Y}\big(k_\mathcal{Y}(Y,y)\big)\Big)^{-2} \mathrm{d} \mathbb{P}_Y (y) + \int_{\mathcal{Y}} \int_{0}^{\infty}\exp\Big(-C n t^2\big(\mathbb{V}_{Y}[k_\mathcal{Y}(Y,y)]\big)^4\Big)\mathrm{d} t \mathrm{d} \mathbb{P}_{Y} (y) \label{equation: Q tilde expectation bound iii} \\
& \cleq \Big\{ \frac{1}{n} + \frac{1}{\sqrt{n}}\Big\} \int_{\mathcal{Y}} \big(\mathbb{V}_{Y}[k_\mathcal{Y}(Y,y)] \big)^{-2} \mathrm{d} \mathbb{P}_Y (y) \cleq \frac{1}{\sqrt{n}}.  \label{equation: Q tilde expectation bound iv}
\end{align}
\end{subequations}
In particular \eqref{equation: Q tilde expectation bound i} follows from the the boundedness of $E_{n,i}^{\text{K-NN}}$ for all $i \in [n]$, which holds due to the boundedness of $k_{\mathcal{Y}}$, \eqref{equation: Q tilde expectation bound ii} follows \eqref{equation: expectation tail integral equality}, \eqref{equation: Q tilde expectation bound iii} follows from \eqref{equation: varaince exponential concentration}, and \eqref{equation: Q tilde expectation bound iv} follows from part (i) of Assumption~\ref{assumption: properties of P}. Consequently it holds that $\tilde{Q}_n = \mathcal{O}_{\mathbb{P}} (n^{-1/2})$. For term $\abs{\hat{Q}_n' - \hat{Q}_n''}$ using triangle inequality and then Jensen's inequality we have
\begin{align*}
    \ee[\abs{\hat{Q}_n' - \hat{Q}_n''}] &\leq  \ee\left[\frac{\sqrt{\mathbb{V}^{\setminus i}(E_{n,i}^{\text{K-NN}})}}{V_i}\right] + \ee\left[\frac{\abs{\ee^{\setminus i}[E_{n,i}^{\text{K-NN}}] - E_i}}{V_i}\right].
\end{align*}
For the variance term, note that by Efron-Stein inequality we have
\begin{align*}
    \mathbb{V}^{\setminus i}\left(E_{n,i}^{\text{K-NN}}\right) \leq \frac{1}{2}\sum_{\ell \neq i}\ee^{\setminus i}\left[\left(E_{n,i}^{\text{K-NN}} - E_{n,i, \ell}^{\text{K-NN}}\right)^2\right],
\end{align*}
where $E_{n,i, \ell}^{\text{K-NN}}$ is calculated using sample $\{(X_i, Y_i)\}_{i\neq \ell}\cup \{(X_\ell', Y_\ell')\}$, such that $(X_\ell', Y_\ell')$ is an i.i.d.\  copy of $(X_\ell, Y_\ell)$. Note that in replacing $(X_\ell, Y_\ell)$ by the i.i.d.\  copy $(X_\ell', Y_\ell')$, observation $\ell$ appears in two roles:
\begin{enumerate}
    \item As center $j = \ell$: in this case all $K$ terms in the neighbourhood $\ell$ can change which contributes to at most $8K\kappa^2/2K(n - 1) = 4\kappa^2/(n - 1)$,
    \item As neigbour $k = \ell$: in this case by Assumption~\ref{assumption: properties of KNN}, $X_\ell$ is at most in the neighbourhood of $C K$ other $X_j$'s and the therefore the total contributed value of this change is $8C K\kappa^2/2K(n - 1) = 4C\kappa^2/(n - 1)$.
\end{enumerate}
Therefore 
\begin{align*}
    \abs{E_{n,i}^{\text{K-NN}} - E_{n,i, \ell}^{\text{K-NN}}} \leq \frac{4\kappa^2}{n - 1}\left(1 + C\right).
\end{align*}
Hence 
\begin{align*}
    \mathbb{V}^{\setminus i}\left(E_{n,i}^{\text{K-NN}}\right) \lesssim \frac{1}{n}.
\end{align*}
This gives us
\begin{align*}
    \ee\left[\frac{\sqrt{\mathbb{V}^{\setminus i}(E_{n,i}^{\text{K-NN}})}}{V_i}\right] \lesssim \frac{1}{\sqrt{n}}\ee\left[\frac{1}{V_i}\right].
\end{align*}
For the bias term, note that 
\begin{align*}
     \abs{\ee^{\setminus i}[E_{n,i}^{\text{K-NN}}] - E_i} &= \left |\ee^{\setminus i}\left[\frac{1}{K}\sum_{k\in\mathcal{N}_i^{\setminus j}}(k_\mathcal{Y}(Y_j, Y_i) - k_\mathcal{Y}(Y_k, Y_i))^2 - \ee^{\setminus i}_{Y\mid X}\left[(k_\mathcal{Y}(Y, Y_i) - k_\mathcal{Y}(Y', Y_i))^2\right]\right]\right|,
\end{align*}
where $Y$ and $Y'$ are i.i.d.\ conditional on $X$. Let 
\begin{align*}
    m_1(x):=\ee^{\setminus i}[k_\mathcal{Y}(Y, Y_i) \mid X=x], \quad m_2(x):=\mathbb{E}[k_\mathcal{Y}^2(Y, Y_i) \mid X=x].
\end{align*}
Then 
\begin{align*}
    \ee^{\setminus i}\left[\left(k_\mathcal{Y}(Y, Y_i)-k_\mathcal{Y}(Y', Y_i)\right)^2 \mid X=x\right] = 2 \mathbb{V}^{\setminus i}(k_\mathcal{Y}(Y, Y_i) \mid X=x) = 2(m_2(x)-m_1(x)^2),
\end{align*}
and
\begin{align*}
    \ee^{\setminus i}\left[\left(k_\mathcal{Y}(Y_j, Y_i)-k_\mathcal{Y}(Y_k, Y_i)\right)^2 \mid X_j = x, X_k=x^{\prime}\right] = m_2(x) + m_2(x^{\prime})-2 m_1(x) m_1(x^{\prime}).
\end{align*}
Let 
\begin{align*}
    \Delta\left(x, x^{\prime}\right)&:=\ee^{\setminus i}\left[\left(k_\mathcal{Y}(Y_j, Y_i)-k_\mathcal{Y}(Y_k, Y_i)\right)^2 \mid x, x^{\prime}\right] - \ee^{\setminus i}\left[\left(k_\mathcal{Y}(Y, Y_i) - k_\mathcal{Y}(Y', Y_i)\right)^2 \mid X=x\right] \\
    &= \left(m_2(x^{\prime})-m_2(x)\right) - 2 m_1(x)\left(m_1(x^{\prime}) - m_1(x)\right),
\end{align*}
which gives us 
\begin{align*}
    \abs{\Delta\left(x, x^{\prime}\right)} \leq \abs{m_2(x^{\prime})-m_2(x)} - 2 \kappa \abs{m_1(x^{\prime}) - m_1(x)} \lesssim d_{\mathcal{X}}(x, x^{\prime})^\beta.
\end{align*}
where the last inequality is by Assumption~\ref{assumption: properties of P} . Then by taking average over all the neighbours and using Lemma \ref{lmm:maxDist} together with Assumption~\ref{assumption: properties of P} we have 
\begin{align*}
    \abs{\ee^{\setminus i}[E_{n,i}^{\text{K-NN}}] - E_i} \lesssim \ee^{\setminus i}\left[\max_{k\in\mathcal{N}_j^{\setminus i}}d_{\mathcal{X}}^\beta(X_k, X_i)\right] \lesssim \left[\left(\frac{K}{n}\right)^{\beta / d}+n^{-2}(\log n)^{\beta / \alpha}\right].
\end{align*}
Consequently,
\begin{align}\label{eq:EhatE}
    \ee\left[\frac{\abs{\ee^{\setminus i}[E_{n,i}^{\text{K-NN}}] - E_i}}{V_i}\right] \lesssim \left[\left(\frac{K}{n}\right)^{\beta / d}+n^{-2}(\log n)^{\beta / \alpha}\right] \ee\left[\frac{1}{V_i}\right] \lesssim \left[\left(\frac{K}{n}\right)^{\beta / d}+n^{-2}(\log n)^{\beta / \alpha}\right].
\end{align}
Finally for $\pp(\Omega_n^c)$, we have that 
\begin{subequations}
\begin{align}
\mathbb{P}(\Omega_n^c) & \leq \sum_{i=1}^n \ee\left[\mathbb{P}^{\setminus i}\left(\abs{\hat{V}_{n,i}^{\text{K-NN}} - V_i} \geq \frac{1}{2} V_i\right)\right]  \nonumber \\
& \leq \sum_{i=1}^n \ee\left[\mathbb{P}^{\setminus i} \left(\abs{ \hat{V}_{n,i}^{\text{K-NN}} - \mathbb{E}^{\setminus i}[\hat{V}_{n,i}^{\text{K-NN}}]} + \abs{\mathbb{E}^{\setminus i}[\hat{V}_{n,i}^{\text{K-NN}}] - V_i} \geq \frac{1}{2} V_i \right )\right] \nonumber 
\end{align}
\end{subequations}
Note that by \eqref{equation: varaince bias} we have 
\begin{align*}
    \mathbb{P}^{\setminus i} \left(\abs{ \hat{V}_{n,i}^{\text{K-NN}} - \mathbb{E}^{\setminus i}[\hat{V}_{n,i}^{\text{K-NN}}]} + \abs{\mathbb{E}^{\setminus i}[\hat{V}_{n,i}^{\text{K-NN}}] - V_i} \geq \frac{1}{2} V_i \right) \leq \mathbb{P}^{\setminus i} \left(\abs{ \hat{V}_{n,i}^{\text{K-NN}} - \mathbb{E}^{\setminus i}[\hat{V}_{n,i}^{\text{K-NN}}]} \geq \left(\frac{1}{2} - \frac{1}{n-1}\right) V_i \right),
\end{align*}
therefore for $n\geq 4$ we can write
\begin{subequations}
\begin{align}
\mathbb{P}(\Omega_n^c) & \leq \sum_{i=1}^n \ee\left[\mathbb{P}^{\setminus i} \left(\abs{\hat{V}_{n,i}^{\text{K-NN}} - \mathbb{E}[\hat{V}_{n,i}^{\text{K-NN}}]} \geq \frac{1}{6}V_i \right)\right] \label{equation: Omega neg boud i} \\
& \leq n \ee\left[\exp\left(-\frac{1}{36} C n V_i^2\right)\right]\label{equation: Omega neg boud ii} \\
& \lesssim \frac{1}{\sqrt{n}} \ee\left[\frac{1}{V_i^{3}}\right] \lesssim \frac{1}{\sqrt{n}}, \label{equation: Omega neg boud iii}
\end{align}
\end{subequations}
where \eqref{equation: Omega neg boud ii} follows from \eqref{equation: varaince exponential concentration}, and finally from $e^{-x} < x^{-3/2}$ for $x > 0$, we have \eqref{equation: Omega neg boud iii}.

We already showed $\abs{Q - \hat{Q}_n''} = \mathcal{O}_\pp(n^{-1/2})$, $\tilde{Q}_n = \mathcal{O}_\pp(n^{-1/2})$, and $\pp(\Omega_n^c) \lesssim n^{-1/2}$, therefore by \eqref{eq:EhatE} we have
\begin{align*}
    \pp(\abs{\hat{Q}_n - Q} > \delta) &\leq \pp(\abs{Q - \hat{Q}_n''} + \abs{\hat{Q}_n'' -  \hat{Q}_n'} + \tilde{Q}_n  > \delta) + \pp(\Omega_n^c) \\
    &\lesssim \pp(\abs{Q - \hat{Q}_n''} > \delta /3) + \pp(\abs{\hat{Q}_n'' -  \hat{Q}_n'} > \delta/3) + \pp(\tilde{Q}_n  > \delta/3) + \frac{1}{\sqrt{n}} \\
    &\lesssim \frac{1}{\delta}\left(\frac{1}{\sqrt{n}}+\left(\frac{K}{n}\right)^{\beta / d}+(\log n)^{\beta / \alpha} n^{-2}\right),
\end{align*}
which gives us 
\begin{equation*}
    \abs{\hat{D}^{\text{K-NN}}(X,Y) - D(X,Y)} = \mathcal{O}_{\mathbb{P}}\left(\frac{1}{\sqrt{n}}+\left(\frac{K}{n}\right)^{\beta / d}+(\log n)^{\beta / \alpha} n^{-2}\right),
\end{equation*}
and completes the proof. 
\end{proof}

\section{Auxiliary Results}

This section collects our auxiliary results, used in the proofs of the results stated in the main text.

\begin{lemmaA}[Alternative expression]\label{prop:kmd-population-version2}   In the setting of Definition~\ref{def:KIR}, it holds that
    \begin{equation}
D(Y,X) = \int_\mathcal{Y} \frac{\mathbb V_X\! \left  [ \mathbb{E}_{Y\mid X} \!\left[k_\mathcal{Y}(Y,y) \right] \right ]}{\mathbb{V}_Y \!\left[k_\mathcal{Y}(Y,y)\right]}\mathrm d  \mathbb P_Y(y). \label{eq:kmd-population-version2}
\end{equation}  
\end{lemmaA}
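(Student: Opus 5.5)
The plan is to prove the identity pointwise in $y$, using the law of total variance, and then integrate. Fix $y\in\mathcal Y$ and set $Z_y \coloneq k_{\mathcal Y}(Y,y)$. This is a bounded, real-valued random variable, since $k_{\mathcal Y}$ is bounded and Borel measurable, so all first and second moments below are finite.

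Because $\mathcal Y$ is Polish, the regular conditional distribution $\mathbb P_{Y\mid X}$ exists (Remark~\ref{remark:main-assumptions}). Hence $\mathbb E_{Y\mid X}[Z_y]$ and $\mathbb V_{Y\mid X}[Z_y] = \mathbb E_{Y\mid X}[Z_y^2] - (\mathbb E_{Y\mid X}[Z_y])^2$ are well-defined measurable functions of $X$. The law of total variance for square-integrable real random variables then gives
\begin{equation*}
\mathbb V_Y[Z_y] = \mathbb V_X\!\left[\mathbb E_{Y\mid X}[Z_y]\right] + \mathbb E_X\!\left[\mathbb V_{Y\mid X}[Z_y]\right].
\end{equation*}
To verify this, expand both sides using the tower property $\mathbb E_X[\mathbb E_{Y\mid X}[Z_y^j]] = \mathbb E_Y[Z_y^j]$ for $j=1,2$; the cross terms cancel.

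Next, divide by $\mathbb V_Y[Z_y]$, which is strictly positive for every $y$ by the well-definedness part of Theorem~\ref{theorem: rho properties}. This yields the pointwise identity
\begin{equation*}
1 - \frac{\mathbb E_X[\mathbb V_{Y\mid X}[Z_y]]}{\mathbb V_Y[Z_y]} = \frac{\mathbb V_X[\mathbb E_{Y\mid X}[Z_y]]}{\mathbb V_Y[Z_y]}, \qquad y\in\mathcal Y.
\end{equation*}
Both ratios lie in $[0,1]$, because the two summands in the total-variance decomposition are non-negative. They are therefore bounded, and integrable against the probability measure $\mathbb P_Y$, provided they are measurable in $y$.

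Measurability in $y$ is the one step I expect to need care, since it is easy to pass over. Each quantity is built from integrals of the jointly measurable, bounded function $(y',y)\mapsto k_{\mathcal Y}(y',y)$ (or its square) against $\mathbb P_Y$, or against the kernel $\mathbb P_{Y\mid X=x}$ followed by $\mathbb P_X$. By Fubini--Tonelli, these integrals are Borel measurable in $y$, and so are the ratios. Once this is settled, integrating the pointwise identity against $\mathbb P_Y$ and using $\int_{\mathcal Y}1\,\mathrm d\mathbb P_Y = 1$ together with linearity of the integral gives \eqref{eq:kmd-population-version2} from \eqref{eq:kmd-population-version1}.
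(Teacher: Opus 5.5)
Your proposal is correct and follows essentially the same route as the paper. Both arguments apply the pointwise law of total variance, divide by the strictly positive $\mathbb{V}_Y[k_{\mathcal Y}(Y,y)]$, and integrate against $\mathbb{P}_Y$ using $\int_{\mathcal Y} 1\,\mathrm d\mathbb{P}_Y = 1$; your explicit checks of measurability in $y$ and boundedness of the ratios, which justify splitting the integral, are details the paper leaves implicit.
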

\begin{proof}
    We have the chain of equalities
\begin{align*}
D(Y, X)
&\overset{\eqref{eq:kmd-population-version1}}{=} 1 -  \int_{\mathcal{Y}} \frac{\mathbb{E}_X\!\left[\,\mathbb{V}_{Y \mid X}\left[k_\mathcal{Y}(Y,y)\right]\,\right]}
                    {\mathbb{V}_Y \left[k_\mathcal{Y}(Y,y)\right]} \, \mathrm{d}\mathbb{P}_Y(y) 
                    \overset{(a)}{=} 1 + \int_{\mathcal{Y}} \frac{\pm \mathbb{V}_Y\!\left[k_\mathcal{Y}(Y,y)\right]
    - \mathbb{E}_X\!\left[\,\mathbb{V}_{Y \mid X}\left[k_\mathcal{Y}(Y,y)\right]\,\right]}
    {\mathbb{V}_Y \left[k_\mathcal{Y}(Y,y)\right]} \, \mathrm{d}\mathbb{P}_Y(y) \\
&\overset{(b)}{=} 1 + \int_{\mathcal{Y}} \frac{\mathbb{V}_Y\!\left[k_\mathcal{Y}(Y,y)\right]
    - \mathbb{E}_X\!\left[\,\mathbb{V}_{Y \mid X}\left[k_\mathcal{Y}(Y,y)\right]\,\right]}
    {\mathbb{V}_Y \left[k_\mathcal{Y}(Y,y)\right]} \, \mathrm{d}\mathbb{P}_Y(y) - 1 
    \stackrel{(c)}{=} \int_{\mathcal{Y}} \frac{\mathbb{V}_X\!\left[\,\mathbb{E}_{Y \mid X}\left[k_\mathcal{Y}(Y,y)\right]\,\right]}
                   {\mathbb{V}_Y \left[k_\mathcal{Y}(Y,y)\right]} \, \mathrm{d}\mathbb{P}_Y(y),
\end{align*}
where in (a), we add zero; in (b), we split the fraction and simplify; and in (c), we use that $1-1=0$ together with
\begin{equation*}
\mathbb{V}_Y[k_\mathcal{Y}(Y,y)]
=
\mathbb{E}_X\!\left[\mathbb{V}_{Y \mid X}\left[k_\mathcal{Y}(Y,y) \right]\right]
+
\mathbb{V}_X\!\left[\mathbb{E}_{Y \mid X}\left[k_\mathcal{Y}(Y,y)\right]\right]
\end{equation*}
by the law of total variance ($y\in \mathcal Y$).
\end{proof}

\begin{lemmaA}[Nearest-neighbour distance]\label{lmm:maxDist}
Let $(\mathcal{X}, d_\mathcal{X})$ be a metric space and $X_1, \ldots, X_n$ i.i.d. Let $d_j$ be the distance from $X_j$ to its $K$-th nearest neighbour among $\left\{X_{\ell}: \ell \neq j\right\}$, i.e.
    \[
        d_j := \max _{k \in \mathcal{N}_j} d_\mathcal{X}(X_k, X_j)
    \]
    where $\mathcal{N}_j$ is the set of $K$-nearest neighbours of $X_j$ in $\{X_k\}_{k\neq j}$. 
    Assume:
    
    \noindent (A1) There exist $x^* \in \mathcal{X}$ and $\alpha, C_1, C_2>0$ such that for all $t \geq 0$,
    \[
    \mathbb{P}\left(d_\mathcal{X}(X_1, x^*) \geq t\right) \leq C_1 e^{-C_2 t^\alpha} .
    \]
    
    \noindent(A2) There exist constants $d>0$ and $c_0>0$ such that for every radius $T>0$, for all $x\in \mathcal X$ with $d_\mathcal{X}(x^*, x) \leq T$ and all $r>0$,

    \[
    \mathbb{P}\left(d_\mathcal{X}(X_1, x) \leq r\right) \geq c_0 r^d
    \]

    Let $\beta>0$. Then for all $n \geq 3$ and $1 \leq K \leq n / 2$,

    \[
    \mathbb{E}[d_j^\beta] \lesssim \left(\frac{K}{n}\right)^{\beta / d}+(\log n)^{\beta / \alpha} n^{-2},
    \]
    where $\lesssim$ hides constants depending only on $\alpha, \beta, d, c_0, C_1, C_2$.
\end{lemmaA}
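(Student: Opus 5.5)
We split according to whether $X_j$ lies in a large ball around $x^*$. Fix a radius $T_n \coloneq (c\log n / C_2)^{1/\alpha}$, with $c\ge 3$ chosen below, and let $G_j \coloneq \{d_\mathcal{X}(X_j,x^*)\le T_n\}$. Then
\[
\mathbb{E}[d_j^\beta] = \mathbb{E}[d_j^\beta\mathbbm{1}_{G_j}] + \mathbb{E}[d_j^\beta\mathbbm{1}_{G_j^c}].
\]
The first term will produce $(K/n)^{\beta/d}$ through the small-ball condition (A2). The second term will produce the $(\log n)^{\beta/\alpha}n^{-2}$ correction through the tail condition (A1).

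\textbf{Good event.} Condition on $X_j=x$ with $d_\mathcal{X}(x,x^*)\le T_n$. The event $\{d_j>r\}$ means that fewer than $K$ of the $n-1$ other points fall in the closed ball $B(x,r)$. The count is $\mathrm{Bin}(n-1,p)$ with $p\ge c_0r^d$ by (A2), applied with $T=T_n$. The constant $c_0$ does not depend on $T$, which is exactly what this step needs. Set $r_0\coloneq (2K/(c_0 (n-1)))^{1/d}$. For $r\ge r_0$ we have $(n-1)p\ge 2K$. A multiplicative Chernoff bound then gives
\[
\mathbb{P}(d_j>r\mid X_j=x)\le \exp\bigl(-(n-1)c_0r^d/8\bigr).
\]
Write $\mathbb{E}[d_j^\beta\mid X_j=x]=\int_0^\infty \beta r^{\beta-1}\mathbb{P}(d_j>r\mid X_j=x)\,\mathrm{d}r$. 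Bound the probability by $1$ on $[0,r_0]$ and by the exponential beyond $r_0$. The first piece is $r_0^\beta\lesssim (K/n)^{\beta/d}$. The second piece is at most $\int_0^\infty \beta r^{\beta-1}e^{-(n-1)c_0r^d/8}\,\mathrm{d}r\lesssim n^{-\beta/d}\le (K/n)^{\beta/d}$. Here $K\le n/2$ and $n\ge3$ guarantee $n-1\gtrsim n$.

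\textbf{Bad event.} Use the crude bound $d_j\le \max_{k\neq j} d_\mathcal{X}(X_k,X_j)\le 2M_n$, where $M_n\coloneq\max_{\ell} d_\mathcal{X}(X_\ell,x^*)$. Cauchy--Schwarz gives
\[
\mathbb{E}[d_j^\beta\mathbbm{1}_{G_j^c}]\le 2^\beta\,\mathbb{E}[M_n^{2\beta}]^{1/2}\,\mathbb{P}(G_j^c)^{1/2}.
\]
By (A1), $\mathbb{P}(G_j^c)\le C_1 n^{-c}$. A union bound with (A1) and tail integration give $\mathbb{E}[M_n^{2\beta}]\lesssim (\log n)^{2\beta/\alpha}$: integrate $2\beta t^{2\beta-1}\min\{1,nC_1e^{-C_2t^\alpha}\}$, splitting at $t=(2\log n/C_2)^{1/\alpha}$. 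The bad term is therefore $\lesssim(\log n)^{\beta/\alpha}n^{-c/2}$, and taking $c=4$ yields $(\log n)^{\beta/\alpha}n^{-2}$.

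\textbf{Main obstacle.} The delicate part is the bad event, where (A2) gives no control and we must fall back on the global maximum. The constants must not degrade there, and the power of $n$ must come out as $-2$. If Cauchy--Schwarz proves wasteful, the alternative is to condition on $X_j$ and bound $d_j\le d_\mathcal{X}(X_j,x^*)+\max_{k\ne j}d_\mathcal{X}(X_k,x^*)$ directly. The term $\mathbb{E}[d_\mathcal{X}(X_j,x^*)^\beta\mathbbm{1}_{G_j^c}]$ then decays like $T_n^\beta n^{-c}$ by tail integration of (A1). Combining the two pieces gives the stated bound. All hidden constants depend only on $\alpha,\beta,d,c_0,C_1,C_2$.
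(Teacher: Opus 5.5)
Your proof is correct and follows the same architecture as the paper's proof. Both use a localization split, a binomial count plus multiplicative Chernoff bound under (A2) on the good part (your $\exp(-(n-1)c_0r^d/8)$ is exactly the paper's $\exp(-Ku^d/4)$ after writing $r=ur_*$), and a union-bound tail estimate for $\max_\ell d_{\mathcal X}(X_\ell,x^*)$ on the bad part.

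The differences are minor. You localize only $X_j$ and pay a square root through Cauchy--Schwarz, hence $c=4$. The paper instead localizes all points via $\mathcal{G}_\delta=\{\max_\ell d_{\mathcal X}(X_\ell,x^*)\le T_{n,\delta}\}$ with $\delta=n^{-2}$ and integrates the tail of the maximum directly. Your choice is the cleaner one: conditioning on $X_j=x$ leaves the other $n-1$ points i.i.d.\ from $\mathbb{P}_X$. The paper's conditioning on $\mathcal{G}_\delta$ tacitly changes their law before the binomial argument is applied.
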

\begin{proof}[Proof of Lemma~\ref{lmm:maxDist}]
    Fix $\delta \in(0,1)$ and define
    \[
    T_{n, \delta}:=\left(\frac{1}{C_2} \log \frac{C_1 n}{\delta}\right)^{1 / \alpha}.
    \]
    By (A1) and a union bound,

    \[
    \mathbb{P}\left(\max_{1 \leq \ell \leq n} d_\mathcal{X}\left(X_{\ell}, x^*\right)>T_{n, \delta}\right) \leq n C_1 e^{-C_2 T_{n, \delta}^\alpha}=\delta .
    \]

    Introduce the event
    \[
    \mathcal{G}_\delta : =\left\{\max _{1 \leq \ell \leq n} d_\mathcal{X}(X_{\ell}, x^*) \leq T_{n, \delta}\right\} .
    \]

    Then $\mathbb{P}\left(\mathcal{G}_\delta\right) \geq 1-\delta$. On $\mathcal{G}_\delta$, for all sample points $X_i$ we have $d_\mathcal{X}(X_i, x^*) \leq T_{n, \delta}$, hence for any $j$,
    \[
        d_j \leq 2 T_{n, \delta}.
    \]
    Therefore, we have
    \begin{align*}
         \ee[d_j^\beta] = \ee[d_j^\beta \bone_{\{\mathcal{G}_\delta\}}] + \ee[d_j^\beta \bone_{\{\mathcal{G}_\delta^c\}}] \leq \ee[d_j^\beta \mid \mathcal{G}_\delta] + \ee[d_j^\beta \bone_{\{\mathcal{G}_\delta^c\}}].
    \end{align*}
    Note that 
    \begin{align*}
        d_j \leq 2\max_{1\leq \ell \leq n}d_\mathcal{X}(X_\ell, x^*).
    \end{align*}
    Therefore 
    \begin{align*}
        \ee\left[d_j^\beta \bone_{\{\mathcal{G}_\delta^c\}}\right] \leq 2^\beta \ee\left[\max_{\ell}d_{\mathcal{X}}^\beta(X_\ell, x^*)\bone_{\{\mathcal{G}_\delta^c\}}\right].
    \end{align*}
    Now using the tail bound on the maximum we have
    \begin{align*}
        \pp(\max_{\ell}d_{\mathcal{X}}^\beta(X_\ell, x^*)) \leq n C_1 e^{-C_2 t^\alpha},
    \end{align*}
    which with our choice of $T_{n, \delta}$ gives us
    \begin{align*}
        \ee[d_j^\beta \bone_{\{\mathcal{G}_\delta^c\}}]\lesssim (\log n)^{\beta/\alpha}\delta.
    \end{align*}
    Thus 
    \begin{align}\label{eq:djsplit}
         \ee[d_j^\beta] \lesssim \ee[d_j^\beta \mid \mathcal{G}_\delta] + (\log n)^{\beta/\alpha}\delta.
    \end{align}
    Now for fixed $j$, conditioning on $X_j = x$ and on $\mathcal{G}_\delta$ we have $d_\mathcal{X}(x, x^*)\leq T_{n, \delta}$. For any $r > 0$, note that by (A2)
    \[
        \pp(d_{\mathcal{X}}(X_1, x)\leq r) \geq c_0 r^d.
    \]
    Let $N_r := \sum_{\ell \neq j}\bone_{\{d_\mathcal{X}(X_\ell, x) \leq r\}}$. Conditional on $X_j = x$, $N_r\sim \text{Bin}(n - 1, \pp(d_{\mathcal{X}}(X_1, x)\leq r))$ . The event $\{d_j > r\}$ is exactly $\{N_r < K\}$. Therefore, for any $r$,
    \[
    \pp(d_j > r\mid X_j = x) = \pp(N_r < K). 
    \]
    Now choose $r_* := (2K/c_0(n - 1))^{1/d}$, so we have 
    \[
    \mu := \ee[N_{r_*}\mid X_j = x] = (n - 1)\pp(d_{\mathcal{X}}(X_1, x) \leq r_*) \geq 2 K.
    \]

    Using a standard multiplicative Chernoff bound for binomials, 
    \[
    \pp(N_{r_*} < K) \leq \exp(-\mu/8) \leq \exp(-K/4).
    \]
    More generally, for any $u\geq 1$, we have 
    \[
    \ee[N_{ur_*}\mid X_j = x] \geq (n - 1)c_0 u^d r_*^d = 2Ku^d.
    \]
    Using Chernoff bound again we have 
    \[
    \pp(d_j > ur_*\mid X_j = x) = \pp(N_{ur_*} < K) \leq \exp\left(-K u^d / 4\right).
    \]
    Note that these bounds hold for every $x$ such that $d_\mathcal{X}(x, x^*)\leq T_{n, \delta}$, hence they remain valid under $\mathcal{G}_\delta$.

    Using this tail bound we have 
    \begin{align*}
        \ee[d_j^\beta\mid X_j = x] &= \int_{0}^{r_*} \beta r^{\beta - 1}\pp(d_j > r\mid X_j = x)\mathrm{d}r + \int_{r_*}^\infty \beta r^{\beta - 1}\pp(d_j > r\mid X_j = x)\mathrm{d}r \\
        &\leq r_*^\beta + \beta r_*^\beta \int_{1}^\infty u^{\beta - 1}e^{-K u^{d / 4}}\mathrm{d} u \\
        &\lesssim r_*^\beta = \left(\frac{K}{n}\right)^{\beta/d}
    \end{align*}
    uniformly for all $x$ such that $d_\mathcal{X}(x, x^*)\leq T_{n, \delta}$. Therefore 
    \begin{align}\label{eq:djgood}
        \ee[d_j^\beta\mid \mathcal{G}_\delta] \lesssim \left(\frac{K}{n}\right)^{\beta/d}.
    \end{align}
    Finally combining \eqref{eq:djsplit} and \eqref{eq:djgood}, we have
    \[
    \ee[d_j^\beta] \lesssim \left(\frac{K}{n}\right)^{\beta/d} + (2T_{n, \delta})^\beta \delta.
    \]
    Choose $\delta = n^{-2}$ and we get 
    \[
    \ee[d_j^\beta] \lesssim \left(\frac{K}{n}\right)^{\beta/d} + (\log n)^{\beta/\alpha}n^{-2}.
    \]
\end{proof}

\section{External Results}

This section collects the external results that we use. Theorem~\ref{theorem: bounded difference} recalls McDiarmid's bounded differences inequality. Lemma~\ref{lemma:measurable-if-degenerate} gives a condition for the existence of a Borel measurable function relating two random variables a.s.

\begin{theoremA}[Bounded differences inequality; \citealt{boucheron13concentration}] \label{theorem: bounded difference}
Let $\mathcal{X}$ be a measurable space. A function $f: \mathcal{X}^n \to \mathbb{R}$ has the bounded difference property for some constants $c_1, \dots, c_n$ if, for each $i = 1, \dots, n$,
\begin{equation}
\sup_{\substack{x_1, \dots, x_n \\ x_i' \in \mathcal{X}}} \left | f \left ( x_1, \dots, x_{i-1}, x_i, x_{i+1}, x_n \right ) - f \left ( x_1, \dots, x_{i-1}, x_i', x_{i+1}, \dots x_n \right ) \right | \leq c_i. 
\label{equation: bounded difference property}
\end{equation}
Then, if $X_1, \dots, X_n$ is a sequence of independently distributed random variables and (\ref{equation: bounded difference property}) holds, putting $Z = f \left ( X_1, \dots, X_n \right )$ and $\nu = \frac{1}{4} \sum_{i=1}^n c_i^2$, for any $t > 0$, it holds that
\begin{equation*}
\mathbb{P} \left ( Z - \mathbb{E} \left ( Z \right ) > t \right ) \leq e^{-t^2 / \left ( 2 \nu \right )}.  
\end{equation*}
\end{theoremA}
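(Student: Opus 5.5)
The plan is the martingale (Doob) method combined with Hoeffding's lemma and a Chernoff bound. We assume $\mathbb E|Z|<\infty$; this is automatic when $f$ is bounded, which is the case in all our uses. Let $\mathcal F_k \coloneq \sigma(X_1,\dots,X_k)$, with $\mathcal F_0$ trivial. Define $Z_k \coloneq \mathbb E[Z\mid\mathcal F_k]$, so that $Z_0=\mathbb E[Z]$ and $Z_n=Z$. The martingale differences are $D_k\coloneq Z_k-Z_{k-1}$, and then
\begin{equation*}
Z-\mathbb E[Z]=\sum_{k=1}^n D_k .
\end{equation*}
The whole argument rests on controlling the conditional range of each $D_k$ given $\mathcal F_{k-1}$.

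\textbf{Step 1: conditional range.} By independence, $Z_k$ can be written as $g_k(X_1,\dots,X_k)$, where
\begin{equation*}
g_k(x_1,\dots,x_k)\coloneq\int f(x_1,\dots,x_k,x_{k+1},\dots,x_n)\,\mathrm d\mathbb P_{X_{k+1}}(x_{k+1})\cdots\mathrm d\mathbb P_{X_n}(x_n).
\end{equation*}
Independence is used essentially here: the law of $(X_{k+1},\dots,X_n)$ does not depend on the value of $X_k$. Consequently, for fixed $x_1,\dots,x_{k-1}$ and any $x,x'$,
\begin{equation*}
\abs{g_k(x_1,\dots,x_{k-1},x)-g_k(x_1,\dots,x_{k-1},x')}\le c_k,
\end{equation*}
since the integrands differ by at most $c_k$ pointwise by \eqref{equation: bounded difference property}. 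Set $L_k\coloneq\inf_x g_k(X_1,\dots,X_{k-1},x)-Z_{k-1}$ and $U_k\coloneq\sup_x g_k(X_1,\dots,X_{k-1},x)-Z_{k-1}$. Both are $\mathcal F_{k-1}$-measurable, and $L_k\le D_k\le U_k$ with $U_k-L_k\le c_k$. I expect this step to be the main (though mild) obstacle, because of measurability of the sup and inf over an uncountable set. To avoid it, I would work directly with the pointwise bound: for fixed $(x_1,\dots,x_{k-1})$, the function $x\mapsto g_k(\dots,x)$ takes values in some interval of length $c_k$. This is all that Hoeffding's lemma needs, applied conditionally via a regular version of the conditional law, which is simply $\mathbb P_{X_k}$ by independence.

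\textbf{Step 2: Hoeffding's lemma.} Conditionally on $\mathcal F_{k-1}$, $D_k$ has mean zero and lies in an interval of length at most $c_k$. I would first prove the unconditional fact: for a mean-zero $W\in[a,b]$, $\mathbb E e^{\lambda W}\le e^{\lambda^2(b-a)^2/8}$. The route is to bound $e^{\lambda w}$ by its chord on $[a,b]$ using convexity, take expectations, and write the logarithm of the result as $\varphi(u)=-pu+\log(1-p+pe^u)$ with $u=\lambda(b-a)$. One checks $\varphi(0)=\varphi'(0)=0$ and $\varphi''\le 1/4$, so Taylor's theorem gives $\varphi(u)\le u^2/8$. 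Applying this conditionally yields
\begin{equation*}
\mathbb E[e^{\lambda D_k}\mid\mathcal F_{k-1}]\le e^{\lambda^2c_k^2/8}\quad\text{a.s.}
\end{equation*}

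\textbf{Step 3: iterate and apply Chernoff.} Use the tower property, peeling off $D_n,D_{n-1},\dots$ in turn:
\begin{equation*}
\mathbb E\big[e^{\lambda(Z-\mathbb E Z)}\big]
=\mathbb E\Big[e^{\lambda\sum_{k<n}D_k}\,\mathbb E[e^{\lambda D_n}\mid\mathcal F_{n-1}]\Big]
\le\cdots\le e^{\lambda^2\sum_k c_k^2/8}=e^{\lambda^2\nu/2}.
\end{equation*}
Markov's inequality applied to $e^{\lambda(Z-\mathbb E Z)}$ then gives $\mathbb P(Z-\mathbb E Z>t)\le e^{-\lambda t+\lambda^2\nu/2}$ for every $\lambda>0$. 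Choosing $\lambda=t/\nu$ yields the claimed bound $e^{-t^2/(2\nu)}$.
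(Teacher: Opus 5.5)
The paper gives no proof of this statement. It is imported as an external result from Boucheron, Lugosi and Massart, where it is derived by the entropy method rather than by martingales.

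In that route, sub-additivity of entropy gives $\operatorname{Ent}(e^{\lambda Z})\le\mathbb E\sum_{i}\operatorname{Ent}^{(i)}(e^{\lambda Z})$, where $\operatorname{Ent}^{(i)}$ is the entropy with respect to $X_i$ alone. With the other coordinates frozen, $Z$ ranges over an interval of length $c_i$. So the Hoeffding-type bound $c_i^2/4$ on the second derivative of the conditional log-moment generating function yields $\operatorname{Ent}^{(i)}(e^{\lambda Z})\le\frac{\lambda^2c_i^2}{8}\,\mathbb E^{(i)}e^{\lambda Z}$. Herbst's argument then integrates the resulting inequality $\operatorname{Ent}(e^{\lambda Z})\le\frac{\lambda^2\nu}{2}\,\mathbb E e^{\lambda Z}$ into $\log\mathbb E e^{\lambda(Z-\mathbb E Z)}\le\lambda^2\nu/2$. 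From there the Chernoff step is identical to yours.

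Your argument is McDiarmid's original Doob-martingale proof, and it is correct. Freezing $(x_1,\dots,x_{k-1})$ and applying Hoeffding's lemma to $x\mapsto g_k(x_1,\dots,x_{k-1},x)$ under $\mathbb P_{X_k}$ properly sidesteps the measurability of $\sup_x$ and $\inf_x$. The mean-zero property holds for every prefix, not just almost surely, since $g_{k-1}(x_1,\dots,x_{k-1})=\int g_k(x_1,\dots,x_{k-1},x)\,\mathrm d\mathbb P_{X_k}(x)$ by Fubini.

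Both routes give the same constant, and each buys something different:
\begin{itemize}
\item Your route is more elementary, using only Hoeffding's lemma and the tower property. It extends verbatim to martingale difference sequences with predictable ranges (Azuma--Hoeffding).
\item The entropy route needs no ordering of the coordinates. It is the one that generalizes to modified logarithmic Sobolev inequalities, self-bounding functions and exponential Efron--Stein bounds. There the worst-case $\sum_i c_i^2$ is replaced by a random variance proxy such as $\sum_i (Z-Z_i')_+^2$, where $Z_i'$ is $Z$ with $X_i$ replaced by an independent copy.
\end{itemize}

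Two small points. First, you do not need to assume $\mathbb E\abs{Z}<\infty$. Telescoping through the hybrid points $(y_1,\dots,y_k,x_{k+1},\dots,x_n)$ gives $\abs{f(x)-f(y)}\le\sum_i c_i$, so every $f$ with the bounded difference property is bounded (measurability of $f$ is implicitly required). Second, the choice $\lambda=t/\nu$ requires $\nu>0$; the case $\nu=0$ is trivial because $Z$ is then constant.
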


\begin{lemmaA}[Remark A.2;\footnote{This remark appears in the \href{https://arxiv.org/abs/2012.14804}{arXiv version}.} \citealt{huang22kernel}] \label{lemma:measurable-if-degenerate} Let $(\Omega, \mathcal A, \mathbb P)$ be a probability space, $(\mathcal X, \tau_{\mathcal X})$ a topological space with Borel $\sigma$-algebra $\mathcal B(\tau_{\mathcal X})$, $(\mathcal Y, \tau_{\mathcal Y})$ a Polish space with Borel $\sigma$-algebra $\mathcal B(\tau_{\mathcal Y})$, and $X : (\Omega, \mathcal A) \to (\mathcal X, \mathcal B(\tau_{\mathcal X}))$ and $Y : (\Omega,\mathcal A) \to (\mathcal Y,\mathcal B(\tau_{\mathcal Y}))$ random variables. Denote the conditional distribution of $Y$ given $X$ by $\mathbb P_{Y\mid X}$. If $\mathbb P_{Y\mid X = x}$ is degenerate for a.e.\ $x\in\mathcal X$, then there exists a Borel measurable function $f : \mathcal X \to \mathcal Y$ such that $Y=f(X)$ a.s.
\end{lemmaA}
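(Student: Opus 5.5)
The plan is to build $f$ explicitly as the location of the Dirac mass, $f(x)\coloneq$ the point $y$ with $\mathbb P_{Y\mid X=x}=\delta_y$. The substantive work is checking that $f$ is Borel measurable and that the exceptional set is measurable. Throughout, fix a complete separable metric $d_{\mathcal Y}$ compatible with $\tau_{\mathcal Y}$ and a countable base $\{U_m\}_{m\in\mathbb N}$ of $\tau_{\mathcal Y}$. We use that $x\mapsto \mathbb P_{Y\mid X=x}(B)$ is $\mathcal B(\tau_{\mathcal X})$-measurable for every $B\in\mathcal B(\tau_{\mathcal Y})$, which is part of the definition of a regular conditional distribution \citep[Theorem~10.2.2]{dudley04real}.

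\textbf{Step 1 (the degenerate set is measurable).} Set $\rho\coloneq d_{\mathcal Y}\wedge 1$, which is bounded and jointly continuous. Define
\begin{equation*}
\mathcal D\coloneq\Big\{x\in\mathcal X : \int_{\mathcal Y}\int_{\mathcal Y}\rho(y,y')\,\mathrm d\mathbb P_{Y\mid X=x}(y)\,\mathrm d\mathbb P_{Y\mid X=x}(y')=0\Big\}.
\end{equation*}
A probability measure $\mu$ on $\mathcal Y$ satisfies $\iint\rho\,\mathrm d\mu\,\mathrm d\mu=0$ iff $\mu\otimes\mu$ is concentrated on the diagonal. Since the diagonal is closed and $\mathcal Y$ is separable, this holds iff $\mu$ is a Dirac measure. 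Hence $\mathcal D$ is exactly the set of $x$ at which $\mathbb P_{Y\mid X=x}$ is degenerate.

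The map $x\mapsto\mathbb P_{Y\mid X=x}\otimes\mathbb P_{Y\mid X=x}$ is again a probability kernel: its values on measurable rectangles are products of measurable functions, and a monotone class argument extends this to $\mathcal B(\mathcal Y)\otimes\mathcal B(\mathcal Y)=\mathcal B(\mathcal Y\times\mathcal Y)$, using separability. Integrating the bounded Borel function $\rho$ against this kernel therefore gives a measurable function of $x$, so $\mathcal D\in\mathcal B(\tau_{\mathcal X})$. By hypothesis, $\mathbb P_X(\mathcal D)=1$.

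\textbf{Step 2 (definition and measurability of $f$).} Fix any $y_0\in\mathcal Y$. For $x\in\mathcal D$ let $f(x)$ be the unique point with $\mathbb P_{Y\mid X=x}=\delta_{f(x)}$; uniqueness uses that $\mathcal Y$ is Hausdorff. For $x\notin\mathcal D$ set $f(x)\coloneq y_0$. For each basic open set,
\begin{equation*}
f^{-1}(U_m)=\big(\mathcal D\cap\{x:\mathbb P_{Y\mid X=x}(U_m)=1\}\big)\cup\big(\mathcal D^c\ \text{if } y_0\in U_m\big),
\end{equation*}
which is measurable by Step 1 and the kernel property. Every open set is a countable union of the $U_m$, so $f$ is Borel measurable.

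\textbf{Step 3 ($Y=f(X)$ a.s.).} The set $G\coloneq\{(x,y): y=f(x)\}$ is the preimage of the diagonal of $\mathcal Y\times\mathcal Y$ under $(x,y)\mapsto(f(x),y)$. The diagonal is Borel in the product $\sigma$-algebra because $\mathcal Y$ is second countable, so $G$ is measurable. By the disintegration \citep[Theorem~10.2.1]{dudley04real},
\begin{equation*}
\mathbb P(Y=f(X))=\int_{\mathcal X}\mathbb P_{Y\mid X=x}(\{f(x)\})\,\mathrm d\mathbb P_X(x)\ \ge\ \int_{\mathcal D}\delta_{f(x)}(\{f(x)\})\,\mathrm d\mathbb P_X(x)=\mathbb P_X(\mathcal D)=1.
\end{equation*}

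The main obstacle is Step 1: showing that the set where the conditional law is degenerate is measurable, and that the "location of the atom" is a measurable function. Working with the product kernel and the bounded metric $\rho$ settles both, and the only topological inputs needed are separability and metrizability of $\mathcal Y$.
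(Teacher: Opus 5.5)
Your proof is correct. The paper itself gives no argument for this lemma: it appears among the external results and is imported from Remark~A.2 of \citet{huang22kernel}. Your write-up is therefore a self-contained replacement for that citation, not a variant of an in-paper proof.

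Each step checks out. The product-kernel and $\pi$--$\lambda$ argument makes $\mathcal D$ Borel. On $\mathcal D$ one has $f(x)\in U_m$ iff $\mathbb P_{Y\mid X=x}(U_m)=1$. The section of $G$ at $x$ is $\{f(x)\}$, so the extended disintegration formula of \citet[Theorem~10.2.1]{dudley04real} gives $\mathbb P(Y=f(X))\ge\mathbb P_X(\mathcal D)=1$.

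Compare this with two alternatives: the informal reading ``let $f(x)$ be the atom of $\mathbb P_{Y\mid X=x}$'', and the other standard route of transporting $Y$ to $\mathbb R$ through a Borel isomorphism of the Polish space $\mathcal Y$ and composing with a conditional expectation. Your argument buys two things:
\begin{itemize}
\item It handles explicitly the measurability of the degeneracy set, which is needed to define $f$ measurably off a null set. It also avoids the Borel isomorphism theorem.
\item It shows that the topology of $\mathcal X$ is irrelevant. Completeness of $\mathcal Y$ is used only to guarantee that the regular conditional distribution exists; separability and metrizability do the rest.
\end{itemize}

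The one step stated more tersely than it deserves is ``$\mu\otimes\mu(\Delta)=1$ implies $\mu$ is a Dirac measure'', where $\Delta$ is the diagonal. One line suffices. First, $\mu(B)\mu(B^c)=\mu\otimes\mu(B\times B^c)\le\mu\otimes\mu(\Delta^c)=0$ makes $\mu$ $\{0,1\}$-valued. Then, for each $k$, choose a ball of radius $1/k$ of full measure, which separability makes possible. The intersection of these balls has measure one and diameter zero, hence is a single point.
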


\end{document}

%% file: preamble.tex
\usepackage{xcolor}
\usepackage{bbm}
\usepackage{mathtools}
\usepackage{bm}
\usepackage{mathrsfs}
\usepackage{enumitem}
\usepackage{amsmath,amssymb,amsthm}

\providebool{cleanbuild}
\newcommand{\neutralshorten}[1]{%
  \ifbool{cleanbuild}{\ifvmode\else\unskip\fi\ignorespaces}{#1}%
}

\providecommand{\comment}{}
\renewcommand{\comment}[2]{%
  \neutralshorten{%
    \ifmmode%
      \text{\textcolor{#1}{\sffamily\small[{#2}]}}%
    \else%
      \noindent%
      \textcolor{#1}{\sffamily\small[{#2\unskip}]}%
    \fi%
  }%
}

\newtheorem{theorem}{Theorem}
\newtheorem{assumption}{Assumption}
\newtheorem{remark}{Remark}
\newtheorem{definition}{Definition}

\newtheorem{theoremA}{Theorem}[section] %
\newtheorem{lemmaA}[theoremA]{Lemma} %

\newcommand{\abs}[1]{| #1|}

\newcommand{\bone}{\mathbbm{1}}

\newcommand{\ee}{\mathbb{E}}

\newcommand{\mx}{\mathcal{X}}

\newcommand{\my}{\mathcal{Y}}

\newcommand{\pp}{\mathbb{P}}

\newcommand{\rr}{\mathbb{R}}

\newcommand{\N}{\mathbb{N}} 

\newcommand{\fip}[2]{\left\langle{#1}\right\rangle_{#2}} %

\usepackage{stackengine}

\newcommand\cleq{\mathrel{\stackunder{$<$}{$\sim$}}}

\newcommand{\R}{\mathbb{R}}

%% file: fig/tikz_combined/power_final.tex
\begin{tikzpicture}[x=1pt,y=1pt]
\definecolor{fillColor}{RGB}{255,255,255}
\path[use as bounding box,fill=fillColor,fill opacity=0.00] (0,0) rectangle (216.81,108.41);
\begin{scope}
\path[clip] (  0.00,  0.00) rectangle (216.81,108.41);
\definecolor{drawColor}{RGB}{0,0,0}
\definecolor{fillColor}{RGB}{255,255,255}

\path[draw=drawColor,line width= 1.1pt,line join=round,line cap=round,fill=fillColor] ( -2.60, 94.50) rectangle (219.41,112.46);
\end{scope}
\begin{scope}
\path[clip] (  0.00,  0.00) rectangle (216.81,108.41);
\definecolor{drawColor}{RGB}{0,0,0}
\definecolor{fillColor}{RGB}{255,255,255}

\path[draw=drawColor,line width= 1.1pt,line join=round,line cap=round,fill=fillColor] ( -2.60, 94.50) rectangle (219.41,112.46);
\end{scope}
\begin{scope}
\path[clip] (  0.00,  0.00) rectangle (216.81,108.41);
\definecolor{drawColor}{RGB}{102,166,30}

\path[draw=drawColor,line width= 0.5pt,line join=round] (  0.54,103.48) -- (  9.65,103.48);
\end{scope}
\begin{scope}
\path[clip] (  0.00,  0.00) rectangle (216.81,108.41);
\definecolor{fillColor}{RGB}{102,166,30}

\path[fill=fillColor] (  2.99,101.37) --
	(  7.20,101.37) --
	(  7.20,105.58) --
	(  2.99,105.58) --
	cycle;
\end{scope}
\begin{scope}
\path[clip] (  0.00,  0.00) rectangle (216.81,108.41);
\definecolor{drawColor}{RGB}{31,119,180}

\path[draw=drawColor,line width= 0.5pt,line join=round] ( 24.29,103.48) -- ( 33.39,103.48);
\end{scope}
\begin{scope}
\path[clip] (  0.00,  0.00) rectangle (216.81,108.41);
\definecolor{fillColor}{RGB}{31,119,180}

\path[fill=fillColor] ( 26.73,103.48) --
	( 28.84,105.58) --
	( 30.94,103.48) --
	( 28.84,101.37) --
	cycle;
\end{scope}
\begin{scope}
\path[clip] (  0.00,  0.00) rectangle (216.81,108.41);
\definecolor{drawColor}{RGB}{246,141,98}

\path[draw=drawColor,line width= 0.5pt,line join=round] ( 47.19,103.48) -- ( 56.29,103.48);
\end{scope}
\begin{scope}
\path[clip] (  0.00,  0.00) rectangle (216.81,108.41);
\definecolor{drawColor}{RGB}{246,141,98}

\path[draw=drawColor,line width= 0.5pt,line join=round,line cap=round] ( 49.64,101.37) -- ( 53.84,105.58);

\path[draw=drawColor,line width= 0.5pt,line join=round,line cap=round] ( 49.64,105.58) -- ( 53.84,101.37);
\end{scope}
\begin{scope}
\path[clip] (  0.00,  0.00) rectangle (216.81,108.41);
\definecolor{drawColor}{RGB}{27,158,119}

\path[draw=drawColor,line width= 0.5pt,line join=round] ( 73.45,103.48) -- ( 82.56,103.48);
\end{scope}
\begin{scope}
\path[clip] (  0.00,  0.00) rectangle (216.81,108.41);
\definecolor{fillColor}{RGB}{27,158,119}

\path[fill=fillColor] ( 78.00,103.48) circle (  2.10);
\end{scope}
\begin{scope}
\path[clip] (  0.00,  0.00) rectangle (216.81,108.41);
\definecolor{drawColor}{RGB}{117,112,179}

\path[draw=drawColor,line width= 0.5pt,line join=round] (110.22,103.48) -- (119.32,103.48);
\end{scope}
\begin{scope}
\path[clip] (  0.00,  0.00) rectangle (216.81,108.41);
\definecolor{drawColor}{RGB}{117,112,179}

\path[draw=drawColor,line width= 0.5pt,line join=round,line cap=round] (114.77,103.48) circle (  2.10);
\end{scope}
\begin{scope}
\path[clip] (  0.00,  0.00) rectangle (216.81,108.41);
\definecolor{drawColor}{RGB}{8,95,2}

\path[draw=drawColor,line width= 0.5pt,line join=round] (148.25,103.48) -- (157.35,103.48);
\end{scope}
\begin{scope}
\path[clip] (  0.00,  0.00) rectangle (216.81,108.41);
\definecolor{fillColor}{RGB}{8,95,2}

\path[fill=fillColor] (152.80,106.75) --
	(155.63,101.84) --
	(149.97,101.84) --
	cycle;
\end{scope}
\begin{scope}
\path[clip] (  0.00,  0.00) rectangle (216.81,108.41);
\definecolor{drawColor}{RGB}{231,41,138}

\path[draw=drawColor,line width= 0.5pt,line join=round] (185.02,103.48) -- (194.12,103.48);
\end{scope}
\begin{scope}
\path[clip] (  0.00,  0.00) rectangle (216.81,108.41);
\definecolor{drawColor}{RGB}{231,41,138}

\path[draw=drawColor,line width= 0.5pt,line join=round,line cap=round] (189.57,106.75) --
	(192.40,101.84) --
	(186.73,101.84) --
	cycle;
\end{scope}
\begin{scope}
\path[clip] (  0.00,  0.00) rectangle (216.81,108.41);
\definecolor{drawColor}{RGB}{0,0,0}

\node[text=drawColor,anchor=base west,inner sep=0pt, outer sep=0pt, scale=  0.50] at ( 11.29,101.76) {\Large $\xi_n$};
\end{scope}
\begin{scope}
\path[clip] (  0.00,  0.00) rectangle (216.81,108.41);
\definecolor{drawColor}{RGB}{0,0,0}

\node[text=drawColor,anchor=base west,inner sep=0pt, outer sep=0pt, scale=  0.50] at ( 35.03,101.76) {\Large $T_n$};
\end{scope}
\begin{scope}
\path[clip] (  0.00,  0.00) rectangle (216.81,108.41);
\definecolor{drawColor}{RGB}{0,0,0}

\node[text=drawColor,anchor=base west,inner sep=0pt, outer sep=0pt, scale=  0.50] at ( 57.93,101.76) {\Large $\nu_n$};
\end{scope}
\begin{scope}
\path[clip] (  0.00,  0.00) rectangle (216.81,108.41);
\definecolor{drawColor}{RGB}{0,0,0}

\node[text=drawColor,anchor=base west,inner sep=0pt, outer sep=0pt, scale=  0.50] at ( 84.19,101.76) {\Large $\hat\eta^{\operatorname{RKHS}}$};
\end{scope}
\begin{scope}
\path[clip] (  0.00,  0.00) rectangle (216.81,108.41);
\definecolor{drawColor}{RGB}{0,0,0}

\node[text=drawColor,anchor=base west,inner sep=0pt, outer sep=0pt, scale=  0.50] at (120.96,101.76)  {\Large $\hat\eta^{K\operatorname{-NN}}$};
\end{scope}
\begin{scope}
\path[clip] (  0.00,  0.00) rectangle (216.81,108.41);
\definecolor{drawColor}{RGB}{0,0,0}

\node[text=drawColor,anchor=base west,inner sep=0pt, outer sep=0pt, scale=  0.50] at (158.99,101.76)  {\Large $\hat D^{\operatorname{RKHS}}$};
\end{scope}
\begin{scope}
\path[clip] (  0.00,  0.00) rectangle (216.81,108.41);
\definecolor{drawColor}{RGB}{0,0,0}

\node[text=drawColor,anchor=base west,inner sep=0pt, outer sep=0pt, scale=  0.50] at (195.76,101.76) {\Large $\hat D^{K\operatorname{-NN}}$};
\end{scope}
\begin{scope}
\path[clip] (  0.00,  0.00) rectangle (108.40, 98.55);
\definecolor{drawColor}{RGB}{255,255,255}
\definecolor{fillColor}{RGB}{255,255,255}

\path[draw=drawColor,line width= 0.5pt,line join=round,line cap=round,fill=fillColor] (  0.00,  0.00) rectangle (108.41, 98.55);
\end{scope}
\begin{scope}
\path[clip] ( 21.62, 16.56) rectangle (108.31, 98.05);
\definecolor{fillColor}{RGB}{255,255,255}

\path[fill=fillColor] ( 21.62, 16.56) rectangle (108.31, 98.05);
\definecolor{drawColor}{RGB}{102,166,30}

\path[draw=drawColor,line width= 0.2pt,line join=round] ( 25.56, 88.42) --
	( 33.44, 49.01) --
	( 41.32, 39.53) --
	( 49.20, 33.01) --
	( 57.08, 28.57) --
	( 64.96, 26.05) --
	( 72.84, 25.01) --
	( 80.72, 24.12) --
	( 88.60, 23.38) --
	( 96.48, 22.94) --
	(104.36, 22.94);
\definecolor{drawColor}{RGB}{31,119,180}

\path[draw=drawColor,line width= 0.2pt,line join=round] ( 25.56, 67.53) --
	( 33.44, 37.45) --
	( 41.32, 31.82) --
	( 49.20, 29.16) --
	( 57.08, 27.23) --
	( 64.96, 26.19) --
	( 72.84, 25.16) --
	( 80.72, 24.57) --
	( 88.60, 23.97) --
	( 96.48, 24.27) --
	(104.36, 23.82);
\definecolor{drawColor}{RGB}{246,141,98}

\path[draw=drawColor,line width= 0.2pt,line join=round] ( 25.56, 83.53) --
	( 33.44, 48.12) --
	( 41.32, 39.83) --
	( 49.20, 33.75) --
	( 57.08, 29.60) --
	( 64.96, 27.38) --
	( 72.84, 25.31) --
	( 80.72, 24.42) --
	( 88.60, 23.23) --
	( 96.48, 22.94) --
	(104.36, 22.94);
\definecolor{drawColor}{RGB}{27,158,119}

\path[draw=drawColor,line width= 0.2pt,line join=round] ( 25.56, 94.35) --
	( 33.44, 94.35) --
	( 41.32, 94.35) --
	( 49.20, 93.61) --
	( 57.08, 81.75) --
	( 64.96, 58.79) --
	( 72.84, 38.64) --
	( 80.72, 30.05) --
	( 88.60, 25.60) --
	( 96.48, 24.27) --
	(104.36, 23.97);
\definecolor{drawColor}{RGB}{117,112,179}

\path[draw=drawColor,line width= 0.2pt,line join=round] ( 25.56, 94.35) --
	( 33.44, 94.35) --
	( 41.32, 94.05) --
	( 49.20, 85.01) --
	( 57.08, 60.57) --
	( 64.96, 41.60) --
	( 72.84, 30.20) --
	( 80.72, 26.05) --
	( 88.60, 24.42) --
	( 96.48, 23.82) --
	(104.36, 23.68);
\definecolor{drawColor}{RGB}{8,95,2}

\path[draw=drawColor,line width= 0.2pt,line join=round] ( 25.56, 94.35) --
	( 33.44, 94.35) --
	( 41.32, 94.35) --
	( 49.20, 94.35) --
	( 57.08, 93.46) --
	( 64.96, 75.53) --
	( 72.84, 50.34) --
	( 80.72, 32.27) --
	( 88.60, 25.90) --
	( 96.48, 23.53) --
	(104.36, 23.38);
\definecolor{drawColor}{RGB}{231,41,138}

\path[draw=drawColor,line width= 0.2pt,line join=round] ( 25.56, 94.35) --
	( 33.44, 94.35) --
	( 41.32, 94.35) --
	( 49.20, 92.42) --
	( 57.08, 76.12) --
	( 64.96, 52.42) --
	( 72.84, 35.97) --
	( 80.72, 27.08) --
	( 88.60, 24.57) --
	( 96.48, 23.82) --
	(104.36, 23.82);
\definecolor{fillColor}{RGB}{102,166,30}

\path[fill=fillColor] ( 23.99, 86.85) --
	( 27.13, 86.85) --
	( 27.13, 89.99) --
	( 23.99, 89.99) --
	cycle;
\definecolor{fillColor}{RGB}{31,119,180}

\path[fill=fillColor] ( 23.99, 67.53) --
	( 25.56, 69.10) --
	( 27.13, 67.53) --
	( 25.56, 65.96) --
	cycle;
\definecolor{drawColor}{RGB}{246,141,98}

\path[draw=drawColor,line width= 0.5pt,line join=round,line cap=round] ( 23.99, 81.96) -- ( 27.13, 85.10);

\path[draw=drawColor,line width= 0.5pt,line join=round,line cap=round] ( 23.99, 85.10) -- ( 27.13, 81.96);
\definecolor{fillColor}{RGB}{27,158,119}

\path[fill=fillColor] ( 25.56, 94.35) circle (  1.57);
\definecolor{drawColor}{RGB}{117,112,179}

\path[draw=drawColor,line width= 0.5pt,line join=round,line cap=round] ( 25.56, 94.35) circle (  1.57);
\definecolor{fillColor}{RGB}{8,95,2}

\path[fill=fillColor] ( 25.56, 96.79) --
	( 27.68, 93.13) --
	( 23.45, 93.13) --
	cycle;
\definecolor{drawColor}{RGB}{231,41,138}

\path[draw=drawColor,line width= 0.5pt,line join=round,line cap=round] ( 25.56, 96.79) --
	( 27.68, 93.13) --
	( 23.45, 93.13) --
	cycle;
\definecolor{fillColor}{RGB}{102,166,30}

\path[fill=fillColor] ( 31.88, 47.44) --
	( 35.01, 47.44) --
	( 35.01, 50.58) --
	( 31.88, 50.58) --
	cycle;
\definecolor{fillColor}{RGB}{31,119,180}

\path[fill=fillColor] ( 31.88, 37.45) --
	( 33.44, 39.02) --
	( 35.01, 37.45) --
	( 33.44, 35.89) --
	cycle;
\definecolor{drawColor}{RGB}{246,141,98}

\path[draw=drawColor,line width= 0.5pt,line join=round,line cap=round] ( 31.88, 46.55) -- ( 35.01, 49.69);

\path[draw=drawColor,line width= 0.5pt,line join=round,line cap=round] ( 31.88, 49.69) -- ( 35.01, 46.55);
\definecolor{fillColor}{RGB}{27,158,119}

\path[fill=fillColor] ( 33.44, 94.35) circle (  1.57);
\definecolor{drawColor}{RGB}{117,112,179}

\path[draw=drawColor,line width= 0.5pt,line join=round,line cap=round] ( 33.44, 94.35) circle (  1.57);
\definecolor{fillColor}{RGB}{8,95,2}

\path[fill=fillColor] ( 33.44, 96.79) --
	( 35.56, 93.13) --
	( 31.33, 93.13) --
	cycle;
\definecolor{drawColor}{RGB}{231,41,138}

\path[draw=drawColor,line width= 0.5pt,line join=round,line cap=round] ( 33.44, 96.79) --
	( 35.56, 93.13) --
	( 31.33, 93.13) --
	cycle;
\definecolor{fillColor}{RGB}{102,166,30}

\path[fill=fillColor] ( 39.76, 37.96) --
	( 42.89, 37.96) --
	( 42.89, 41.10) --
	( 39.76, 41.10) --
	cycle;
\definecolor{fillColor}{RGB}{31,119,180}

\path[fill=fillColor] ( 39.76, 31.82) --
	( 41.32, 33.39) --
	( 42.89, 31.82) --
	( 41.32, 30.26) --
	cycle;
\definecolor{drawColor}{RGB}{246,141,98}

\path[draw=drawColor,line width= 0.5pt,line join=round,line cap=round] ( 39.76, 38.26) -- ( 42.89, 41.39);

\path[draw=drawColor,line width= 0.5pt,line join=round,line cap=round] ( 39.76, 41.39) -- ( 42.89, 38.26);
\definecolor{fillColor}{RGB}{27,158,119}

\path[fill=fillColor] ( 41.32, 94.35) circle (  1.57);
\definecolor{drawColor}{RGB}{117,112,179}

\path[draw=drawColor,line width= 0.5pt,line join=round,line cap=round] ( 41.32, 94.05) circle (  1.57);
\definecolor{fillColor}{RGB}{8,95,2}

\path[fill=fillColor] ( 41.32, 96.79) --
	( 43.44, 93.13) --
	( 39.21, 93.13) --
	cycle;
\definecolor{drawColor}{RGB}{231,41,138}

\path[draw=drawColor,line width= 0.5pt,line join=round,line cap=round] ( 41.32, 96.79) --
	( 43.44, 93.13) --
	( 39.21, 93.13) --
	cycle;
\definecolor{fillColor}{RGB}{102,166,30}

\path[fill=fillColor] ( 47.64, 31.44) --
	( 50.77, 31.44) --
	( 50.77, 34.58) --
	( 47.64, 34.58) --
	cycle;
\definecolor{fillColor}{RGB}{31,119,180}

\path[fill=fillColor] ( 47.64, 29.16) --
	( 49.20, 30.73) --
	( 50.77, 29.16) --
	( 49.20, 27.59) --
	cycle;
\definecolor{drawColor}{RGB}{246,141,98}

\path[draw=drawColor,line width= 0.5pt,line join=round,line cap=round] ( 47.64, 32.18) -- ( 50.77, 35.32);

\path[draw=drawColor,line width= 0.5pt,line join=round,line cap=round] ( 47.64, 35.32) -- ( 50.77, 32.18);
\definecolor{fillColor}{RGB}{27,158,119}

\path[fill=fillColor] ( 49.20, 93.61) circle (  1.57);
\definecolor{drawColor}{RGB}{117,112,179}

\path[draw=drawColor,line width= 0.5pt,line join=round,line cap=round] ( 49.20, 85.01) circle (  1.57);
\definecolor{fillColor}{RGB}{8,95,2}

\path[fill=fillColor] ( 49.20, 96.79) --
	( 51.32, 93.13) --
	( 47.09, 93.13) --
	cycle;
\definecolor{drawColor}{RGB}{231,41,138}

\path[draw=drawColor,line width= 0.5pt,line join=round,line cap=round] ( 49.20, 94.86) --
	( 51.32, 91.20) --
	( 47.09, 91.20) --
	cycle;
\definecolor{fillColor}{RGB}{102,166,30}

\path[fill=fillColor] ( 55.52, 27.00) --
	( 58.65, 27.00) --
	( 58.65, 30.13) --
	( 55.52, 30.13) --
	cycle;
\definecolor{fillColor}{RGB}{31,119,180}

\path[fill=fillColor] ( 55.52, 27.23) --
	( 57.08, 28.80) --
	( 58.65, 27.23) --
	( 57.08, 25.66) --
	cycle;
\definecolor{drawColor}{RGB}{246,141,98}

\path[draw=drawColor,line width= 0.5pt,line join=round,line cap=round] ( 55.52, 28.03) -- ( 58.65, 31.17);

\path[draw=drawColor,line width= 0.5pt,line join=round,line cap=round] ( 55.52, 31.17) -- ( 58.65, 28.03);
\definecolor{fillColor}{RGB}{27,158,119}

\path[fill=fillColor] ( 57.08, 81.75) circle (  1.57);
\definecolor{drawColor}{RGB}{117,112,179}

\path[draw=drawColor,line width= 0.5pt,line join=round,line cap=round] ( 57.08, 60.57) circle (  1.57);
\definecolor{fillColor}{RGB}{8,95,2}

\path[fill=fillColor] ( 57.08, 95.90) --
	( 59.20, 92.24) --
	( 54.97, 92.24) --
	cycle;
\definecolor{drawColor}{RGB}{231,41,138}

\path[draw=drawColor,line width= 0.5pt,line join=round,line cap=round] ( 57.08, 78.56) --
	( 59.20, 74.90) --
	( 54.97, 74.90) --
	cycle;
\definecolor{fillColor}{RGB}{102,166,30}

\path[fill=fillColor] ( 63.40, 24.48) --
	( 66.53, 24.48) --
	( 66.53, 27.62) --
	( 63.40, 27.62) --
	cycle;
\definecolor{fillColor}{RGB}{31,119,180}

\path[fill=fillColor] ( 63.40, 26.19) --
	( 64.96, 27.76) --
	( 66.53, 26.19) --
	( 64.96, 24.63) --
	cycle;
\definecolor{drawColor}{RGB}{246,141,98}

\path[draw=drawColor,line width= 0.5pt,line join=round,line cap=round] ( 63.40, 25.81) -- ( 66.53, 28.95);

\path[draw=drawColor,line width= 0.5pt,line join=round,line cap=round] ( 63.40, 28.95) -- ( 66.53, 25.81);
\definecolor{fillColor}{RGB}{27,158,119}

\path[fill=fillColor] ( 64.96, 58.79) circle (  1.57);
\definecolor{drawColor}{RGB}{117,112,179}

\path[draw=drawColor,line width= 0.5pt,line join=round,line cap=round] ( 64.96, 41.60) circle (  1.57);
\definecolor{fillColor}{RGB}{8,95,2}

\path[fill=fillColor] ( 64.96, 77.97) --
	( 67.08, 74.31) --
	( 62.85, 74.31) --
	cycle;
\definecolor{drawColor}{RGB}{231,41,138}

\path[draw=drawColor,line width= 0.5pt,line join=round,line cap=round] ( 64.96, 54.86) --
	( 67.08, 51.20) --
	( 62.85, 51.20) --
	cycle;
\definecolor{fillColor}{RGB}{102,166,30}

\path[fill=fillColor] ( 71.28, 23.44) --
	( 74.41, 23.44) --
	( 74.41, 26.58) --
	( 71.28, 26.58) --
	cycle;
\definecolor{fillColor}{RGB}{31,119,180}

\path[fill=fillColor] ( 71.28, 25.16) --
	( 72.84, 26.73) --
	( 74.41, 25.16) --
	( 72.84, 23.59) --
	cycle;
\definecolor{drawColor}{RGB}{246,141,98}

\path[draw=drawColor,line width= 0.5pt,line join=round,line cap=round] ( 71.28, 23.74) -- ( 74.41, 26.87);

\path[draw=drawColor,line width= 0.5pt,line join=round,line cap=round] ( 71.28, 26.87) -- ( 74.41, 23.74);
\definecolor{fillColor}{RGB}{27,158,119}

\path[fill=fillColor] ( 72.84, 38.64) circle (  1.57);
\definecolor{drawColor}{RGB}{117,112,179}

\path[draw=drawColor,line width= 0.5pt,line join=round,line cap=round] ( 72.84, 30.20) circle (  1.57);
\definecolor{fillColor}{RGB}{8,95,2}

\path[fill=fillColor] ( 72.84, 52.78) --
	( 74.96, 49.12) --
	( 70.73, 49.12) --
	cycle;
\definecolor{drawColor}{RGB}{231,41,138}

\path[draw=drawColor,line width= 0.5pt,line join=round,line cap=round] ( 72.84, 38.41) --
	( 74.96, 34.75) --
	( 70.73, 34.75) --
	cycle;
\definecolor{fillColor}{RGB}{102,166,30}

\path[fill=fillColor] ( 79.16, 22.55) --
	( 82.29, 22.55) --
	( 82.29, 25.69) --
	( 79.16, 25.69) --
	cycle;
\definecolor{fillColor}{RGB}{31,119,180}

\path[fill=fillColor] ( 79.16, 24.57) --
	( 80.72, 26.13) --
	( 82.29, 24.57) --
	( 80.72, 23.00) --
	cycle;
\definecolor{drawColor}{RGB}{246,141,98}

\path[draw=drawColor,line width= 0.5pt,line join=round,line cap=round] ( 79.16, 22.85) -- ( 82.29, 25.99);

\path[draw=drawColor,line width= 0.5pt,line join=round,line cap=round] ( 79.16, 25.99) -- ( 82.29, 22.85);
\definecolor{fillColor}{RGB}{27,158,119}

\path[fill=fillColor] ( 80.72, 30.05) circle (  1.57);
\definecolor{drawColor}{RGB}{117,112,179}

\path[draw=drawColor,line width= 0.5pt,line join=round,line cap=round] ( 80.72, 26.05) circle (  1.57);
\definecolor{fillColor}{RGB}{8,95,2}

\path[fill=fillColor] ( 80.72, 34.71) --
	( 82.84, 31.05) --
	( 78.61, 31.05) --
	cycle;
\definecolor{drawColor}{RGB}{231,41,138}

\path[draw=drawColor,line width= 0.5pt,line join=round,line cap=round] ( 80.72, 29.52) --
	( 82.84, 25.86) --
	( 78.61, 25.86) --
	cycle;
\definecolor{fillColor}{RGB}{102,166,30}

\path[fill=fillColor] ( 87.04, 21.81) --
	( 90.17, 21.81) --
	( 90.17, 24.95) --
	( 87.04, 24.95) --
	cycle;
\definecolor{fillColor}{RGB}{31,119,180}

\path[fill=fillColor] ( 87.04, 23.97) --
	( 88.60, 25.54) --
	( 90.17, 23.97) --
	( 88.60, 22.40) --
	cycle;
\definecolor{drawColor}{RGB}{246,141,98}

\path[draw=drawColor,line width= 0.5pt,line join=round,line cap=round] ( 87.04, 21.66) -- ( 90.17, 24.80);

\path[draw=drawColor,line width= 0.5pt,line join=round,line cap=round] ( 87.04, 24.80) -- ( 90.17, 21.66);
\definecolor{fillColor}{RGB}{27,158,119}

\path[fill=fillColor] ( 88.60, 25.60) circle (  1.57);
\definecolor{drawColor}{RGB}{117,112,179}

\path[draw=drawColor,line width= 0.5pt,line join=round,line cap=round] ( 88.60, 24.42) circle (  1.57);
\definecolor{fillColor}{RGB}{8,95,2}

\path[fill=fillColor] ( 88.60, 28.34) --
	( 90.72, 24.68) --
	( 86.49, 24.68) --
	cycle;
\definecolor{drawColor}{RGB}{231,41,138}

\path[draw=drawColor,line width= 0.5pt,line join=round,line cap=round] ( 88.60, 27.01) --
	( 90.72, 23.35) --
	( 86.49, 23.35) --
	cycle;
\definecolor{fillColor}{RGB}{102,166,30}

\path[fill=fillColor] ( 94.92, 21.37) --
	( 98.05, 21.37) --
	( 98.05, 24.50) --
	( 94.92, 24.50) --
	cycle;
\definecolor{fillColor}{RGB}{31,119,180}

\path[fill=fillColor] ( 94.92, 24.27) --
	( 96.48, 25.84) --
	( 98.05, 24.27) --
	( 96.48, 22.70) --
	cycle;
\definecolor{drawColor}{RGB}{246,141,98}

\path[draw=drawColor,line width= 0.5pt,line join=round,line cap=round] ( 94.92, 21.37) -- ( 98.05, 24.50);

\path[draw=drawColor,line width= 0.5pt,line join=round,line cap=round] ( 94.92, 24.50) -- ( 98.05, 21.37);
\definecolor{fillColor}{RGB}{27,158,119}

\path[fill=fillColor] ( 96.48, 24.27) circle (  1.57);
\definecolor{drawColor}{RGB}{117,112,179}

\path[draw=drawColor,line width= 0.5pt,line join=round,line cap=round] ( 96.48, 23.82) circle (  1.57);
\definecolor{fillColor}{RGB}{8,95,2}

\path[fill=fillColor] ( 96.48, 25.97) --
	( 98.60, 22.31) --
	( 94.37, 22.31) --
	cycle;
\definecolor{drawColor}{RGB}{231,41,138}

\path[draw=drawColor,line width= 0.5pt,line join=round,line cap=round] ( 96.48, 26.26) --
	( 98.60, 22.60) --
	( 94.37, 22.60) --
	cycle;
\definecolor{fillColor}{RGB}{102,166,30}

\path[fill=fillColor] (102.80, 21.37) --
	(105.93, 21.37) --
	(105.93, 24.50) --
	(102.80, 24.50) --
	cycle;
\definecolor{fillColor}{RGB}{31,119,180}

\path[fill=fillColor] (102.80, 23.82) --
	(104.36, 25.39) --
	(105.93, 23.82) --
	(104.36, 22.26) --
	cycle;
\definecolor{drawColor}{RGB}{246,141,98}

\path[draw=drawColor,line width= 0.5pt,line join=round,line cap=round] (102.80, 21.37) -- (105.93, 24.50);

\path[draw=drawColor,line width= 0.5pt,line join=round,line cap=round] (102.80, 24.50) -- (105.93, 21.37);
\definecolor{fillColor}{RGB}{27,158,119}

\path[fill=fillColor] (104.36, 23.97) circle (  1.57);
\definecolor{drawColor}{RGB}{117,112,179}

\path[draw=drawColor,line width= 0.5pt,line join=round,line cap=round] (104.36, 23.68) circle (  1.57);
\definecolor{fillColor}{RGB}{8,95,2}

\path[fill=fillColor] (104.36, 25.82) --
	(106.48, 22.16) --
	(102.25, 22.16) --
	cycle;
\definecolor{drawColor}{RGB}{231,41,138}

\path[draw=drawColor,line width= 0.5pt,line join=round,line cap=round] (104.36, 26.26) --
	(106.48, 22.60) --
	(102.25, 22.60) --
	cycle;
\definecolor{drawColor}{gray}{0.20}

\path[draw=drawColor,line width= 0.5pt,line join=round,line cap=round] ( 21.62, 16.56) rectangle (108.31, 98.05);
\end{scope}
\begin{scope}
\path[clip] (  0.00,  0.00) rectangle (216.81,108.41);
\definecolor{drawColor}{gray}{0.30}

\node[text=drawColor,anchor=base east,inner sep=0pt, outer sep=0pt, scale=  0.50] at ( 17.57, 18.55) {0.00};

\node[text=drawColor,anchor=base east,inner sep=0pt, outer sep=0pt, scale=  0.50] at ( 17.57, 37.07) {0.25};

\node[text=drawColor,anchor=base east,inner sep=0pt, outer sep=0pt, scale=  0.50] at ( 17.57, 55.59) {0.50};

\node[text=drawColor,anchor=base east,inner sep=0pt, outer sep=0pt, scale=  0.50] at ( 17.57, 74.10) {0.75};

\node[text=drawColor,anchor=base east,inner sep=0pt, outer sep=0pt, scale=  0.50] at ( 17.57, 92.62) {1.00};
\end{scope}
\begin{scope}
\path[clip] (  0.00,  0.00) rectangle (216.81,108.41);
\definecolor{drawColor}{gray}{0.20}

\path[draw=drawColor,line width= 0.5pt,line join=round] ( 19.37, 20.27) --
	( 21.62, 20.27);

\path[draw=drawColor,line width= 0.5pt,line join=round] ( 19.37, 38.79) --
	( 21.62, 38.79);

\path[draw=drawColor,line width= 0.5pt,line join=round] ( 19.37, 57.31) --
	( 21.62, 57.31);

\path[draw=drawColor,line width= 0.5pt,line join=round] ( 19.37, 75.83) --
	( 21.62, 75.83);

\path[draw=drawColor,line width= 0.5pt,line join=round] ( 19.37, 94.35) --
	( 21.62, 94.35);
\end{scope}
\begin{scope}
\path[clip] (  0.00,  0.00) rectangle (216.81,108.41);
\definecolor{drawColor}{gray}{0.20}

\path[draw=drawColor,line width= 0.5pt,line join=round] ( 25.56, 14.31) --
	( 25.56, 16.56);

\path[draw=drawColor,line width= 0.5pt,line join=round] ( 45.26, 14.31) --
	( 45.26, 16.56);

\path[draw=drawColor,line width= 0.5pt,line join=round] ( 64.96, 14.31) --
	( 64.96, 16.56);

\path[draw=drawColor,line width= 0.5pt,line join=round] ( 84.66, 14.31) --
	( 84.66, 16.56);

\path[draw=drawColor,line width= 0.5pt,line join=round] (104.36, 14.31) --
	(104.36, 16.56);
\end{scope}
\begin{scope}
\path[clip] (  0.00,  0.00) rectangle (216.81,108.41);
\definecolor{drawColor}{gray}{0.30}

\node[text=drawColor,anchor=base,inner sep=0pt, outer sep=0pt, scale=  0.50] at ( 25.56,  9.07) {0.00};

\node[text=drawColor,anchor=base,inner sep=0pt, outer sep=0pt, scale=  0.50] at ( 45.26,  9.07) {0.25};

\node[text=drawColor,anchor=base,inner sep=0pt, outer sep=0pt, scale=  0.50] at ( 64.96,  9.07) {0.50};

\node[text=drawColor,anchor=base,inner sep=0pt, outer sep=0pt, scale=  0.50] at ( 84.66,  9.07) {0.75};

\node[text=drawColor,anchor=base,inner sep=0pt, outer sep=0pt, scale=  0.50] at (104.36,  9.07) {1.00};
\end{scope}
\begin{scope}
\path[clip] (  0.00,  0.00) rectangle (216.81,108.41);
\definecolor{drawColor}{RGB}{0,0,0}

\node[text=drawColor,anchor=base,inner sep=0pt, outer sep=0pt, scale=  0.60] at ( 64.96,  1.42) {Homoscedasticity Level $(\lambda)$};
\end{scope}
\begin{scope}
\path[clip] (  0.00,  0.00) rectangle (216.81,108.41);
\definecolor{drawColor}{RGB}{0,0,0}

\node[text=drawColor,rotate= 90.00,anchor=base,inner sep=0pt, outer sep=0pt, scale=  0.60] at (  4.23, 57.31) {Power};
\end{scope}
\begin{scope}
\path[clip] (108.41,  0.00) rectangle (216.81, 98.55);
\definecolor{drawColor}{RGB}{255,255,255}
\definecolor{fillColor}{RGB}{255,255,255}

\path[draw=drawColor,line width= 0.5pt,line join=round,line cap=round,fill=fillColor] (108.41,  0.00) rectangle (216.81, 98.55);
\end{scope}
\begin{scope}
\path[clip] (130.03, 16.56) rectangle (216.71, 98.05);
\definecolor{fillColor}{RGB}{255,255,255}

\path[fill=fillColor] (130.03, 16.56) rectangle (216.71, 98.05);
\definecolor{drawColor}{RGB}{27,158,119}

\path[draw=drawColor,line width= 0.2pt,line join=round] (133.97, 78.79) --
	(141.85, 73.46) --
	(149.73, 58.05) --
	(157.61, 34.05) --
	(165.49, 21.60) --
	(173.37, 20.27) --
	(181.25, 20.27) --
	(189.13, 20.27) --
	(197.01, 20.27) --
	(204.89, 20.27) --
	(212.77, 20.27);
\definecolor{drawColor}{RGB}{117,112,179}

\path[draw=drawColor,line width= 0.2pt,line join=round] (133.97, 94.35) --
	(141.85, 94.35) --
	(149.73, 94.35) --
	(157.61, 87.23) --
	(165.49, 40.57) --
	(173.37, 20.86) --
	(181.25, 20.27) --
	(189.13, 20.27) --
	(197.01, 20.27) --
	(204.89, 20.27) --
	(212.77, 20.27);
\definecolor{drawColor}{RGB}{8,95,2}

\path[draw=drawColor,line width= 0.2pt,line join=round] (133.97, 94.35) --
	(141.85, 94.35) --
	(149.73, 94.35) --
	(157.61, 94.35) --
	(165.49, 94.35) --
	(173.37, 94.35) --
	(181.25, 94.35) --
	(189.13, 94.35) --
	(197.01, 94.35) --
	(204.89, 94.05) --
	(212.77, 93.61);
\definecolor{drawColor}{RGB}{231,41,138}

\path[draw=drawColor,line width= 0.2pt,line join=round] (133.97, 94.35) --
	(141.85, 94.35) --
	(149.73, 94.35) --
	(157.61, 94.35) --
	(165.49, 94.35) --
	(173.37, 94.35) --
	(181.25, 94.35) --
	(189.13, 94.35) --
	(197.01, 94.20) --
	(204.89, 94.20) --
	(212.77, 93.61);
\definecolor{fillColor}{RGB}{27,158,119}

\path[fill=fillColor] (133.97, 78.79) circle (  1.57);
\definecolor{drawColor}{RGB}{117,112,179}

\path[draw=drawColor,line width= 0.5pt,line join=round,line cap=round] (133.97, 94.35) circle (  1.57);
\definecolor{fillColor}{RGB}{8,95,2}

\path[fill=fillColor] (133.97, 96.79) --
	(136.08, 93.13) --
	(131.86, 93.13) --
	cycle;
\definecolor{drawColor}{RGB}{231,41,138}

\path[draw=drawColor,line width= 0.5pt,line join=round,line cap=round] (133.97, 96.79) --
	(136.08, 93.13) --
	(131.86, 93.13) --
	cycle;
\definecolor{fillColor}{RGB}{27,158,119}

\path[fill=fillColor] (141.85, 73.46) circle (  1.57);
\definecolor{drawColor}{RGB}{117,112,179}

\path[draw=drawColor,line width= 0.5pt,line join=round,line cap=round] (141.85, 94.35) circle (  1.57);
\definecolor{fillColor}{RGB}{8,95,2}

\path[fill=fillColor] (141.85, 96.79) --
	(143.96, 93.13) --
	(139.74, 93.13) --
	cycle;
\definecolor{drawColor}{RGB}{231,41,138}

\path[draw=drawColor,line width= 0.5pt,line join=round,line cap=round] (141.85, 96.79) --
	(143.96, 93.13) --
	(139.74, 93.13) --
	cycle;
\definecolor{fillColor}{RGB}{27,158,119}

\path[fill=fillColor] (149.73, 58.05) circle (  1.57);
\definecolor{drawColor}{RGB}{117,112,179}

\path[draw=drawColor,line width= 0.5pt,line join=round,line cap=round] (149.73, 94.35) circle (  1.57);
\definecolor{fillColor}{RGB}{8,95,2}

\path[fill=fillColor] (149.73, 96.79) --
	(151.84, 93.13) --
	(147.62, 93.13) --
	cycle;
\definecolor{drawColor}{RGB}{231,41,138}

\path[draw=drawColor,line width= 0.5pt,line join=round,line cap=round] (149.73, 96.79) --
	(151.84, 93.13) --
	(147.62, 93.13) --
	cycle;
\definecolor{fillColor}{RGB}{27,158,119}

\path[fill=fillColor] (157.61, 34.05) circle (  1.57);
\definecolor{drawColor}{RGB}{117,112,179}

\path[draw=drawColor,line width= 0.5pt,line join=round,line cap=round] (157.61, 87.23) circle (  1.57);
\definecolor{fillColor}{RGB}{8,95,2}

\path[fill=fillColor] (157.61, 96.79) --
	(159.72, 93.13) --
	(155.50, 93.13) --
	cycle;
\definecolor{drawColor}{RGB}{231,41,138}

\path[draw=drawColor,line width= 0.5pt,line join=round,line cap=round] (157.61, 96.79) --
	(159.72, 93.13) --
	(155.50, 93.13) --
	cycle;
\definecolor{fillColor}{RGB}{27,158,119}

\path[fill=fillColor] (165.49, 21.60) circle (  1.57);
\definecolor{drawColor}{RGB}{117,112,179}

\path[draw=drawColor,line width= 0.5pt,line join=round,line cap=round] (165.49, 40.57) circle (  1.57);
\definecolor{fillColor}{RGB}{8,95,2}

\path[fill=fillColor] (165.49, 96.79) --
	(167.60, 93.13) --
	(163.38, 93.13) --
	cycle;
\definecolor{drawColor}{RGB}{231,41,138}

\path[draw=drawColor,line width= 0.5pt,line join=round,line cap=round] (165.49, 96.79) --
	(167.60, 93.13) --
	(163.38, 93.13) --
	cycle;
\definecolor{fillColor}{RGB}{27,158,119}

\path[fill=fillColor] (173.37, 20.27) circle (  1.57);
\definecolor{drawColor}{RGB}{117,112,179}

\path[draw=drawColor,line width= 0.5pt,line join=round,line cap=round] (173.37, 20.86) circle (  1.57);
\definecolor{fillColor}{RGB}{8,95,2}

\path[fill=fillColor] (173.37, 96.79) --
	(175.48, 93.13) --
	(171.26, 93.13) --
	cycle;
\definecolor{drawColor}{RGB}{231,41,138}

\path[draw=drawColor,line width= 0.5pt,line join=round,line cap=round] (173.37, 96.79) --
	(175.48, 93.13) --
	(171.26, 93.13) --
	cycle;
\definecolor{fillColor}{RGB}{27,158,119}

\path[fill=fillColor] (181.25, 20.27) circle (  1.57);
\definecolor{drawColor}{RGB}{117,112,179}

\path[draw=drawColor,line width= 0.5pt,line join=round,line cap=round] (181.25, 20.27) circle (  1.57);
\definecolor{fillColor}{RGB}{8,95,2}

\path[fill=fillColor] (181.25, 96.79) --
	(183.36, 93.13) --
	(179.14, 93.13) --
	cycle;
\definecolor{drawColor}{RGB}{231,41,138}

\path[draw=drawColor,line width= 0.5pt,line join=round,line cap=round] (181.25, 96.79) --
	(183.36, 93.13) --
	(179.14, 93.13) --
	cycle;
\definecolor{fillColor}{RGB}{27,158,119}

\path[fill=fillColor] (189.13, 20.27) circle (  1.57);
\definecolor{drawColor}{RGB}{117,112,179}

\path[draw=drawColor,line width= 0.5pt,line join=round,line cap=round] (189.13, 20.27) circle (  1.57);
\definecolor{fillColor}{RGB}{8,95,2}

\path[fill=fillColor] (189.13, 96.79) --
	(191.24, 93.13) --
	(187.02, 93.13) --
	cycle;
\definecolor{drawColor}{RGB}{231,41,138}

\path[draw=drawColor,line width= 0.5pt,line join=round,line cap=round] (189.13, 96.79) --
	(191.24, 93.13) --
	(187.02, 93.13) --
	cycle;
\definecolor{fillColor}{RGB}{27,158,119}

\path[fill=fillColor] (197.01, 20.27) circle (  1.57);
\definecolor{drawColor}{RGB}{117,112,179}

\path[draw=drawColor,line width= 0.5pt,line join=round,line cap=round] (197.01, 20.27) circle (  1.57);
\definecolor{fillColor}{RGB}{8,95,2}

\path[fill=fillColor] (197.01, 96.79) --
	(199.12, 93.13) --
	(194.90, 93.13) --
	cycle;
\definecolor{drawColor}{RGB}{231,41,138}

\path[draw=drawColor,line width= 0.5pt,line join=round,line cap=round] (197.01, 96.64) --
	(199.12, 92.98) --
	(194.90, 92.98) --
	cycle;
\definecolor{fillColor}{RGB}{27,158,119}

\path[fill=fillColor] (204.89, 20.27) circle (  1.57);
\definecolor{drawColor}{RGB}{117,112,179}

\path[draw=drawColor,line width= 0.5pt,line join=round,line cap=round] (204.89, 20.27) circle (  1.57);
\definecolor{fillColor}{RGB}{8,95,2}

\path[fill=fillColor] (204.89, 96.49) --
	(207.00, 92.83) --
	(202.78, 92.83) --
	cycle;
\definecolor{drawColor}{RGB}{231,41,138}

\path[draw=drawColor,line width= 0.5pt,line join=round,line cap=round] (204.89, 96.64) --
	(207.00, 92.98) --
	(202.78, 92.98) --
	cycle;
\definecolor{fillColor}{RGB}{27,158,119}

\path[fill=fillColor] (212.77, 20.27) circle (  1.57);
\definecolor{drawColor}{RGB}{117,112,179}

\path[draw=drawColor,line width= 0.5pt,line join=round,line cap=round] (212.77, 20.27) circle (  1.57);
\definecolor{fillColor}{RGB}{8,95,2}

\path[fill=fillColor] (212.77, 96.05) --
	(214.88, 92.39) --
	(210.66, 92.39) --
	cycle;
\definecolor{drawColor}{RGB}{231,41,138}

\path[draw=drawColor,line width= 0.5pt,line join=round,line cap=round] (212.77, 96.05) --
	(214.88, 92.39) --
	(210.66, 92.39) --
	cycle;
\definecolor{drawColor}{gray}{0.20}

\path[draw=drawColor,line width= 0.5pt,line join=round,line cap=round] (130.03, 16.56) rectangle (216.71, 98.05);
\end{scope}
\begin{scope}
\path[clip] (  0.00,  0.00) rectangle (216.81,108.41);
\definecolor{drawColor}{gray}{0.30}

\node[text=drawColor,anchor=base east,inner sep=0pt, outer sep=0pt, scale=  0.50] at (125.98, 18.55) {0.00};

\node[text=drawColor,anchor=base east,inner sep=0pt, outer sep=0pt, scale=  0.50] at (125.98, 37.07) {0.25};

\node[text=drawColor,anchor=base east,inner sep=0pt, outer sep=0pt, scale=  0.50] at (125.98, 55.59) {0.50};

\node[text=drawColor,anchor=base east,inner sep=0pt, outer sep=0pt, scale=  0.50] at (125.98, 74.10) {0.75};

\node[text=drawColor,anchor=base east,inner sep=0pt, outer sep=0pt, scale=  0.50] at (125.98, 92.62) {1.00};
\end{scope}
\begin{scope}
\path[clip] (  0.00,  0.00) rectangle (216.81,108.41);
\definecolor{drawColor}{gray}{0.20}

\path[draw=drawColor,line width= 0.5pt,line join=round] (127.78, 20.27) --
	(130.03, 20.27);

\path[draw=drawColor,line width= 0.5pt,line join=round] (127.78, 38.79) --
	(130.03, 38.79);

\path[draw=drawColor,line width= 0.5pt,line join=round] (127.78, 57.31) --
	(130.03, 57.31);

\path[draw=drawColor,line width= 0.5pt,line join=round] (127.78, 75.83) --
	(130.03, 75.83);

\path[draw=drawColor,line width= 0.5pt,line join=round] (127.78, 94.35) --
	(130.03, 94.35);
\end{scope}
\begin{scope}
\path[clip] (  0.00,  0.00) rectangle (216.81,108.41);
\definecolor{drawColor}{gray}{0.20}

\path[draw=drawColor,line width= 0.5pt,line join=round] (133.97, 14.31) --
	(133.97, 16.56);

\path[draw=drawColor,line width= 0.5pt,line join=round] (153.67, 14.31) --
	(153.67, 16.56);

\path[draw=drawColor,line width= 0.5pt,line join=round] (173.37, 14.31) --
	(173.37, 16.56);

\path[draw=drawColor,line width= 0.5pt,line join=round] (193.07, 14.31) --
	(193.07, 16.56);

\path[draw=drawColor,line width= 0.5pt,line join=round] (212.77, 14.31) --
	(212.77, 16.56);
\end{scope}
\begin{scope}
\path[clip] (  0.00,  0.00) rectangle (216.81,108.41);
\definecolor{drawColor}{gray}{0.30}

\node[text=drawColor,anchor=base,inner sep=0pt, outer sep=0pt, scale=  0.50] at (133.97,  9.07) {0.00};

\node[text=drawColor,anchor=base,inner sep=0pt, outer sep=0pt, scale=  0.50] at (153.67,  9.07) {0.25};

\node[text=drawColor,anchor=base,inner sep=0pt, outer sep=0pt, scale=  0.50] at (173.37,  9.07) {0.50};

\node[text=drawColor,anchor=base,inner sep=0pt, outer sep=0pt, scale=  0.50] at (193.07,  9.07) {0.75};

\node[text=drawColor,anchor=base,inner sep=0pt, outer sep=0pt, scale=  0.50] at (212.77,  9.07) {1.00};
\end{scope}
\begin{scope}
\path[clip] (  0.00,  0.00) rectangle (216.81,108.41);
\definecolor{drawColor}{RGB}{0,0,0}

\node[text=drawColor,anchor=base,inner sep=0pt, outer sep=0pt, scale=  0.60] at (173.37,  1.42) {Noise Scale $(\lambda)$};
\end{scope}
\begin{scope}
\path[clip] (  0.00,  0.00) rectangle (216.81,108.41);
\definecolor{drawColor}{RGB}{0,0,0}

\node[text=drawColor,rotate= 90.00,anchor=base,inner sep=0pt, outer sep=0pt, scale=  0.60] at (112.64, 57.31) {Power};
\end{scope}
\end{tikzpicture}

%% file: fig/tikz_combined/power_MSD_time_final.tex
\begin{tikzpicture}[x=1pt,y=1pt]
\definecolor{fillColor}{RGB}{255,255,255}
\path[use as bounding box,fill=fillColor,fill opacity=0.00] (0,0) rectangle (216.81,108.41);
\begin{scope}
\path[clip] (  0.00,  0.00) rectangle (216.81,108.41);
\definecolor{drawColor}{RGB}{255,255,255}
\definecolor{fillColor}{RGB}{255,255,255}

\path[draw=drawColor,line width= 1.1pt,line join=round,line cap=round,fill=fillColor] (  9.28, 94.50) rectangle (207.53,112.46);
\end{scope}
\begin{scope}
\path[clip] (  0.00,  0.00) rectangle (216.81,108.41);
\definecolor{drawColor}{RGB}{255,255,255}
\definecolor{fillColor}{RGB}{255,255,255}

\path[draw=drawColor,line width= 1.1pt,line join=round,line cap=round,fill=fillColor] (  9.28, 94.50) rectangle (207.53,112.46);
\end{scope}
\begin{scope}
\path[clip] (  0.00,  0.00) rectangle (216.81,108.41);
\definecolor{drawColor}{RGB}{31,119,180}

\path[draw=drawColor,line width= 0.5pt,line join=round] ( 12.41,103.48) -- ( 21.52,103.48);
\end{scope}
\begin{scope}
\path[clip] (  0.00,  0.00) rectangle (216.81,108.41);
\definecolor{fillColor}{RGB}{31,119,180}

\path[fill=fillColor] ( 14.86,103.48) --
	( 16.97,105.58) --
	( 19.07,103.48) --
	( 16.97,101.37) --
	cycle;
\end{scope}
\begin{scope}
\path[clip] (  0.00,  0.00) rectangle (216.81,108.41);
\definecolor{drawColor}{RGB}{246,141,98}

\path[draw=drawColor,line width= 0.5pt,line join=round] ( 35.32,103.48) -- ( 44.42,103.48);
\end{scope}
\begin{scope}
\path[clip] (  0.00,  0.00) rectangle (216.81,108.41);
\definecolor{drawColor}{RGB}{246,141,98}

\path[draw=drawColor,line width= 0.5pt,line join=round,line cap=round] ( 37.76,101.37) -- ( 41.97,105.58);

\path[draw=drawColor,line width= 0.5pt,line join=round,line cap=round] ( 37.76,105.58) -- ( 41.97,101.37);
\end{scope}
\begin{scope}
\path[clip] (  0.00,  0.00) rectangle (216.81,108.41);
\definecolor{drawColor}{RGB}{27,158,119}

\path[draw=drawColor,line width= 0.5pt,line join=round] ( 61.58,103.48) -- ( 70.68,103.48);
\end{scope}
\begin{scope}
\path[clip] (  0.00,  0.00) rectangle (216.81,108.41);
\definecolor{fillColor}{RGB}{27,158,119}

\path[fill=fillColor] ( 66.13,103.48) circle (  2.10);
\end{scope}
\begin{scope}
\path[clip] (  0.00,  0.00) rectangle (216.81,108.41);
\definecolor{drawColor}{RGB}{117,112,179}

\path[draw=drawColor,line width= 0.5pt,line join=round] ( 98.35,103.48) -- (107.45,103.48);
\end{scope}
\begin{scope}
\path[clip] (  0.00,  0.00) rectangle (216.81,108.41);
\definecolor{drawColor}{RGB}{117,112,179}

\path[draw=drawColor,line width= 0.5pt,line join=round,line cap=round] (102.90,103.48) circle (  2.10);
\end{scope}
\begin{scope}
\path[clip] (  0.00,  0.00) rectangle (216.81,108.41);
\definecolor{drawColor}{RGB}{8,95,2}

\path[draw=drawColor,line width= 0.5pt,line join=round] (136.38,103.48) -- (145.48,103.48);
\end{scope}
\begin{scope}
\path[clip] (  0.00,  0.00) rectangle (216.81,108.41);
\definecolor{fillColor}{RGB}{8,95,2}

\path[fill=fillColor] (140.93,106.75) --
	(143.76,101.84) --
	(138.09,101.84) --
	cycle;
\end{scope}
\begin{scope}
\path[clip] (  0.00,  0.00) rectangle (216.81,108.41);
\definecolor{drawColor}{RGB}{231,41,138}

\path[draw=drawColor,line width= 0.5pt,line join=round] (173.14,103.48) -- (182.25,103.48);
\end{scope}
\begin{scope}
\path[clip] (  0.00,  0.00) rectangle (216.81,108.41);
\definecolor{drawColor}{RGB}{231,41,138}

\path[draw=drawColor,line width= 0.5pt,line join=round,line cap=round] (177.70,106.75) --
	(180.53,101.84) --
	(174.86,101.84) --
	cycle;
\end{scope}
\begin{scope}
\path[clip] (  0.00,  0.00) rectangle (216.81,108.41);
\definecolor{drawColor}{RGB}{0,0,0}

\node[text=drawColor,anchor=base west,inner sep=0pt, outer sep=0pt, scale=  0.50] at ( 23.16,101.76) {\Large $T_n$};
\end{scope}
\begin{scope}
\path[clip] (  0.00,  0.00) rectangle (216.81,108.41);
\definecolor{drawColor}{RGB}{0,0,0}

\node[text=drawColor,anchor=base west,inner sep=0pt, outer sep=0pt, scale=  0.50] at ( 46.06,101.76) {\Large $\nu_n$};
\end{scope}
\begin{scope}
\path[clip] (  0.00,  0.00) rectangle (216.81,108.41);
\definecolor{drawColor}{RGB}{0,0,0}

\node[text=drawColor,anchor=base west,inner sep=0pt, outer sep=0pt, scale=  0.50] at ( 72.32,101.76) {\Large $\hat{\eta}^{\operatorname{RKHS}}$};
\end{scope}
\begin{scope}
\path[clip] (  0.00,  0.00) rectangle (216.81,108.41);
\definecolor{drawColor}{RGB}{0,0,0}

\node[text=drawColor,anchor=base west,inner sep=0pt, outer sep=0pt, scale=  0.50] at (109.09,101.76) {\Large $\hat{\eta}^{K\operatorname{-NN}}$};
\end{scope}
\begin{scope}
\path[clip] (  0.00,  0.00) rectangle (216.81,108.41);
\definecolor{drawColor}{RGB}{0,0,0}

\node[text=drawColor,anchor=base west,inner sep=0pt, outer sep=0pt, scale=  0.50] at (147.12,101.76) {\Large $\hat{D}^{\operatorname{RKHS}}$};;
\end{scope}
\begin{scope}
\path[clip] (  0.00,  0.00) rectangle (216.81,108.41);
\definecolor{drawColor}{RGB}{0,0,0}

\node[text=drawColor,anchor=base west,inner sep=0pt, outer sep=0pt, scale=  0.50] at (183.89,101.76) {\Large $\hat{D}^{K\operatorname{-NN}}$};;
\end{scope}
\begin{scope}
\path[clip] (  0.00,  0.00) rectangle (108.40, 98.55);
\definecolor{drawColor}{RGB}{255,255,255}
\definecolor{fillColor}{RGB}{255,255,255}

\path[draw=drawColor,line width= 0.5pt,line join=round,line cap=round,fill=fillColor] (  0.00,  0.00) rectangle (108.41, 98.55);
\end{scope}
\begin{scope}
\path[clip] ( 21.62, 16.56) rectangle (108.31, 98.05);
\definecolor{fillColor}{RGB}{255,255,255}

\path[fill=fillColor] ( 21.62, 16.56) rectangle (108.31, 98.05);
\definecolor{drawColor}{RGB}{31,119,180}

\path[draw=drawColor,line width= 0.2pt,line join=round] ( 25.56, 22.12) --
	( 41.62, 21.01) --
	( 51.02, 21.01) --
	( 57.68, 22.12) --
	( 62.85, 21.38) --
	( 67.08, 21.38) --
	( 73.74, 22.12) --
	( 78.91, 23.60) --
	( 83.14, 22.12) --
	( 86.71, 23.97) --
	( 88.31, 23.60) --
	( 89.80, 23.60) --
	( 92.53, 25.08) --
	( 94.97, 22.49) --
	(100.14, 27.31) --
	(104.36, 28.05);
\definecolor{drawColor}{RGB}{246,141,98}

\path[draw=drawColor,line width= 0.2pt,line join=round] ( 25.56, 20.64) --
	( 41.62, 21.38) --
	( 51.02, 21.75) --
	( 57.68, 23.97) --
	( 62.85, 27.68) --
	( 67.08, 25.45) --
	( 73.74, 25.82) --
	( 78.91, 24.71) --
	( 83.14, 31.38) --
	( 86.71, 29.16) --
	( 88.31, 33.23) --
	( 89.80, 34.34) --
	( 92.53, 36.20) --
	( 94.97, 36.94) --
	(100.14, 43.23) --
	(104.36, 52.49);
\definecolor{drawColor}{RGB}{27,158,119}

\path[draw=drawColor,line width= 0.2pt,line join=round] ( 25.56, 20.27) --
	( 41.62, 21.75) --
	( 51.02, 22.86) --
	( 57.68, 26.57) --
	( 62.85, 35.08) --
	( 67.08, 38.05) --
	( 73.74, 51.01) --
	( 78.91, 71.38) --
	( 83.14, 85.83) --
	( 86.71, 90.27) --
	( 88.31, 91.75) --
	( 89.80, 92.86) --
	( 92.53, 94.35) --
	( 94.97, 94.35) --
	(100.14, 94.35) --
	(104.36, 94.35);
\definecolor{drawColor}{RGB}{117,112,179}

\path[draw=drawColor,line width= 0.2pt,line join=round] ( 25.56, 20.64) --
	( 41.62, 21.38) --
	( 51.02, 21.38) --
	( 57.68, 21.38) --
	( 62.85, 24.34) --
	( 67.08, 25.45) --
	( 73.74, 27.68) --
	( 78.91, 29.16) --
	( 83.14, 33.23) --
	( 86.71, 35.45) --
	( 88.31, 38.79) --
	( 89.80, 39.53) --
	( 92.53, 46.57) --
	( 94.97, 53.23) --
	(100.14, 62.12) --
	(104.36, 73.23);
\definecolor{drawColor}{RGB}{8,95,2}

\path[draw=drawColor,line width= 0.2pt,line join=round] ( 25.56, 20.64) --
	( 41.62, 22.86) --
	( 51.02, 24.34) --
	( 57.68, 27.68) --
	( 62.85, 34.71) --
	( 67.08, 39.90) --
	( 73.74, 51.38) --
	( 78.91, 76.57) --
	( 83.14, 87.68) --
	( 86.71, 91.01) --
	( 88.31, 92.12) --
	( 89.80, 92.86) --
	( 92.53, 94.35) --
	( 94.97, 94.35) --
	(100.14, 94.35) --
	(104.36, 94.35);
\definecolor{drawColor}{RGB}{231,41,138}

\path[draw=drawColor,line width= 0.2pt,line join=round] ( 25.56, 20.64) --
	( 41.62, 21.01) --
	( 51.02, 21.01) --
	( 57.68, 22.12) --
	( 62.85, 23.23) --
	( 67.08, 25.45) --
	( 73.74, 25.45) --
	( 78.91, 27.68) --
	( 83.14, 33.60) --
	( 86.71, 34.71) --
	( 88.31, 38.42) --
	( 89.80, 38.42) --
	( 92.53, 44.71) --
	( 94.97, 50.64) --
	(100.14, 63.60) --
	(104.36, 69.53);
\definecolor{fillColor}{RGB}{31,119,180}

\path[fill=fillColor] ( 40.05, 21.01) --
	( 41.62, 22.58) --
	( 43.19, 21.01) --
	( 41.62, 19.44) --
	cycle;
\definecolor{drawColor}{RGB}{246,141,98}

\path[draw=drawColor,line width= 0.5pt,line join=round,line cap=round] ( 40.05, 19.81) -- ( 43.19, 22.95);

\path[draw=drawColor,line width= 0.5pt,line join=round,line cap=round] ( 40.05, 22.95) -- ( 43.19, 19.81);
\definecolor{fillColor}{RGB}{27,158,119}

\path[fill=fillColor] ( 41.62, 21.75) circle (  1.57);
\definecolor{drawColor}{RGB}{117,112,179}

\path[draw=drawColor,line width= 0.5pt,line join=round,line cap=round] ( 41.62, 21.38) circle (  1.57);
\definecolor{fillColor}{RGB}{8,95,2}

\path[fill=fillColor] ( 41.62, 25.30) --
	( 43.74, 21.64) --
	( 39.51, 21.64) --
	cycle;
\definecolor{drawColor}{RGB}{231,41,138}

\path[draw=drawColor,line width= 0.5pt,line join=round,line cap=round] ( 41.62, 23.45) --
	( 43.74, 19.79) --
	( 39.51, 19.79) --
	cycle;
\definecolor{fillColor}{RGB}{31,119,180}

\path[fill=fillColor] ( 93.40, 22.49) --
	( 94.97, 24.06) --
	( 96.54, 22.49) --
	( 94.97, 20.92) --
	cycle;
\definecolor{drawColor}{RGB}{246,141,98}

\path[draw=drawColor,line width= 0.5pt,line join=round,line cap=round] ( 93.40, 35.37) -- ( 96.54, 38.51);

\path[draw=drawColor,line width= 0.5pt,line join=round,line cap=round] ( 93.40, 38.51) -- ( 96.54, 35.37);
\definecolor{fillColor}{RGB}{27,158,119}

\path[fill=fillColor] ( 94.97, 94.35) circle (  1.57);
\definecolor{drawColor}{RGB}{117,112,179}

\path[draw=drawColor,line width= 0.5pt,line join=round,line cap=round] ( 94.97, 53.23) circle (  1.57);
\definecolor{fillColor}{RGB}{8,95,2}

\path[fill=fillColor] ( 94.97, 96.79) --
	( 97.08, 93.13) --
	( 92.86, 93.13) --
	cycle;
\definecolor{drawColor}{RGB}{231,41,138}

\path[draw=drawColor,line width= 0.5pt,line join=round,line cap=round] ( 94.97, 53.08) --
	( 97.08, 49.42) --
	( 92.86, 49.42) --
	cycle;
\definecolor{fillColor}{RGB}{31,119,180}

\path[fill=fillColor] ( 98.57, 27.31) --
	(100.14, 28.88) --
	(101.71, 27.31) --
	(100.14, 25.74) --
	cycle;
\definecolor{drawColor}{RGB}{246,141,98}

\path[draw=drawColor,line width= 0.5pt,line join=round,line cap=round] ( 98.57, 41.66) -- (101.71, 44.80);

\path[draw=drawColor,line width= 0.5pt,line join=round,line cap=round] ( 98.57, 44.80) -- (101.71, 41.66);
\definecolor{fillColor}{RGB}{27,158,119}

\path[fill=fillColor] (100.14, 94.35) circle (  1.57);
\definecolor{drawColor}{RGB}{117,112,179}

\path[draw=drawColor,line width= 0.5pt,line join=round,line cap=round] (100.14, 62.12) circle (  1.57);
\definecolor{fillColor}{RGB}{8,95,2}

\path[fill=fillColor] (100.14, 96.79) --
	(102.25, 93.13) --
	( 98.03, 93.13) --
	cycle;
\definecolor{drawColor}{RGB}{231,41,138}

\path[draw=drawColor,line width= 0.5pt,line join=round,line cap=round] (100.14, 66.04) --
	(102.25, 62.38) --
	( 98.03, 62.38) --
	cycle;
\definecolor{fillColor}{RGB}{31,119,180}

\path[fill=fillColor] ( 49.45, 21.01) --
	( 51.02, 22.58) --
	( 52.59, 21.01) --
	( 51.02, 19.44) --
	cycle;
\definecolor{drawColor}{RGB}{246,141,98}

\path[draw=drawColor,line width= 0.5pt,line join=round,line cap=round] ( 49.45, 20.18) -- ( 52.59, 23.32);

\path[draw=drawColor,line width= 0.5pt,line join=round,line cap=round] ( 49.45, 23.32) -- ( 52.59, 20.18);
\definecolor{fillColor}{RGB}{27,158,119}

\path[fill=fillColor] ( 51.02, 22.86) circle (  1.57);
\definecolor{drawColor}{RGB}{117,112,179}

\path[draw=drawColor,line width= 0.5pt,line join=round,line cap=round] ( 51.02, 21.38) circle (  1.57);
\definecolor{fillColor}{RGB}{8,95,2}

\path[fill=fillColor] ( 51.02, 26.78) --
	( 53.13, 23.12) --
	( 48.90, 23.12) --
	cycle;
\definecolor{drawColor}{RGB}{231,41,138}

\path[draw=drawColor,line width= 0.5pt,line join=round,line cap=round] ( 51.02, 23.45) --
	( 53.13, 19.79) --
	( 48.90, 19.79) --
	cycle;
\definecolor{fillColor}{RGB}{31,119,180}

\path[fill=fillColor] (102.80, 28.05) --
	(104.36, 29.62) --
	(105.93, 28.05) --
	(104.36, 26.48) --
	cycle;
\definecolor{drawColor}{RGB}{246,141,98}

\path[draw=drawColor,line width= 0.5pt,line join=round,line cap=round] (102.80, 50.92) -- (105.93, 54.06);

\path[draw=drawColor,line width= 0.5pt,line join=round,line cap=round] (102.80, 54.06) -- (105.93, 50.92);
\definecolor{fillColor}{RGB}{27,158,119}

\path[fill=fillColor] (104.36, 94.35) circle (  1.57);
\definecolor{drawColor}{RGB}{117,112,179}

\path[draw=drawColor,line width= 0.5pt,line join=round,line cap=round] (104.36, 73.23) circle (  1.57);
\definecolor{fillColor}{RGB}{8,95,2}

\path[fill=fillColor] (104.36, 96.79) --
	(106.48, 93.13) --
	(102.25, 93.13) --
	cycle;
\definecolor{drawColor}{RGB}{231,41,138}

\path[draw=drawColor,line width= 0.5pt,line join=round,line cap=round] (104.36, 71.97) --
	(106.48, 68.31) --
	(102.25, 68.31) --
	cycle;
\definecolor{fillColor}{RGB}{31,119,180}

\path[fill=fillColor] ( 56.11, 22.12) --
	( 57.68, 23.69) --
	( 59.25, 22.12) --
	( 57.68, 20.55) --
	cycle;
\definecolor{drawColor}{RGB}{246,141,98}

\path[draw=drawColor,line width= 0.5pt,line join=round,line cap=round] ( 56.11, 22.40) -- ( 59.25, 25.54);

\path[draw=drawColor,line width= 0.5pt,line join=round,line cap=round] ( 56.11, 25.54) -- ( 59.25, 22.40);
\definecolor{fillColor}{RGB}{27,158,119}

\path[fill=fillColor] ( 57.68, 26.57) circle (  1.57);
\definecolor{drawColor}{RGB}{117,112,179}

\path[draw=drawColor,line width= 0.5pt,line join=round,line cap=round] ( 57.68, 21.38) circle (  1.57);
\definecolor{fillColor}{RGB}{8,95,2}

\path[fill=fillColor] ( 57.68, 30.12) --
	( 59.80, 26.46) --
	( 55.57, 26.46) --
	cycle;
\definecolor{drawColor}{RGB}{231,41,138}

\path[draw=drawColor,line width= 0.5pt,line join=round,line cap=round] ( 57.68, 24.56) --
	( 59.80, 20.90) --
	( 55.57, 20.90) --
	cycle;
\definecolor{fillColor}{RGB}{31,119,180}

\path[fill=fillColor] ( 61.28, 21.38) --
	( 62.85, 22.95) --
	( 64.42, 21.38) --
	( 62.85, 19.81) --
	cycle;
\definecolor{drawColor}{RGB}{246,141,98}

\path[draw=drawColor,line width= 0.5pt,line join=round,line cap=round] ( 61.28, 26.11) -- ( 64.42, 29.25);

\path[draw=drawColor,line width= 0.5pt,line join=round,line cap=round] ( 61.28, 29.25) -- ( 64.42, 26.11);
\definecolor{fillColor}{RGB}{27,158,119}

\path[fill=fillColor] ( 62.85, 35.08) circle (  1.57);
\definecolor{drawColor}{RGB}{117,112,179}

\path[draw=drawColor,line width= 0.5pt,line join=round,line cap=round] ( 62.85, 24.34) circle (  1.57);
\definecolor{fillColor}{RGB}{8,95,2}

\path[fill=fillColor] ( 62.85, 37.15) --
	( 64.97, 33.49) --
	( 60.74, 33.49) --
	cycle;
\definecolor{drawColor}{RGB}{231,41,138}

\path[draw=drawColor,line width= 0.5pt,line join=round,line cap=round] ( 62.85, 25.67) --
	( 64.97, 22.01) --
	( 60.74, 22.01) --
	cycle;
\definecolor{fillColor}{RGB}{31,119,180}

\path[fill=fillColor] ( 65.51, 21.38) --
	( 67.08, 22.95) --
	( 68.65, 21.38) --
	( 67.08, 19.81) --
	cycle;
\definecolor{drawColor}{RGB}{246,141,98}

\path[draw=drawColor,line width= 0.5pt,line join=round,line cap=round] ( 65.51, 23.89) -- ( 68.65, 27.02);

\path[draw=drawColor,line width= 0.5pt,line join=round,line cap=round] ( 65.51, 27.02) -- ( 68.65, 23.89);
\definecolor{fillColor}{RGB}{27,158,119}

\path[fill=fillColor] ( 67.08, 38.05) circle (  1.57);
\definecolor{drawColor}{RGB}{117,112,179}

\path[draw=drawColor,line width= 0.5pt,line join=round,line cap=round] ( 67.08, 25.45) circle (  1.57);
\definecolor{fillColor}{RGB}{8,95,2}

\path[fill=fillColor] ( 67.08, 42.34) --
	( 69.19, 38.68) --
	( 64.96, 38.68) --
	cycle;
\definecolor{drawColor}{RGB}{231,41,138}

\path[draw=drawColor,line width= 0.5pt,line join=round,line cap=round] ( 67.08, 27.89) --
	( 69.19, 24.23) --
	( 64.96, 24.23) --
	cycle;
\definecolor{fillColor}{RGB}{31,119,180}

\path[fill=fillColor] ( 72.17, 22.12) --
	( 73.74, 23.69) --
	( 75.31, 22.12) --
	( 73.74, 20.55) --
	cycle;
\definecolor{drawColor}{RGB}{246,141,98}

\path[draw=drawColor,line width= 0.5pt,line join=round,line cap=round] ( 72.17, 24.26) -- ( 75.31, 27.39);

\path[draw=drawColor,line width= 0.5pt,line join=round,line cap=round] ( 72.17, 27.39) -- ( 75.31, 24.26);
\definecolor{fillColor}{RGB}{27,158,119}

\path[fill=fillColor] ( 73.74, 51.01) circle (  1.57);
\definecolor{drawColor}{RGB}{117,112,179}

\path[draw=drawColor,line width= 0.5pt,line join=round,line cap=round] ( 73.74, 27.68) circle (  1.57);
\definecolor{fillColor}{RGB}{8,95,2}

\path[fill=fillColor] ( 73.74, 53.82) --
	( 75.85, 50.16) --
	( 71.63, 50.16) --
	cycle;
\definecolor{drawColor}{RGB}{231,41,138}

\path[draw=drawColor,line width= 0.5pt,line join=round,line cap=round] ( 73.74, 27.89) --
	( 75.85, 24.23) --
	( 71.63, 24.23) --
	cycle;
\definecolor{fillColor}{RGB}{31,119,180}

\path[fill=fillColor] ( 23.99, 22.12) --
	( 25.56, 23.69) --
	( 27.13, 22.12) --
	( 25.56, 20.55) --
	cycle;
\definecolor{drawColor}{RGB}{246,141,98}

\path[draw=drawColor,line width= 0.5pt,line join=round,line cap=round] ( 23.99, 19.07) -- ( 27.13, 22.21);

\path[draw=drawColor,line width= 0.5pt,line join=round,line cap=round] ( 23.99, 22.21) -- ( 27.13, 19.07);
\definecolor{fillColor}{RGB}{27,158,119}

\path[fill=fillColor] ( 25.56, 20.27) circle (  1.57);
\definecolor{drawColor}{RGB}{117,112,179}

\path[draw=drawColor,line width= 0.5pt,line join=round,line cap=round] ( 25.56, 20.64) circle (  1.57);
\definecolor{fillColor}{RGB}{8,95,2}

\path[fill=fillColor] ( 25.56, 23.08) --
	( 27.68, 19.42) --
	( 23.45, 19.42) --
	cycle;
\definecolor{drawColor}{RGB}{231,41,138}

\path[draw=drawColor,line width= 0.5pt,line join=round,line cap=round] ( 25.56, 23.08) --
	( 27.68, 19.42) --
	( 23.45, 19.42) --
	cycle;
\definecolor{fillColor}{RGB}{31,119,180}

\path[fill=fillColor] ( 77.34, 23.60) --
	( 78.91, 25.17) --
	( 80.48, 23.60) --
	( 78.91, 22.03) --
	cycle;
\definecolor{drawColor}{RGB}{246,141,98}

\path[draw=drawColor,line width= 0.5pt,line join=round,line cap=round] ( 77.34, 23.14) -- ( 80.48, 26.28);

\path[draw=drawColor,line width= 0.5pt,line join=round,line cap=round] ( 77.34, 26.28) -- ( 80.48, 23.14);
\definecolor{fillColor}{RGB}{27,158,119}

\path[fill=fillColor] ( 78.91, 71.38) circle (  1.57);
\definecolor{drawColor}{RGB}{117,112,179}

\path[draw=drawColor,line width= 0.5pt,line join=round,line cap=round] ( 78.91, 29.16) circle (  1.57);
\definecolor{fillColor}{RGB}{8,95,2}

\path[fill=fillColor] ( 78.91, 79.01) --
	( 81.02, 75.35) --
	( 76.80, 75.35) --
	cycle;
\definecolor{drawColor}{RGB}{231,41,138}

\path[draw=drawColor,line width= 0.5pt,line join=round,line cap=round] ( 78.91, 30.12) --
	( 81.02, 26.46) --
	( 76.80, 26.46) --
	cycle;
\definecolor{fillColor}{RGB}{31,119,180}

\path[fill=fillColor] ( 81.57, 22.12) --
	( 83.14, 23.69) --
	( 84.70, 22.12) --
	( 83.14, 20.55) --
	cycle;
\definecolor{drawColor}{RGB}{246,141,98}

\path[draw=drawColor,line width= 0.5pt,line join=round,line cap=round] ( 81.57, 29.81) -- ( 84.70, 32.95);

\path[draw=drawColor,line width= 0.5pt,line join=round,line cap=round] ( 81.57, 32.95) -- ( 84.70, 29.81);
\definecolor{fillColor}{RGB}{27,158,119}

\path[fill=fillColor] ( 83.14, 85.83) circle (  1.57);
\definecolor{drawColor}{RGB}{117,112,179}

\path[draw=drawColor,line width= 0.5pt,line join=round,line cap=round] ( 83.14, 33.23) circle (  1.57);
\definecolor{fillColor}{RGB}{8,95,2}

\path[fill=fillColor] ( 83.14, 90.12) --
	( 85.25, 86.46) --
	( 81.02, 86.46) --
	cycle;
\definecolor{drawColor}{RGB}{231,41,138}

\path[draw=drawColor,line width= 0.5pt,line join=round,line cap=round] ( 83.14, 36.04) --
	( 85.25, 32.38) --
	( 81.02, 32.38) --
	cycle;
\definecolor{fillColor}{RGB}{31,119,180}

\path[fill=fillColor] ( 85.14, 23.97) --
	( 86.71, 25.54) --
	( 88.28, 23.97) --
	( 86.71, 22.40) --
	cycle;
\definecolor{drawColor}{RGB}{246,141,98}

\path[draw=drawColor,line width= 0.5pt,line join=round,line cap=round] ( 85.14, 27.59) -- ( 88.28, 30.73);

\path[draw=drawColor,line width= 0.5pt,line join=round,line cap=round] ( 85.14, 30.73) -- ( 88.28, 27.59);
\definecolor{fillColor}{RGB}{27,158,119}

\path[fill=fillColor] ( 86.71, 90.27) circle (  1.57);
\definecolor{drawColor}{RGB}{117,112,179}

\path[draw=drawColor,line width= 0.5pt,line join=round,line cap=round] ( 86.71, 35.45) circle (  1.57);
\definecolor{fillColor}{RGB}{8,95,2}

\path[fill=fillColor] ( 86.71, 93.45) --
	( 88.82, 89.79) --
	( 84.59, 89.79) --
	cycle;
\definecolor{drawColor}{RGB}{231,41,138}

\path[draw=drawColor,line width= 0.5pt,line join=round,line cap=round] ( 86.71, 37.15) --
	( 88.82, 33.49) --
	( 84.59, 33.49) --
	cycle;
\definecolor{fillColor}{RGB}{31,119,180}

\path[fill=fillColor] ( 86.74, 23.60) --
	( 88.31, 25.17) --
	( 89.87, 23.60) --
	( 88.31, 22.03) --
	cycle;
\definecolor{drawColor}{RGB}{246,141,98}

\path[draw=drawColor,line width= 0.5pt,line join=round,line cap=round] ( 86.74, 31.66) -- ( 89.87, 34.80);

\path[draw=drawColor,line width= 0.5pt,line join=round,line cap=round] ( 86.74, 34.80) -- ( 89.87, 31.66);
\definecolor{fillColor}{RGB}{27,158,119}

\path[fill=fillColor] ( 88.31, 91.75) circle (  1.57);
\definecolor{drawColor}{RGB}{117,112,179}

\path[draw=drawColor,line width= 0.5pt,line join=round,line cap=round] ( 88.31, 38.79) circle (  1.57);
\definecolor{fillColor}{RGB}{8,95,2}

\path[fill=fillColor] ( 88.31, 94.56) --
	( 90.42, 90.90) --
	( 86.19, 90.90) --
	cycle;
\definecolor{drawColor}{RGB}{231,41,138}

\path[draw=drawColor,line width= 0.5pt,line join=round,line cap=round] ( 88.31, 40.86) --
	( 90.42, 37.20) --
	( 86.19, 37.20) --
	cycle;
\definecolor{fillColor}{RGB}{31,119,180}

\path[fill=fillColor] ( 88.23, 23.60) --
	( 89.80, 25.17) --
	( 91.37, 23.60) --
	( 89.80, 22.03) --
	cycle;
\definecolor{drawColor}{RGB}{246,141,98}

\path[draw=drawColor,line width= 0.5pt,line join=round,line cap=round] ( 88.23, 32.77) -- ( 91.37, 35.91);

\path[draw=drawColor,line width= 0.5pt,line join=round,line cap=round] ( 88.23, 35.91) -- ( 91.37, 32.77);
\definecolor{fillColor}{RGB}{27,158,119}

\path[fill=fillColor] ( 89.80, 92.86) circle (  1.57);
\definecolor{drawColor}{RGB}{117,112,179}

\path[draw=drawColor,line width= 0.5pt,line join=round,line cap=round] ( 89.80, 39.53) circle (  1.57);
\definecolor{fillColor}{RGB}{8,95,2}

\path[fill=fillColor] ( 89.80, 95.30) --
	( 91.91, 91.64) --
	( 87.69, 91.64) --
	cycle;
\definecolor{drawColor}{RGB}{231,41,138}

\path[draw=drawColor,line width= 0.5pt,line join=round,line cap=round] ( 89.80, 40.86) --
	( 91.91, 37.20) --
	( 87.69, 37.20) --
	cycle;
\definecolor{fillColor}{RGB}{31,119,180}

\path[fill=fillColor] ( 90.96, 25.08) --
	( 92.53, 26.65) --
	( 94.10, 25.08) --
	( 92.53, 23.51) --
	cycle;
\definecolor{drawColor}{RGB}{246,141,98}

\path[draw=drawColor,line width= 0.5pt,line join=round,line cap=round] ( 90.96, 34.63) -- ( 94.10, 37.76);

\path[draw=drawColor,line width= 0.5pt,line join=round,line cap=round] ( 90.96, 37.76) -- ( 94.10, 34.63);
\definecolor{fillColor}{RGB}{27,158,119}

\path[fill=fillColor] ( 92.53, 94.35) circle (  1.57);
\definecolor{drawColor}{RGB}{117,112,179}

\path[draw=drawColor,line width= 0.5pt,line join=round,line cap=round] ( 92.53, 46.57) circle (  1.57);
\definecolor{fillColor}{RGB}{8,95,2}

\path[fill=fillColor] ( 92.53, 96.79) --
	( 94.64, 93.13) --
	( 90.42, 93.13) --
	cycle;
\definecolor{drawColor}{RGB}{231,41,138}

\path[draw=drawColor,line width= 0.5pt,line join=round,line cap=round] ( 92.53, 47.15) --
	( 94.64, 43.49) --
	( 90.42, 43.49) --
	cycle;
\definecolor{drawColor}{gray}{0.20}

\path[draw=drawColor,line width= 0.5pt,line join=round,line cap=round] ( 21.62, 16.56) rectangle (108.31, 98.05);
\end{scope}
\begin{scope}
\path[clip] (  0.00,  0.00) rectangle (216.81,108.41);
\definecolor{drawColor}{gray}{0.30}

\node[text=drawColor,anchor=base east,inner sep=0pt, outer sep=0pt, scale=  0.50] at ( 17.57, 18.55) {0.00};

\node[text=drawColor,anchor=base east,inner sep=0pt, outer sep=0pt, scale=  0.50] at ( 17.57, 37.07) {0.25};

\node[text=drawColor,anchor=base east,inner sep=0pt, outer sep=0pt, scale=  0.50] at ( 17.57, 55.59) {0.50};

\node[text=drawColor,anchor=base east,inner sep=0pt, outer sep=0pt, scale=  0.50] at ( 17.57, 74.10) {0.75};

\node[text=drawColor,anchor=base east,inner sep=0pt, outer sep=0pt, scale=  0.50] at ( 17.57, 92.62) {1.00};
\end{scope}
\begin{scope}
\path[clip] (  0.00,  0.00) rectangle (216.81,108.41);
\definecolor{drawColor}{gray}{0.20}

\path[draw=drawColor,line width= 0.5pt,line join=round] ( 19.37, 20.27) --
	( 21.62, 20.27);

\path[draw=drawColor,line width= 0.5pt,line join=round] ( 19.37, 38.79) --
	( 21.62, 38.79);

\path[draw=drawColor,line width= 0.5pt,line join=round] ( 19.37, 57.31) --
	( 21.62, 57.31);

\path[draw=drawColor,line width= 0.5pt,line join=round] ( 19.37, 75.83) --
	( 21.62, 75.83);

\path[draw=drawColor,line width= 0.5pt,line join=round] ( 19.37, 94.35) --
	( 21.62, 94.35);
\end{scope}
\begin{scope}
\path[clip] (  0.00,  0.00) rectangle (216.81,108.41);
\definecolor{drawColor}{gray}{0.20}

\path[draw=drawColor,line width= 0.5pt,line join=round] ( 41.62, 14.31) --
	( 41.62, 16.56);

\path[draw=drawColor,line width= 0.5pt,line join=round] ( 67.08, 14.31) --
	( 67.08, 16.56);

\path[draw=drawColor,line width= 0.5pt,line join=round] ( 94.97, 14.31) --
	( 94.97, 16.56);
\end{scope}
\begin{scope}
\path[clip] (  0.00,  0.00) rectangle (216.81,108.41);
\definecolor{drawColor}{gray}{0.30}

\node[text=drawColor,anchor=base,inner sep=0pt, outer sep=0pt, scale=  0.50] at ( 41.62,  9.07) {100};

\node[text=drawColor,anchor=base,inner sep=0pt, outer sep=0pt, scale=  0.50] at ( 67.08,  9.07) {300};

\node[text=drawColor,anchor=base,inner sep=0pt, outer sep=0pt, scale=  0.50] at ( 94.97,  9.07) {1000};
\end{scope}
\begin{scope}
\path[clip] (  0.00,  0.00) rectangle (216.81,108.41);
\definecolor{drawColor}{RGB}{0,0,0}

\node[text=drawColor,anchor=base,inner sep=0pt, outer sep=0pt, scale=  0.60] at ( 64.96,  1.42) {Sample Size $(n)$};
\end{scope}
\begin{scope}
\path[clip] (  0.00,  0.00) rectangle (216.81,108.41);
\definecolor{drawColor}{RGB}{0,0,0}

\node[text=drawColor,rotate= 90.00,anchor=base,inner sep=0pt, outer sep=0pt, scale=  0.60] at (  4.23, 57.31) {Power};
\end{scope}
\begin{scope}
\path[clip] (108.41,  0.00) rectangle (216.81, 98.55);
\definecolor{drawColor}{RGB}{255,255,255}
\definecolor{fillColor}{RGB}{255,255,255}

\path[draw=drawColor,line width= 0.5pt,line join=round,line cap=round,fill=fillColor] (108.41,  0.00) rectangle (216.81, 98.55);
\end{scope}
\begin{scope}
\path[clip] (134.35, 16.56) rectangle (216.71, 98.05);
\definecolor{fillColor}{RGB}{255,255,255}

\path[fill=fillColor] (134.35, 16.56) rectangle (216.71, 98.05);
\definecolor{drawColor}{RGB}{31,119,180}

\path[draw=drawColor,line width= 0.2pt,line join=round] (138.10, 66.05) --
	(146.45, 28.24) --
	(157.49, 29.43) --
	(165.84, 30.54) --
	(174.19, 33.08) --
	(185.23, 36.94) --
	(193.58, 40.40) --
	(201.93, 44.34) --
	(212.97, 50.78);
\definecolor{drawColor}{RGB}{246,141,98}

\path[draw=drawColor,line width= 0.2pt,line join=round] (138.10, 47.87) --
	(146.45, 20.27) --
	(157.49, 25.56) --
	(165.84, 29.80) --
	(174.19, 40.08) --
	(185.23, 45.90) --
	(193.58, 50.19) --
	(201.93, 54.10) --
	(212.97, 63.25);
\definecolor{drawColor}{RGB}{27,158,119}

\path[draw=drawColor,line width= 0.2pt,line join=round] (138.10, 41.43) --
	(146.45, 35.79) --
	(157.49, 36.63) --
	(165.84, 39.50) --
	(174.19, 44.43) --
	(185.23, 56.39) --
	(193.58, 71.18) --
	(201.93, 80.07) --
	(212.97, 93.64);
\definecolor{drawColor}{RGB}{117,112,179}

\path[draw=drawColor,line width= 0.2pt,line join=round] (138.10, 44.95) --
	(146.45, 50.86) --
	(157.49, 56.64) --
	(165.84, 61.42) --
	(174.19, 65.95) --
	(185.23, 72.00) --
	(193.58, 77.30) --
	(201.93, 82.54) --
	(212.97, 90.33);
\definecolor{drawColor}{RGB}{8,95,2}

\path[draw=drawColor,line width= 0.2pt,line join=round] (138.10, 37.87) --
	(146.45, 34.91) --
	(157.49, 36.06) --
	(165.84, 39.17) --
	(174.19, 44.28) --
	(185.23, 60.77) --
	(193.58, 67.25) --
	(201.93, 78.72) --
	(212.97, 94.35);
\definecolor{drawColor}{RGB}{231,41,138}

\path[draw=drawColor,line width= 0.2pt,line join=round] (138.10, 31.49) --
	(146.45, 32.37) --
	(157.49, 35.20) --
	(165.84, 39.51) --
	(174.19, 45.39) --
	(185.23, 56.00) --
	(193.58, 65.10) --
	(201.93, 74.51) --
	(212.97, 87.78);
\definecolor{fillColor}{RGB}{31,119,180}

\path[fill=fillColor] (136.53, 66.05) --
	(138.10, 67.62) --
	(139.67, 66.05) --
	(138.10, 64.48) --
	cycle;
\definecolor{drawColor}{RGB}{246,141,98}

\path[draw=drawColor,line width= 0.5pt,line join=round,line cap=round] (136.53, 46.30) -- (139.67, 49.44);

\path[draw=drawColor,line width= 0.5pt,line join=round,line cap=round] (136.53, 49.44) -- (139.67, 46.30);
\definecolor{fillColor}{RGB}{27,158,119}

\path[fill=fillColor] (138.10, 41.43) circle (  1.57);
\definecolor{drawColor}{RGB}{117,112,179}

\path[draw=drawColor,line width= 0.5pt,line join=round,line cap=round] (138.10, 44.95) circle (  1.57);
\definecolor{fillColor}{RGB}{8,95,2}

\path[fill=fillColor] (138.10, 40.31) --
	(140.21, 36.65) --
	(135.99, 36.65) --
	cycle;
\definecolor{drawColor}{RGB}{231,41,138}

\path[draw=drawColor,line width= 0.5pt,line join=round,line cap=round] (138.10, 33.93) --
	(140.21, 30.27) --
	(135.99, 30.27) --
	cycle;
\definecolor{fillColor}{RGB}{31,119,180}

\path[fill=fillColor] (144.88, 28.24) --
	(146.45, 29.81) --
	(148.02, 28.24) --
	(146.45, 26.68) --
	cycle;
\definecolor{drawColor}{RGB}{246,141,98}

\path[draw=drawColor,line width= 0.5pt,line join=round,line cap=round] (144.88, 18.70) -- (148.02, 21.84);

\path[draw=drawColor,line width= 0.5pt,line join=round,line cap=round] (144.88, 21.84) -- (148.02, 18.70);
\definecolor{fillColor}{RGB}{27,158,119}

\path[fill=fillColor] (146.45, 35.79) circle (  1.57);
\definecolor{drawColor}{RGB}{117,112,179}

\path[draw=drawColor,line width= 0.5pt,line join=round,line cap=round] (146.45, 50.86) circle (  1.57);
\definecolor{fillColor}{RGB}{8,95,2}

\path[fill=fillColor] (146.45, 37.35) --
	(148.56, 33.69) --
	(144.34, 33.69) --
	cycle;
\definecolor{drawColor}{RGB}{231,41,138}

\path[draw=drawColor,line width= 0.5pt,line join=round,line cap=round] (146.45, 34.81) --
	(148.56, 31.15) --
	(144.34, 31.15) --
	cycle;
\definecolor{fillColor}{RGB}{31,119,180}

\path[fill=fillColor] (155.92, 29.43) --
	(157.49, 31.00) --
	(159.06, 29.43) --
	(157.49, 27.86) --
	cycle;
\definecolor{drawColor}{RGB}{246,141,98}

\path[draw=drawColor,line width= 0.5pt,line join=round,line cap=round] (155.92, 23.99) -- (159.06, 27.13);

\path[draw=drawColor,line width= 0.5pt,line join=round,line cap=round] (155.92, 27.13) -- (159.06, 23.99);
\definecolor{fillColor}{RGB}{27,158,119}

\path[fill=fillColor] (157.49, 36.63) circle (  1.57);
\definecolor{drawColor}{RGB}{117,112,179}

\path[draw=drawColor,line width= 0.5pt,line join=round,line cap=round] (157.49, 56.64) circle (  1.57);
\definecolor{fillColor}{RGB}{8,95,2}

\path[fill=fillColor] (157.49, 38.50) --
	(159.60, 34.84) --
	(155.37, 34.84) --
	cycle;
\definecolor{drawColor}{RGB}{231,41,138}

\path[draw=drawColor,line width= 0.5pt,line join=round,line cap=round] (157.49, 37.64) --
	(159.60, 33.98) --
	(155.37, 33.98) --
	cycle;
\definecolor{fillColor}{RGB}{31,119,180}

\path[fill=fillColor] (164.27, 30.54) --
	(165.84, 32.11) --
	(167.41, 30.54) --
	(165.84, 28.97) --
	cycle;
\definecolor{drawColor}{RGB}{246,141,98}

\path[draw=drawColor,line width= 0.5pt,line join=round,line cap=round] (164.27, 28.23) -- (167.41, 31.37);

\path[draw=drawColor,line width= 0.5pt,line join=round,line cap=round] (164.27, 31.37) -- (167.41, 28.23);
\definecolor{fillColor}{RGB}{27,158,119}

\path[fill=fillColor] (165.84, 39.50) circle (  1.57);
\definecolor{drawColor}{RGB}{117,112,179}

\path[draw=drawColor,line width= 0.5pt,line join=round,line cap=round] (165.84, 61.42) circle (  1.57);
\definecolor{fillColor}{RGB}{8,95,2}

\path[fill=fillColor] (165.84, 41.61) --
	(167.95, 37.95) --
	(163.72, 37.95) --
	cycle;
\definecolor{drawColor}{RGB}{231,41,138}

\path[draw=drawColor,line width= 0.5pt,line join=round,line cap=round] (165.84, 41.95) --
	(167.95, 38.29) --
	(163.72, 38.29) --
	cycle;
\definecolor{fillColor}{RGB}{31,119,180}

\path[fill=fillColor] (172.62, 33.08) --
	(174.19, 34.65) --
	(175.76, 33.08) --
	(174.19, 31.51) --
	cycle;
\definecolor{drawColor}{RGB}{246,141,98}

\path[draw=drawColor,line width= 0.5pt,line join=round,line cap=round] (172.62, 38.51) -- (175.76, 41.65);

\path[draw=drawColor,line width= 0.5pt,line join=round,line cap=round] (172.62, 41.65) -- (175.76, 38.51);
\definecolor{fillColor}{RGB}{27,158,119}

\path[fill=fillColor] (174.19, 44.43) circle (  1.57);
\definecolor{drawColor}{RGB}{117,112,179}

\path[draw=drawColor,line width= 0.5pt,line join=round,line cap=round] (174.19, 65.95) circle (  1.57);
\definecolor{fillColor}{RGB}{8,95,2}

\path[fill=fillColor] (174.19, 46.72) --
	(176.30, 43.06) --
	(172.08, 43.06) --
	cycle;
\definecolor{drawColor}{RGB}{231,41,138}

\path[draw=drawColor,line width= 0.5pt,line join=round,line cap=round] (174.19, 47.83) --
	(176.30, 44.17) --
	(172.08, 44.17) --
	cycle;
\definecolor{fillColor}{RGB}{31,119,180}

\path[fill=fillColor] (183.66, 36.94) --
	(185.23, 38.51) --
	(186.80, 36.94) --
	(185.23, 35.37) --
	cycle;
\definecolor{drawColor}{RGB}{246,141,98}

\path[draw=drawColor,line width= 0.5pt,line join=round,line cap=round] (183.66, 44.33) -- (186.80, 47.47);

\path[draw=drawColor,line width= 0.5pt,line join=round,line cap=round] (183.66, 47.47) -- (186.80, 44.33);
\definecolor{fillColor}{RGB}{27,158,119}

\path[fill=fillColor] (185.23, 56.39) circle (  1.57);
\definecolor{drawColor}{RGB}{117,112,179}

\path[draw=drawColor,line width= 0.5pt,line join=round,line cap=round] (185.23, 72.00) circle (  1.57);
\definecolor{fillColor}{RGB}{8,95,2}

\path[fill=fillColor] (185.23, 63.21) --
	(187.34, 59.55) --
	(183.11, 59.55) --
	cycle;
\definecolor{drawColor}{RGB}{231,41,138}

\path[draw=drawColor,line width= 0.5pt,line join=round,line cap=round] (185.23, 58.44) --
	(187.34, 54.78) --
	(183.11, 54.78) --
	cycle;
\definecolor{fillColor}{RGB}{31,119,180}

\path[fill=fillColor] (192.01, 40.40) --
	(193.58, 41.97) --
	(195.15, 40.40) --
	(193.58, 38.83) --
	cycle;
\definecolor{drawColor}{RGB}{246,141,98}

\path[draw=drawColor,line width= 0.5pt,line join=round,line cap=round] (192.01, 48.63) -- (195.15, 51.76);

\path[draw=drawColor,line width= 0.5pt,line join=round,line cap=round] (192.01, 51.76) -- (195.15, 48.63);
\definecolor{fillColor}{RGB}{27,158,119}

\path[fill=fillColor] (193.58, 71.18) circle (  1.57);
\definecolor{drawColor}{RGB}{117,112,179}

\path[draw=drawColor,line width= 0.5pt,line join=round,line cap=round] (193.58, 77.30) circle (  1.57);
\definecolor{fillColor}{RGB}{8,95,2}

\path[fill=fillColor] (193.58, 69.69) --
	(195.69, 66.03) --
	(191.46, 66.03) --
	cycle;
\definecolor{drawColor}{RGB}{231,41,138}

\path[draw=drawColor,line width= 0.5pt,line join=round,line cap=round] (193.58, 67.54) --
	(195.69, 63.88) --
	(191.46, 63.88) --
	cycle;
\definecolor{fillColor}{RGB}{31,119,180}

\path[fill=fillColor] (200.36, 44.34) --
	(201.93, 45.91) --
	(203.50, 44.34) --
	(201.93, 42.77) --
	cycle;
\definecolor{drawColor}{RGB}{246,141,98}

\path[draw=drawColor,line width= 0.5pt,line join=round,line cap=round] (200.36, 52.53) -- (203.50, 55.67);

\path[draw=drawColor,line width= 0.5pt,line join=round,line cap=round] (200.36, 55.67) -- (203.50, 52.53);
\definecolor{fillColor}{RGB}{27,158,119}

\path[fill=fillColor] (201.93, 80.07) circle (  1.57);
\definecolor{drawColor}{RGB}{117,112,179}

\path[draw=drawColor,line width= 0.5pt,line join=round,line cap=round] (201.93, 82.54) circle (  1.57);
\definecolor{fillColor}{RGB}{8,95,2}

\path[fill=fillColor] (201.93, 81.16) --
	(204.04, 77.50) --
	(199.81, 77.50) --
	cycle;
\definecolor{drawColor}{RGB}{231,41,138}

\path[draw=drawColor,line width= 0.5pt,line join=round,line cap=round] (201.93, 76.95) --
	(204.04, 73.29) --
	(199.81, 73.29) --
	cycle;
\definecolor{fillColor}{RGB}{31,119,180}

\path[fill=fillColor] (211.40, 50.78) --
	(212.97, 52.35) --
	(214.54, 50.78) --
	(212.97, 49.21) --
	cycle;
\definecolor{drawColor}{RGB}{246,141,98}

\path[draw=drawColor,line width= 0.5pt,line join=round,line cap=round] (211.40, 61.68) -- (214.54, 64.82);

\path[draw=drawColor,line width= 0.5pt,line join=round,line cap=round] (211.40, 64.82) -- (214.54, 61.68);
\definecolor{fillColor}{RGB}{27,158,119}

\path[fill=fillColor] (212.97, 93.64) circle (  1.57);
\definecolor{drawColor}{RGB}{117,112,179}

\path[draw=drawColor,line width= 0.5pt,line join=round,line cap=round] (212.97, 90.33) circle (  1.57);
\definecolor{fillColor}{RGB}{8,95,2}

\path[fill=fillColor] (212.97, 96.79) --
	(215.08, 93.13) --
	(210.85, 93.13) --
	cycle;
\definecolor{drawColor}{RGB}{231,41,138}

\path[draw=drawColor,line width= 0.5pt,line join=round,line cap=round] (212.97, 90.22) --
	(215.08, 86.56) --
	(210.85, 86.56) --
	cycle;
\definecolor{drawColor}{gray}{0.20}

\path[draw=drawColor,line width= 0.5pt,line join=round,line cap=round] (134.35, 16.56) rectangle (216.71, 98.05);
\end{scope}
\begin{scope}
\path[clip] (  0.00,  0.00) rectangle (216.81,108.41);
\definecolor{drawColor}{gray}{0.30}

\node[text=drawColor,anchor=base east,inner sep=0pt, outer sep=0pt, scale=  0.50] at (130.30, 21.49) {1e-03};

\node[text=drawColor,anchor=base east,inner sep=0pt, outer sep=0pt, scale=  0.50] at (130.30, 38.06) {1e-02};

\node[text=drawColor,anchor=base east,inner sep=0pt, outer sep=0pt, scale=  0.50] at (130.30, 54.62) {1e-01};

\node[text=drawColor,anchor=base east,inner sep=0pt, outer sep=0pt, scale=  0.50] at (130.30, 71.19) {1e+00};

\node[text=drawColor,anchor=base east,inner sep=0pt, outer sep=0pt, scale=  0.50] at (130.30, 87.75) {1e+01};
\end{scope}
\begin{scope}
\path[clip] (  0.00,  0.00) rectangle (216.81,108.41);
\definecolor{drawColor}{gray}{0.20}

\path[draw=drawColor,line width= 0.5pt,line join=round] (132.10, 23.21) --
	(134.35, 23.21);

\path[draw=drawColor,line width= 0.5pt,line join=round] (132.10, 39.78) --
	(134.35, 39.78);

\path[draw=drawColor,line width= 0.5pt,line join=round] (132.10, 56.34) --
	(134.35, 56.34);

\path[draw=drawColor,line width= 0.5pt,line join=round] (132.10, 72.91) --
	(134.35, 72.91);

\path[draw=drawColor,line width= 0.5pt,line join=round] (132.10, 89.47) --
	(134.35, 89.47);
\end{scope}
\begin{scope}
\path[clip] (  0.00,  0.00) rectangle (216.81,108.41);
\definecolor{drawColor}{gray}{0.20}

\path[draw=drawColor,line width= 0.5pt,line join=round] (138.10, 14.31) --
	(138.10, 16.56);

\path[draw=drawColor,line width= 0.5pt,line join=round] (165.84, 14.31) --
	(165.84, 16.56);

\path[draw=drawColor,line width= 0.5pt,line join=round] (193.58, 14.31) --
	(193.58, 16.56);
\end{scope}
\begin{scope}
\path[clip] (  0.00,  0.00) rectangle (216.81,108.41);
\definecolor{drawColor}{gray}{0.30}

\node[text=drawColor,anchor=base,inner sep=0pt, outer sep=0pt, scale=  0.50] at (138.10,  9.07) {10};

\node[text=drawColor,anchor=base,inner sep=0pt, outer sep=0pt, scale=  0.50] at (165.84,  9.07) {100};

\node[text=drawColor,anchor=base,inner sep=0pt, outer sep=0pt, scale=  0.50] at (193.58,  9.07) {1000};
\end{scope}
\begin{scope}
\path[clip] (  0.00,  0.00) rectangle (216.81,108.41);
\definecolor{drawColor}{RGB}{0,0,0}

\node[text=drawColor,anchor=base,inner sep=0pt, outer sep=0pt, scale=  0.60] at (175.53,  1.42) {Sample Size  $(n)$};
\end{scope}
\begin{scope}
\path[clip] (  0.00,  0.00) rectangle (216.81,108.41);
\definecolor{drawColor}{RGB}{0,0,0}

\node[text=drawColor,rotate= 90.00,anchor=base,inner sep=0pt, outer sep=0pt, scale=  0.60] at (112.64, 57.31) {Runtime (s)};
\end{scope}
\end{tikzpicture}